\newcommand{\mytag}[2]{
  \text{#1}
  \@bsphack
  \begingroup
    \@onelevel@sanitize\@currentlabelname
    \edef\@currentlabelname{
      \expandafter\strip@period\@currentlabelname\relax.\relax\@@@
    }
    \protected@write\@auxout{}{
      \string\newlabel{#2}{
        {#1}
        {\thepage}
        {\@currentlabelname}
        {\@currentHref}{}
      }
    }
  \endgroup
  \@esphack
}
\DeclareMathOperator*{\argmax}{arg\,max}
\newcommand{\prob}{\mathbb{P}}
\newcommand{\E}{\mathbb{E}}
\newcommand{\R}{\mathbb{R}}
\newcommand{\N}{\mathbb{N}}
\newcommand{\I}{\mathbb{I}}
\newcommand{\horizon}{\mathfrak{T}}
\newcommand{\F}{\mathcal{F}}
\newcommand{\NN}{\mathcal{N}}
\newcommand{\LL}{\mathcal{L}}
\newcommand{\TT}{\mathcal{T}}
\newcommand{\UU}{\mathcal{U}}
\newcommand{\regret}{\mathcal{R}}
\newcommand{\EE}{\mathcal{E}}
\newcommand{\GG}{\mathcal{G}}
\newcommand{\muhat}{\widehat{\mu}}
\newcommand{\sample}{\theta}
\newcommand{\Ncal}{\mathcal{N}}
\newcommand{\forevery}{\; \text{for all} \;\; }
\newcommand{\nstar}{n^\star}
\newcommand{\inprob}{\overset{\prob}{\longrightarrow}}
\newcommand{\indist}{\overset{\mathcal{D}}{\longrightarrow}}
\newcommand{\sigmahat}{\widehat{\sigma}}
\newcommand{\mustar}{\mu^\star}
\newcommand{\Exs}{\mathbb{E}}
\newcommand{\X}{X}
\newcommand{\algo}{\ensuremath{\mathcal{A}}}
\newcommand{\CI}{\text{CI}}
\newcommand{\gap}{\Delta}
\theoremstyle{plain}
\newtheorem{theorem}{Theorem}
\newtheorem{lemma}{Lemma}
\newtheorem{definition}{Definition}
\newtheorem{proposition}{Proposition}
\colorlet{shadecolor}{gray!10}
\colorlet{lightyellow}{yellow!10}
\title{\LARGE \textbf{Stable Thompson Sampling:\\
Valid Inference via Variance Inflation}}
\author{
\large Budhaditya Halder$^{\dagger}$ \quad 
Shubhayan Pan$^{\ddagger}$ \quad 
Koulik Khamaru$^{\dagger}$ \\
\vspace{0.2cm}
\normalsize $^{\dagger}$Department of Statistics, Rutgers University \\
\normalsize $^{\ddagger}$Indian Statistical Institute, Kolkata
}
\date{}
\begin{document}

\maketitle

\begin{abstract}%
  We consider the problem of statistical inference when the data is collected via a Thompson Sampling-type algorithm. While Thompson Sampling (TS) is known to be both asymptotically optimal and empirically effective, its adaptive sampling scheme poses challenges for constructing confidence intervals for model parameters. We propose and analyze a variant of TS, called Stable Thompson Sampling, in which the posterior variance is inflated by a logarithmic factor. We show that this modification leads to asymptotically normal estimates of the arm means, despite the non-i.i.d. nature of the data. Importantly, this statistical benefit comes at a modest cost: the variance inflation increases regret by only a logarithmic factor compared to standard TS. Our results reveal a principled trade-off: by paying a small price in regret, one can enable valid statistical inference for adaptive decision-making algorithms.%
\end{abstract}

\vspace{-10pt}
\section{Introduction}
\label{sec:intro}
Multi-armed bandit algorithms form a foundational framework for sequential decision-making under uncertainty, wherein a learner repeatedly selects among competing actions in order to maximize cumulative reward. A key statistical challenge in this setting is the inherent exploration–exploitation trade-off: how to balance the need to acquire information about uncertain actions (exploration) with the goal of leveraging current knowledge to accrue reward (exploitation). Owing to their conceptual simplicity and wide applicability, bandit methods have been deployed across a range of domains, including personalized recommendation~\cite{zhou2017large,bouneffouf2012contextual,bouneffouf2013contextual}, online advertising~\cite{wen2017online,vaswani2017model}, adaptive clinical trials~\cite{bastani2020online}, and financial portfolio optimization~\cite{shen2015portfolio,huo2017risk}, to name a few. For a comprehensive overview of applications of bandit algorithms, see the survey by \cite{bouneffouf2020survey}.

A variety of algorithmic strategies have been developed to address the exploration–exploitation dilemma. Among the most prominent are the Upper Confidence Bound (UCB) family~\cite{lairobbins,lai1987adaptive,auer2002finite}, which constructs high-probability upper confidence bounds on the mean rewards of each arm and selects the arm with the highest such bound; $\varepsilon$-- greedy approaches~\cite{sutton1998reinforcement}, which inject random exploration with fixed or decaying probability; and exponential–weighted algorithms~\cite{auer2002nonstochastic}, which assign probabilities to actions via multiplicative updates, among other commonly used strategies. Within this landscape, Thompson Sampling (TS), originally proposed by \cite{thompson1933likelihood}, occupies a distinct position due to its Bayesian formulation. At each decision point, TS samples a mean-reward parameter for each arm from its posterior distribution of arm means given the observed data, and selects the arm with the highest sampled mean parameter. In practice, Thompson Sampling has gained widespread adoption not only due to its Bayesian interpretability but also because of its algorithmic simplicity and favorable empirical performance~\cite{chapelle2011empirical}.

While bandit algorithms are traditionally motivated by the goal of reward maximization, many contemporary applications demand more than just optimal decision-making—they require valid statistical inference~\cite{berry2012adaptive,trella2024oralytics,trella2022designing,nahum2024optimizing,chow2005statistical}. In such settings, the adaptive nature of data collection presents a fundamental challenge, as the resulting data sequence no longer satisfies the standard i.i.d. assumptions that underpin much of classical statistical theory. In particular, in the multi-armed bandit problem, sample means—computed from data collected by adaptive algorithms—may not converge to an normal distribution asymptotically~\cite{zhang2020inference}. This lack of normality complicates the use of classical tools such as confidence intervals and hypothesis tests.

In this paper, we study how to construct confidence intervals for model parameters in multi-armed bandit problems where actions are selected using the Thompson Sampling algorithm.
\vspace{-.1cm}
\subsection{Related work}
The difficulties of conducting valid inference under adaptive data collection are not new; analogous issues have been documented in time series and econometrics, where autoregressive dependencies can invalidate classical asymptotic approximations~\cite{dickey1979distribution,white1958limiting,white1959limiting,lai1982least}. In the context of bandits and sequential experimentation, these challenges manifest in subtle ways. Standard estimators such as sample means may fail to converge in distribution to their classical Gaussian limits when data is collected via adaptive algorithms~\cite{zhang2020inference,deshpande2018accurate,deshpande2023online,khamaru2021near,lin2023semi,lin2024statistical,ying2024adaptive}. This phenomenon underscores the need for new inferential frameworks that explicitly account for adaptive sampling.

Two complementary lines of work have emerged. The first is \textbf{non-asymptotic} in nature, leveraging concentration inequalities for self-normalized Martingales~\cite{abbasi2011improved, howard2020timeuniformchernoffboundsnonnegative, shin2019bias,waudby2023anytime}. Building on the foundational analyses of \cite{de2004self,pena2008self} these methods yield confidence intervals that hold uniformly over time, albeit often at the expense of conservativeness. The second line is \textbf{asymptotic}, aiming to recover classical inference tools by exploiting the Martingale structure inherent in adaptively collected data. This includes the use of Martingale central limit theorems~\cite{hall2014martingale} and debiasing techniques~\cite{zhang2014confidence,zhang2020inference,hadad2021confidence,bibaut2021post,zhan2021off,syrgkanis2023post}, which produce tighter confidence intervals in large-sample regimes. The line of work that is closest to our work is~\cite{kalvit2021closer,fan2022typical,khamaru2024inference,han2024ucb}. Most relevant to our work are recent developments in~\cite{kalvit2021closer,khamaru2024inference,han2024ucb}, which show that for Upper Confidence Bound (UCB) algorithms, certain stability properties—grounded in a deterministic characterization of the number of arm pulls—can restore asymptotic normality of the sample means and enable valid classical inference under adaptive sampling. In this work, we extend this perspective to a modified variant of Thompson Sampling, showing that a similar deterministic analysis yields rigorous asymptotic inferential guarantees—albeit via proof techniques that differ substantially from those used in the UCB setting.
\vspace{-.4cm}
\subsection{Contributions}
Our goal is to enable valid statistical inference when data are collected using Thompson Sampling. We begin by analyzing the behavior of the classical algorithm in a two-armed Gaussian bandit problem. As illustrated in Figure~\ref{fig:ts-sts-comparison}, the sample means under Thompson Sampling are not asymptotically normal, and the resulting confidence intervals systematically under-cover the true means.
\begin{figure}[htbp]
    \centering
    \hfill
    \begin{minipage}[t]{0.295\textwidth}
        \centering
        \includegraphics[trim=0 30 0 0, clip,width=\linewidth]{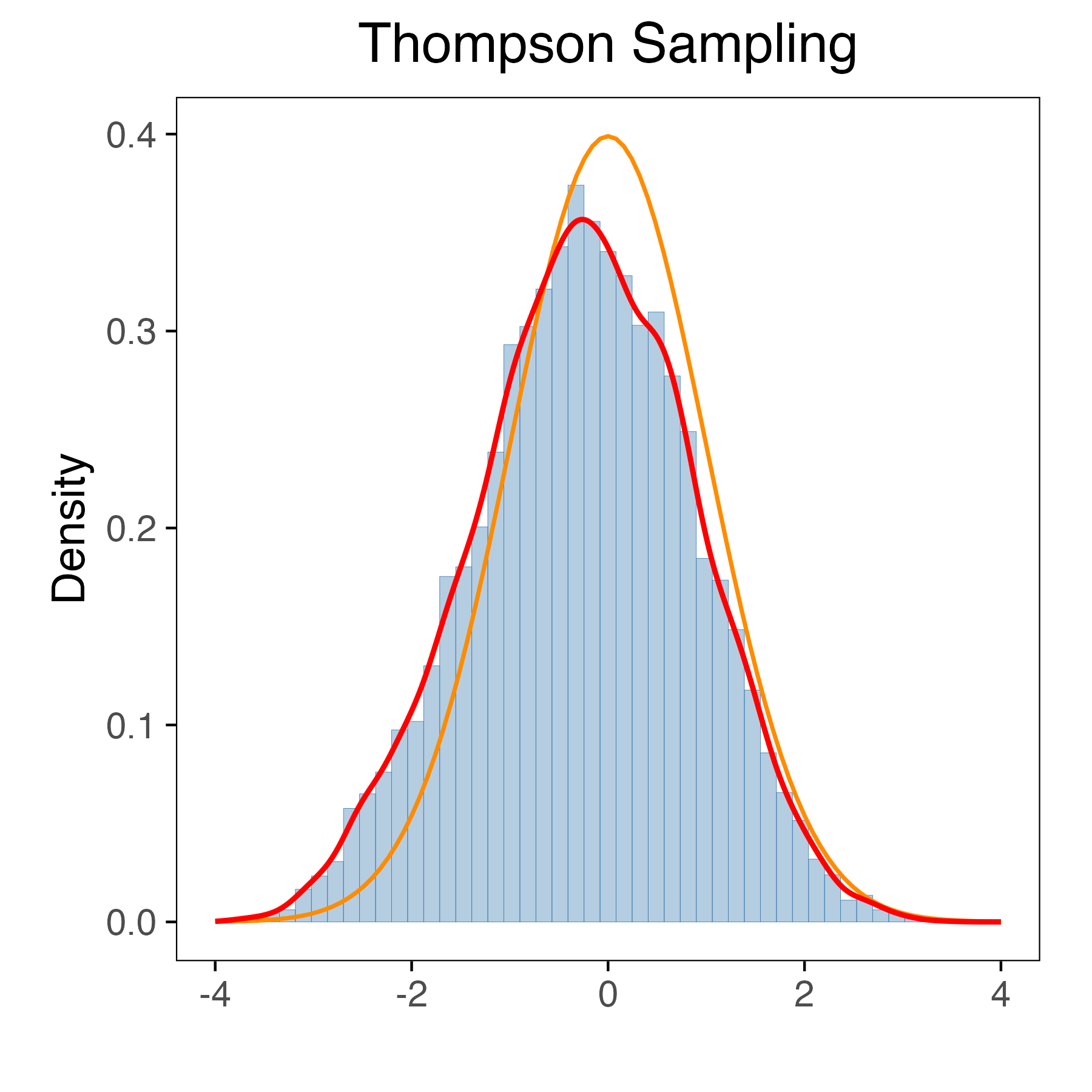}
        \caption*{\qquad Standardized error}
    \end{minipage}
    \hfill
    \begin{minipage}[t]{0.295\textwidth}
        \centering
        \includegraphics[trim=0 30 0 0, clip,width=\linewidth]{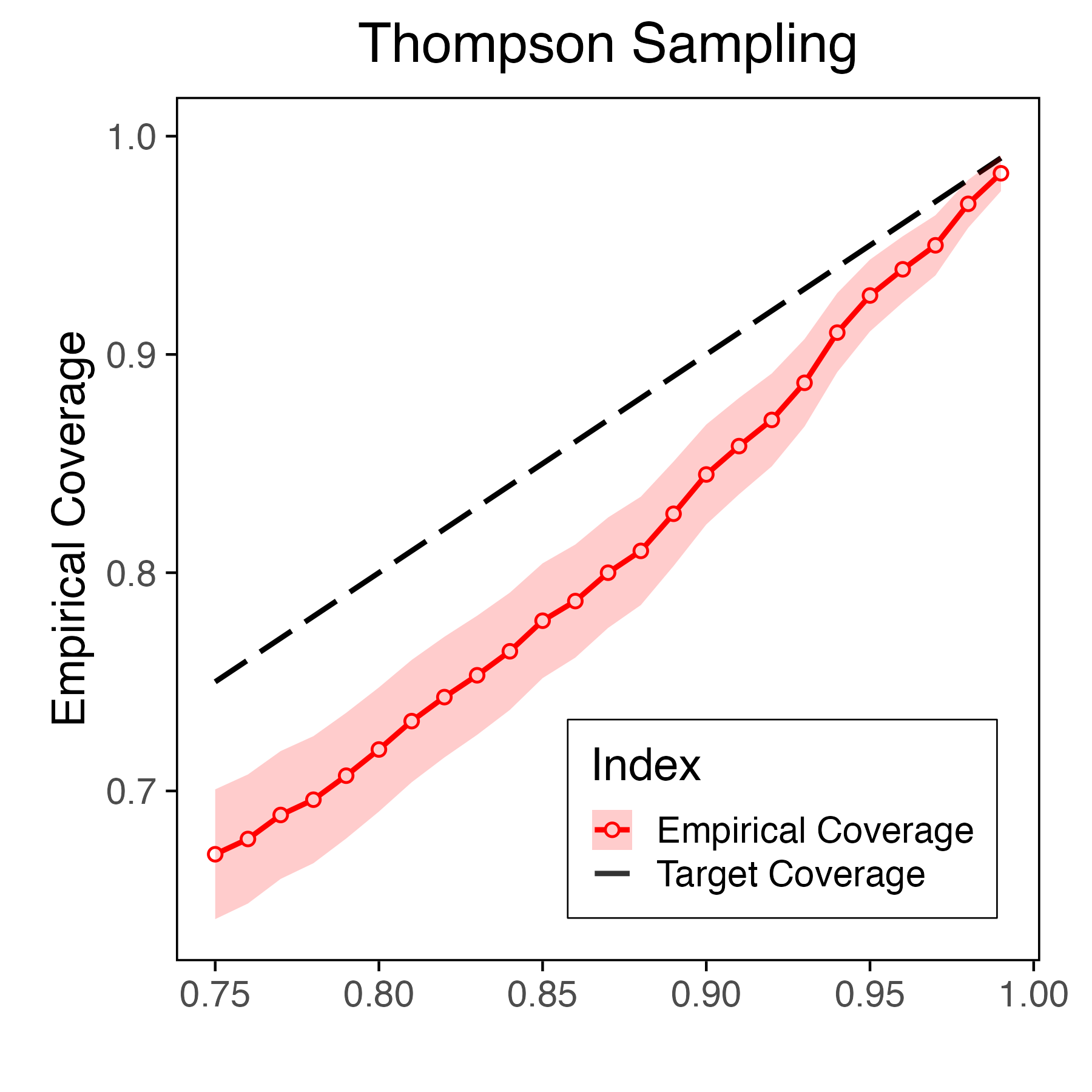}
        \caption*{\qquad Nominal Coverage}
    \end{minipage}
    \hfill
    \begin{minipage}[t]{0.295\textwidth}
        \centering
        \includegraphics[trim=0 30 0 0, clip,width=\linewidth]{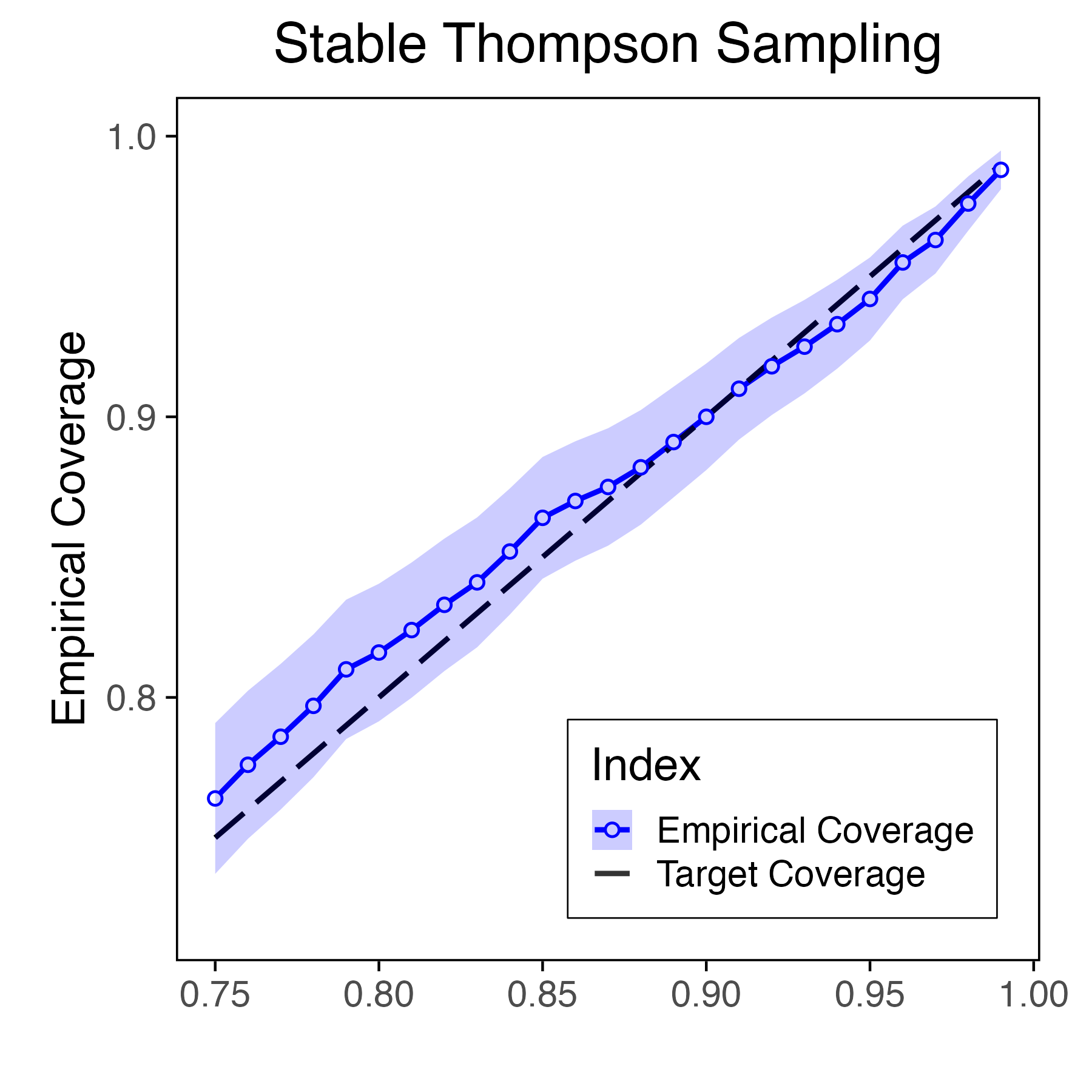}
        \caption*{\qquad Nominal Coverage}
    \end{minipage}
    \hfill
    \vspace{1mm}
    \caption{Empirical illustration of inferential failure  under Thompson Sampling and its correction via stabilization. We consider a two armed Gaussian bandit problem with equal arm mean. \textbf{Left:} Distribution of normalized sample means under standard Thompson Sampling  which deviates from Gaussianity. \textbf{Middle:} Coverage probabilities of nominal $(1 - \alpha)$ confidence intervals under Thompson Sampling, showing systematic under-coverage. \textbf{Right:} Coverage under Stable Thompson Sampling, which inflates posterior variance and restores alignment with the target coverage curve.}
    \label{fig:ts-sts-comparison}
\end{figure}

To address this issue, we propose a modification to the Thompson Sampling algorithm, which we call \emph{Stable Thompson Sampling}, that inflates the posterior variance by a logarithmic factor. This small change leads to significant theoretical benefits. In Theorem~\ref{main-thm}, we show that the modified algorithm satisfies the \textit{Lai and Wei stability condition}~\cite{laiwei82}, from which asymptotic normality of the sample means follows. Furthermore, Proposition~\ref{prop:regret-bd} establishes that the regret of Stable Thompson Sampling increases only logarithmically relative to standard Thompson Sampling. We complement our theoretical analysis with simulations (Section~\ref{subsec:sim-result}), which confirm that the stabilized variant yields valid coverage across a range of confidence levels while maintaining efficient exploration.
\vspace{-.4cm}
\section{Multi-armed bandit and Thompson sampling}
\label{sec:inference-pitfalls}
\vspace{-5pt}
We consider a Gaussian multi-armed bandit problem with $K$ arms. At each time step $t$, the player selects an arm $A_t\in[K]$ and receives a random reward $X_t$ from the distribution $\Ncal(\mu_{A_t}, \sigma^2_{A_t})$. The goal of a bandit algorithm is to select a sequence of actions $\{ A_t \}_{t = 1}^T$ such that the expected-regret over $T$ rounds, defined as,
\vspace{-10pt}
\begin{align}
    \label{eqn:regret}
    \regret(T) = T \mustar  - \Exs \left[ \sum_{t = 1}^T \X_t\right] 
\end{align}
is minimized. Here, $\mustar:= \max \limits_a \mu_a$ denotes the largest arm mean.  

Several algorithms have been proposed to address this problem, including Upper Confidence Bound (UCB) methods \cite{auer2002finite,lai1979adaptive}, which leverage optimism in the face of uncertainty, and $\varepsilon$-- greedy strategies, which balance exploration and exploitation. Among these, Thompson Sampling (TS)~\cite{thompson1933likelihood} has emerged as a particularly effective approach due to its empirical performance and strong theoretical guarantees \cite{agrawal2017near,kaufmann2012thompson}. TS operates by maintaining a posterior distribution over the arm means and sampling actions according to their probability of being optimal, seamlessly integrating exploration and exploitation. We detail the algorithm in Algorithm~\ref{algo:TS}. 
\begin{algorithm}[htbp!]
    \caption{Thompson Sampling}\label{algo:TS}
    \begin{algorithmic}[]
	\State{\textbf{Inputs}: (a) Number of epoch $T$ \\
            \textbf{Initialize}: Pull each arm $a \in [K]$ once and set $\muhat_{a,K}$ as the associated reward, and set $n_{a,K} = 1$.
            }
        \For{$t = K + 1, \ldots, T$}
            \State For each arm $a \in [K]$, sample $\sample_{a,t} \sim \Ncal\left(\muhat_{a,t-1}, \frac{1}{n_{a,t-1}}\right)$.
            \State Pull arm $A_t = \arg\max \limits_a \sample_{a,t}$ and observe associated reward $X_t$.
            \State For $a \in [K]$, update  $$ n_{a,t} = n_{a,t - 1} + \I{\{A_t=a \}} \qquad \text{and} \quad    \widehat{\mu}_{a,t} =  \frac{n_{a,t - 1}\muhat_{a,t-1} + X_t \I{\{A_t=a \}} }{n_{a,t}}$$
        \EndFor
    \end{algorithmic} 
\end{algorithm}

\vspace{-.6cm}
\section{Statistical Inference with Thompson Sampling}
\label{sec:inference}
\vspace{-5pt}
Beyond minimizing regret, there is growing interest in constructing confidence intervals (CIs) for the arm means $\{\mu_a\}_{a \in [K]}$. Such inference is critical in applications like clinical trials, where quantifying the uncertainty of treatment effects is essential~\cite{confdistrclinicialtrial}, or in A/B testing, where precise effect-size estimation informs decision-making~\cite{dubarry2015confidenceintervalsabtest,JunZheng2025ABtest}. However, standard methods for constructing CIs—such as those based on asymptotic normality of sample means—fail in the bandit setting because the data-collection process is inherently non-i.i.d. Due to adaptive sampling, the central limit theorem (CLT) does not directly apply, and naive estimators can exhibit significant bias \cite{nie18a,shin2019are}. 

To address this challenge, recent work has focused on the stability of bandit algorithms~\cite{han2024ucb,khamaru2024inference,fan2022typical,kalvitzeevi}, a notion introduced by \cite{laiwei82}.
\vspace{-5pt}
\begin{definition}
  A bandit algorithm~\algo\; is called stable if for all arm $a\in[K]$  there exist non-random scalars $\{ \nstar_{a, T}(\algo) \}_{a \in [K]}$, which depend on the problem parameters $T,\{\mu_a\}_{a\in[K]},\{\sigma^2_a\}_{a\in[K]}$,  such that
    \begin{align}
        \label{eq:stability}
        \frac{n_{a,T}(\algo)}{\nstar_{a,T}(\algo)} \inprob 1 \qquad  \text{and} \qquad  \nstar_{a,T}(\algo) \to \infty \qquad \text{as} \quad  T \to \infty. 
    \end{align}
Here, $n_{a,T}(\algo) = \sum_{t=1}^T \I{\{A_t=a\}}$ denotes the number of times arm $a$ is pulled in $T$ rounds by algorithm~\algo. 
\end{definition}
\vspace{-5pt}
At a high level, the stability of an algorithm ensures that the arm pull-behavior becomes less noisy as $T$  becomes large.  Following an argument from \cite[\textsc{Theorem 3}]{laiwei82} it is easy to show that for any stable algorithm $\algo$, the arms means $\{\muhat_{a, T}(\algo)\}_{a \in [K]}$ are asymptotically normal.  Formally, 
\begin{lemma}[\cite{laiwei82}]
\label{lem:lai-wei-lemma}
Given any stable algorithm~\algo, let $\muhat_{a,T}(\algo)$ and $\sigmahat^2_{a,T}$, respectively, denote the sample mean and variance of arm $a$ rewards at time $T$. Then for all $a \in [K]$
\begin{align}
    \label{eqn:Lai-normality}
    \frac{\sqrt{n_{a,T}(\algo) }}{\sigmahat_{a,T}} \cdot \left( \muhat_{a,T}(\algo) - \mu_a \right) \indist  \Ncal(0, 1)
\end{align}
\end{lemma}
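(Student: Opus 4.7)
The plan is to reduce the problem to a random-index central limit theorem applied to an underlying iid sequence of rewards. First, I would introduce the standard bandit coupling: for each arm $a$, let $\{Y_{a,k}\}_{k \geq 1}$ be iid $\Ncal(\mu_a, \sigma_a^2)$, independent across arms, and write the observed reward at time $t$ as $X_t = Y_{A_t,\, n_{A_t, t}}$. Under this coupling, both the sample mean and sample variance become randomly-indexed functionals of the iid sequence:
$$\muhat_{a,T} = \frac{1}{n_{a,T}} \sum_{k=1}^{n_{a,T}} Y_{a,k}, \qquad \sigmahat^2_{a,T} = \frac{1}{n_{a,T}} \sum_{k=1}^{n_{a,T}} \left(Y_{a,k} - \muhat_{a,T}\right)^2.$$
This representation is the critical bridge between the adaptive bandit process and the classical iid theory.

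Next, I would invoke Anscombe's random-index CLT. The classical CLT gives $\sqrt{n}\,(\bar Y_{a,n} - \mu_a) \indist \Ncal(0, \sigma_a^2)$ for the partial averages $\bar Y_{a,n} = n^{-1} \sum_{k=1}^n Y_{a,k}$, and the Anscombe uniform-continuity-in-probability condition holds automatically for iid partial sums with finite second moment. The stability hypothesis supplies a deterministic sequence $\nstar_{a,T} \to \infty$ with $n_{a,T}/\nstar_{a,T} \inprob 1$, which is exactly the input Anscombe's theorem requires. It therefore yields
$$\sqrt{n_{a,T}}\,\left(\muhat_{a,T} - \mu_a\right) \indist \Ncal(0, \sigma_a^2).$$
A parallel argument---Kolmogorov's strong law for the iid sequence $\{Y_{a,k}\}$ combined with $n_{a,T} \inprob \infty$ (also implied by stability)---shows that $\sigmahat_{a,T} \inprob \sigma_a$. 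Slutsky's theorem then delivers the target statement upon dividing by $\sigmahat_{a,T}$.

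The conceptual subtlety I expect to be the main obstacle, and the step I would justify most carefully, is that $n_{a,T}$ is \emph{not} a stopping time with respect to the filtration generated by $\{Y_{a,k}\}_{k \geq 1}$: the algorithm's choice of $A_t$ depends on rewards from \emph{all} arms, together with the internal randomization used by Thompson Sampling, so the event $\{n_{a,T} \geq k\}$ is not $\sigma(Y_{a,1},\dots,Y_{a,k})$-measurable. Crucially, however, Anscombe's theorem does not require a stopping-time structure---it only requires convergence in probability of the random index, after normalization by a diverging deterministic sequence, to a positive constant. This is precisely what the stability definition in~\eqref{eq:stability} provides, explaining why the deterministic characterization $\nstar_{a,T}$ of arm-pull counts is the \emph{right} structural property to impose on an adaptive algorithm to recover classical asymptotic inference.
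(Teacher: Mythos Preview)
Your argument is correct. The paper does not supply its own proof of this lemma; it merely cites \cite[\textsc{Theorem 3}]{laiwei82} and states that the result ``is easy to show'' from there. So strictly speaking there is no in-paper proof to compare against.

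That said, your route and the one implicit in the Lai--Wei reference are genuinely different. Lai and Wei work directly with the martingale structure of adaptively collected data: they verify a conditional Lindeberg condition and apply a martingale central limit theorem, with the stability condition~\eqref{eq:stability} playing the role of the predictable-variation requirement. Your approach instead \emph{removes} the adaptivity by the standard bandit coupling to pre-generated i.i.d.\ streams $\{Y_{a,k}\}_{k\ge 1}$, reducing everything to a random-index CLT on an i.i.d.\ sequence and invoking Anscombe's theorem. The trade-off: the martingale-CLT route generalizes immediately to settings (e.g., contextual or non-stationary rewards) where no such i.i.d.\ decoupling is available, whereas your Anscombe argument is shorter and more transparent in the present stochastic-bandit setting, and---as you correctly emphasize---makes explicit that no stopping-time property of $n_{a,T}$ is needed, only the ratio convergence $n_{a,T}/\nstar_{a,T}\inprob 1$ to a deterministic diverging sequence.
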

\subsection{Instability  of Thompson sampling}
It is now natural to ask whether the Thompson Sampling Algorithm~\ref{algo:TS} satisfies the stability condition~\eqref{eq:stability}. In this section, we answer this question in the negative. In their recent paper, \cite{han2026thompsonsamplingprecisearmpull} proves that Thompson Sampling is stable for an arm if it is sub-optimal or unique optimal. For multiple optimal arms,  the vector of arm-pull proportions converges to an invariant distribution of a stochastic differential equation. We also provide a
careful simulation in a simple two-armed bandit problem that reveals the \emph{instability} of Thompson sampling.
\paragraph{A simulation study:}
We numerically study the stability properties of the Thompson Sampling Algorithm~\ref{algo:TS} for a two-armed Gaussian bandit problem in two cases:
\vspace{-1pt}
\begin{itemize}
    \item[(a)]{unique optimal arm}:  $(\mu_1, \sigma_1^2) = (1,1)$ and  $(\mu_2, \sigma_2^2) = (0,1)$.
    \vspace{-5pt}
    \item[(b)]{non-unique optimal arm}: $(\mu_1, \sigma_1^2) = (\mu_2, \sigma_2^2) =  (0,1)$. 
\end{itemize}
In both cases, we plot the histograms of the arm pull, scaled by an appropriate quantity, based on $T = 10^4$ and $10^4$ Monte Carlo simulations.
\begin{figure}[htbp]
    \centering
    \hfill
    \begin{minipage}[t]{0.46\textwidth}
        \centering
        \includegraphics[trim=0 0 0 60, clip, width=\textwidth]{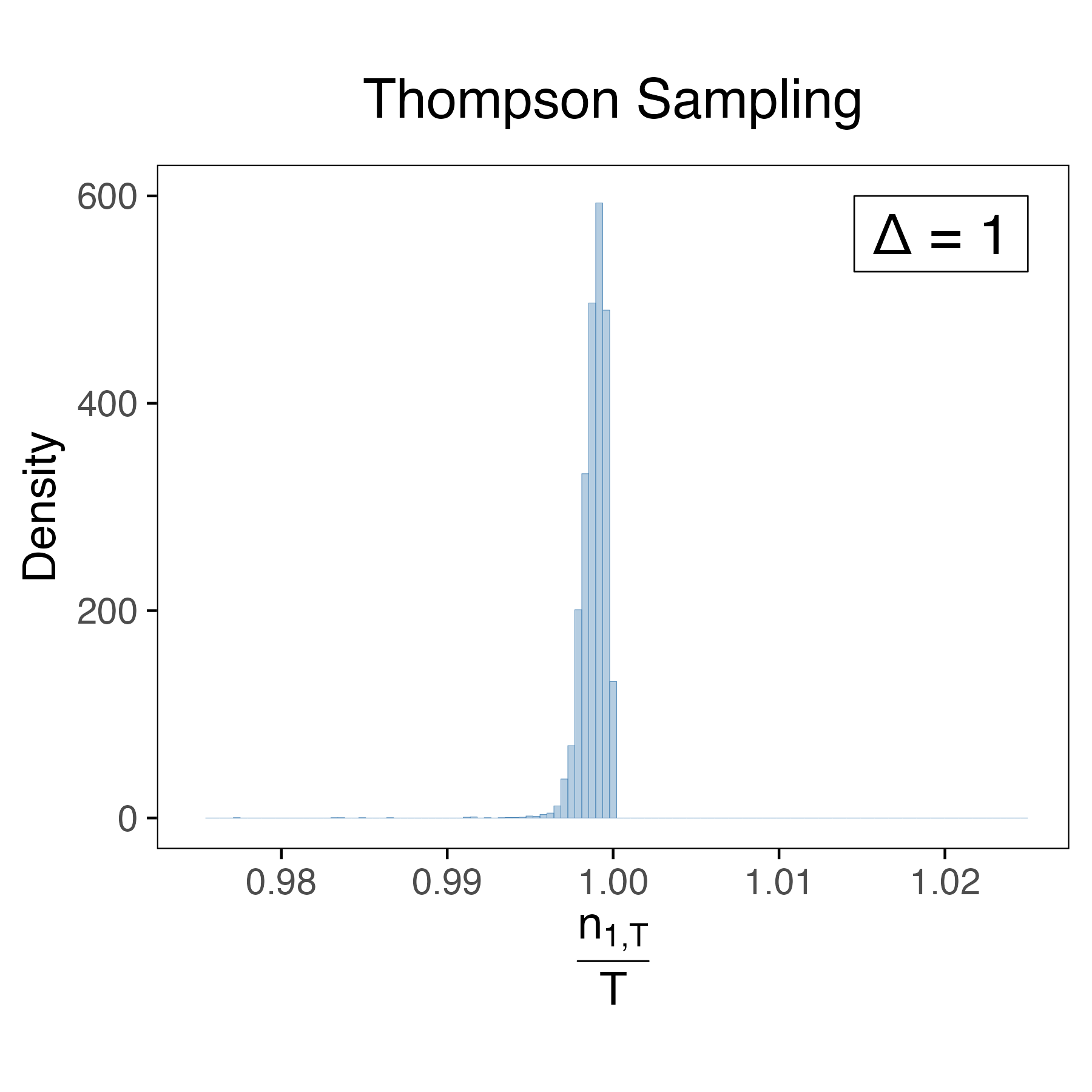}
    \end{minipage}
    \hfill
    \begin{minipage}[t]{0.46\textwidth}
        \centering
        \includegraphics[trim=0 0 0 60, clip, width=\textwidth]{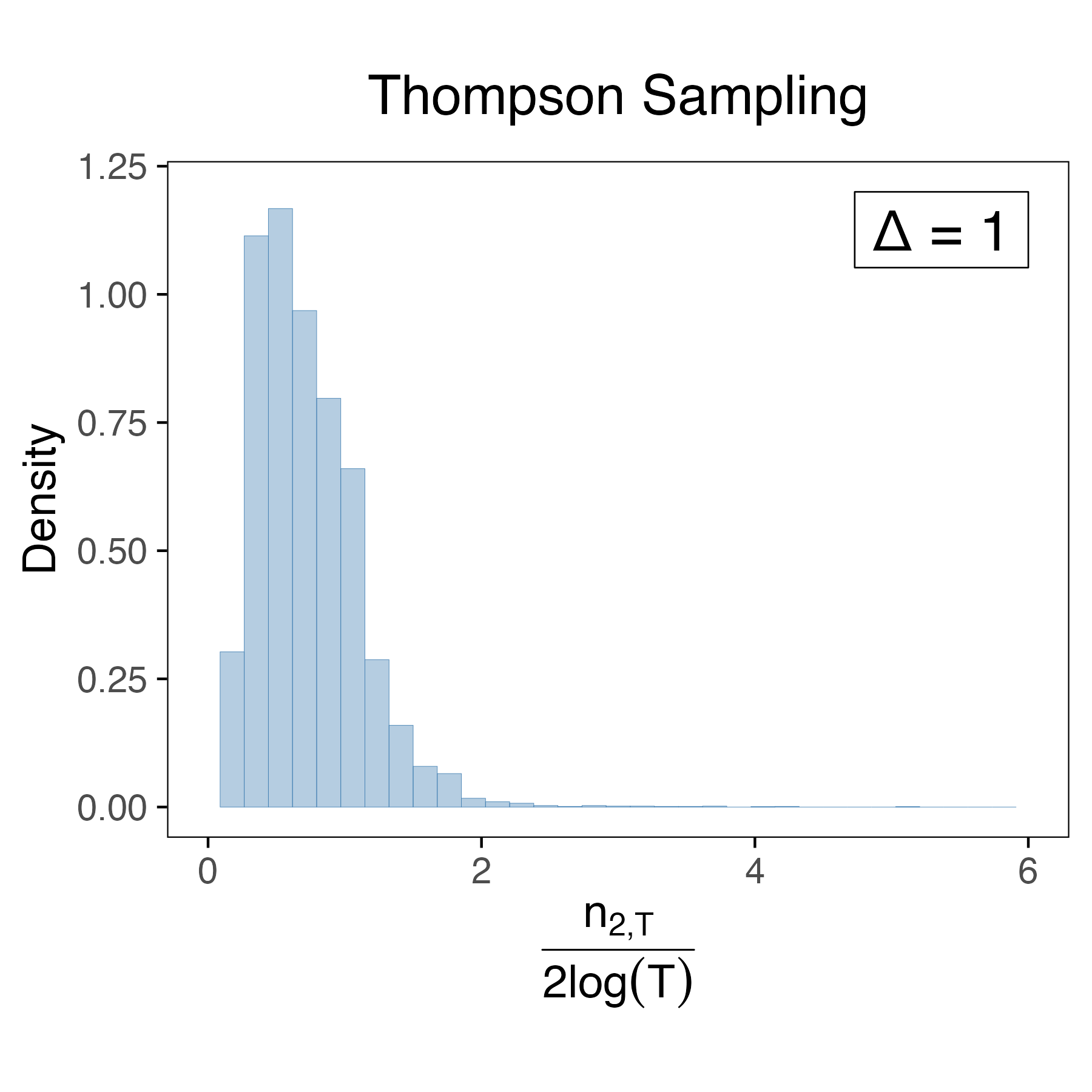}
    \end{minipage}
    \hfill
    \\
    \hfill
    \begin{minipage}[t]{0.46\textwidth}
        \centering
        \includegraphics[trim=0 20 0 60, clip, width=\textwidth]{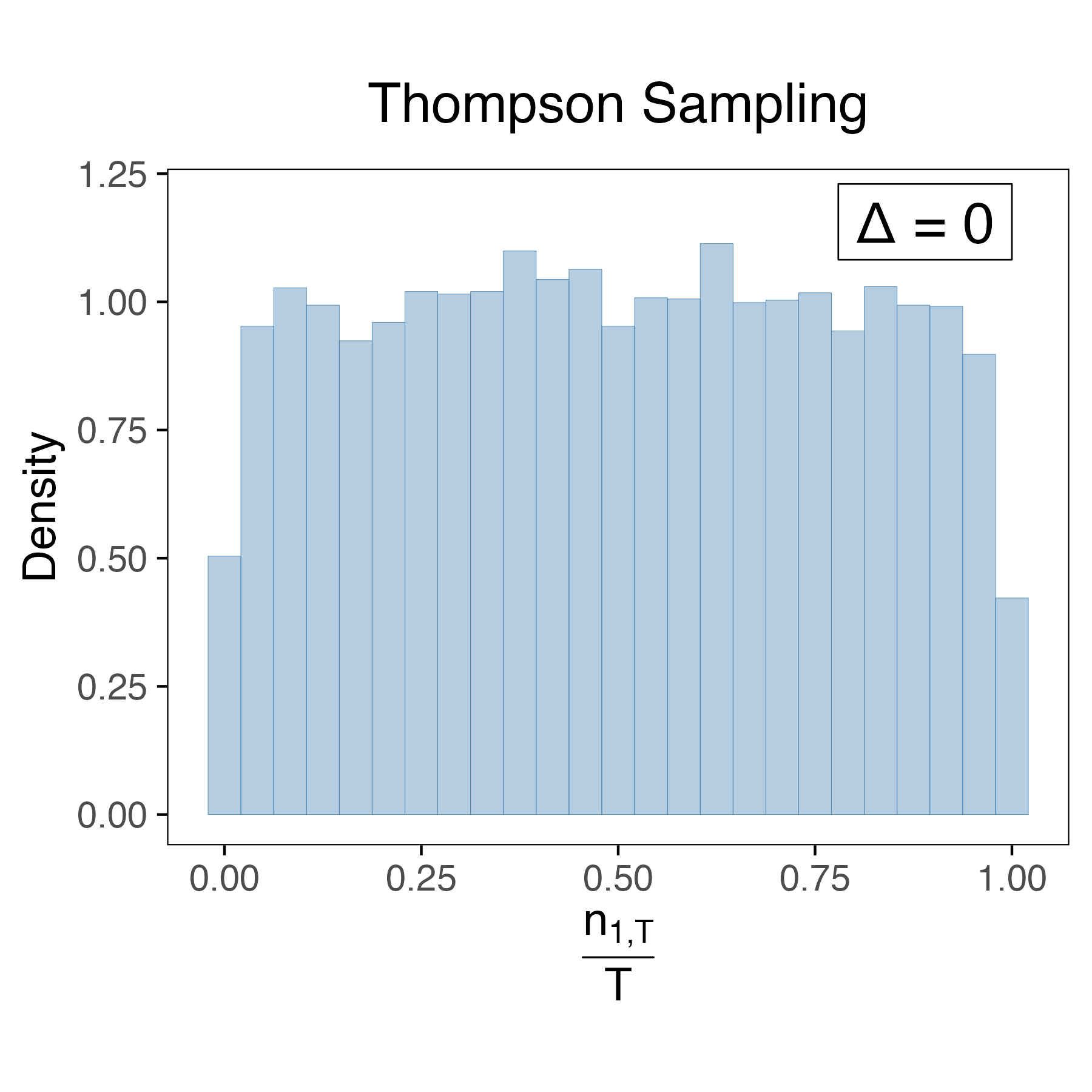}
    \end{minipage}
    \hfill
    \begin{minipage}[t]{0.46\textwidth}
        \centering
        \includegraphics[trim=0 20 0 60, clip, width=\textwidth]{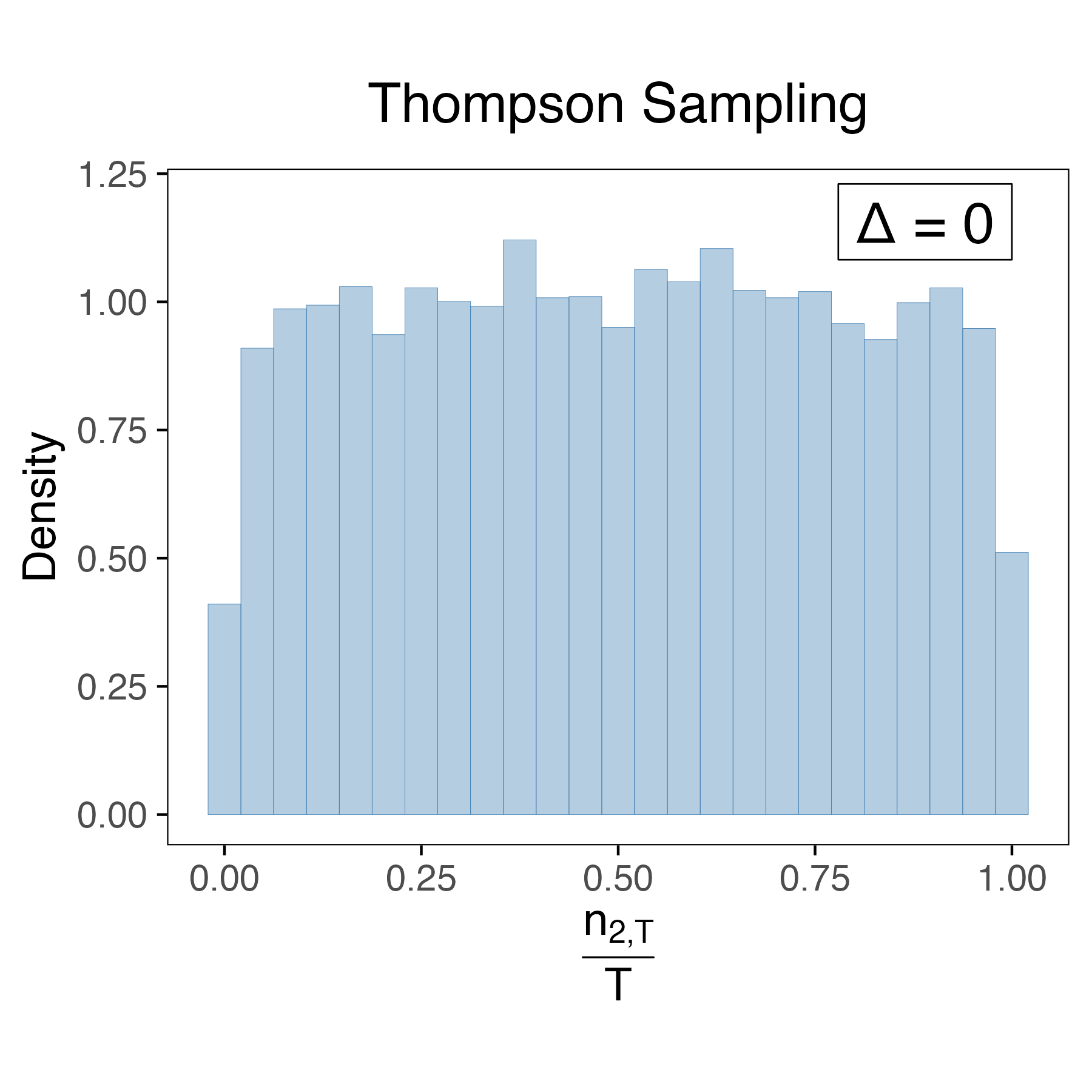}
    \end{minipage}
    \hfill
    \caption{Empirical distributions based on $10{,}000$ independent Thompson Sampling runs with $T = 10{,}000$ pulls. 
    \textbf{Top row:} Histograms of $n_{1,T}/T$ and $n_{2,T}/(2 \log T)$ under unequal means ($\Delta = 1$). These plots illustrate that Thompson Sampling satisfies the stability condition~\eqref{eq:stability} when $\Delta = 1$, but the condition fails to hold when $\Delta = 0$. \textbf{Bottom row:} Histograms of $n_{1,T}/T$ and $n_{2,T}/T$ under equal mean rewards ($\Delta = 0$).}
    \label{fig:ts-combined}
\end{figure}

The top row in Figure~\ref{fig:ts-combined} illustrates the behavior of the arm pulls for the unique optimal arm, corresponding to case (a). The plot shows that the histogram of $\frac{n_{1,T}}{T}$ and the histogram of $\frac{n_{2,T}}{2 \log T}$ both concentrate around 1, thereby providing numerical evidence that the stability condition~\eqref{eq:stability} is satisfied in this setting. Indeed, for a two-armed bandit problem where arm 1 is the unique optimal arm, the sublinear regret of Thompson Sampling~\cite{agrawal2017near} immediately yields that  $\frac{n_{1,T}}{T} \inprob 1$. Recent works~\cite{fan2022typical,han2026thompsonsamplingprecisearmpull} show that the asymptotic number of pulls of a suboptimal arm matches the celebrated \cite{lairobbins} lower bound. In the context of a two-armed bandit problem considered in our simulation, this result yields
\begin{align*}
    \frac{n_{2,T}}{2 \log T} \; \inprob \;  \frac{1}{\Delta^2}  
\end{align*}
where $\Delta:=\mu_1 - \mu_2$ is arm mean gap.

The bottom row in Figure~\ref{fig:ts-combined} demonstrates the behavior of arm pulls when both arms are identical, corresponding to case (b). Unfortunately, neither the regret bounds --- which is zero in this case for any algorithm --- nor the results of \cite{fan2022typical} shed any light on the arm pull behavior. The plots suggest that the arm pull ratio $\frac{n_{1,T}}{T}$ and $\frac{n_{2,T}}{T}$ \emph{do not} concentrate, and empirical distribution of the arm pulls resembles a uniform distribution on $[0,1]$. In a recent work, \cite{kalvitzeevi} studied the stability properties of the Thompson Sampling in a very special case of two-armed Bernoulli bandit problem with equal arm means. They show that the arm pull ratios $\frac{n_{1,T}}{T}$ and $\frac{n_{2,T}}{T}$ converge in distribution to a uniform distribution on $[0,1]$. Additionally, \cite{han2026thompsonsamplingprecisearmpull} shows instability of a generalized version of Thompson Sampling in presence of multiple optimal arms.
\section{Stable Thompson Sampling}
\label{sec:stable-TS-algorithm}
In this section, we propose and analyze a modification of Thompson Sampling, which we refer to as Stable Thompson Sampling. Our approach is motivated by a recently established stability property of the Upper Confidence Bound (UCB) algorithm. A series of recent works~\cite{fan2022typical,kalvitzeevi,han2024ucb} attribute this stability to UCB’s characteristic bonus term, $\sqrt{\frac{2 \log T}{n_{a, t - 1}}}$, which inflates reward estimates to account for statistical uncertainty in estimating the sample mean. The UCB estimate for arm $a$ at time $t$ takes the form
\begin{align}
    \label{eq:ucb-update}
    \text{UCB}_{a,t} = \widehat{\mu}_{a,t - 1} + \sqrt{\frac{2 \log T}{n_{a,t -1}}}
\end{align}
where $\widehat{\mu}_{a, t-1}$ is the empirical mean, $n_{a, t - 1}$ is the number of times arm $a$ has been pulled and $T$ is the total number of rounds. The bonus term exceeds the typical deviation of the sample mean, which is only of the order $\sqrt{\frac{2\log\log T}{n_{a, t - 1}}}$ under sub-Gaussian noise, as implied by the Law of iterated logarithm~\cite[\textsc{Theorem $8.5.2$}]{durrett2019probability}.  This deliberate inflation guards against underestimation of the true mean and contributes to stability (cf. definition~\eqref{eq:stability}). Motivated by this insight, we incorporate a similar safeguard into Thompson Sampling:
\begin{quote}
\emph{We inflate the posterior variance in Thompson Sampling by a factor $\gamma_T \gg 2\log\log T$ to guard against uncertainty in estimating the arm-means.}
\end{quote}
\noindent We refer to this variant as \emph{Stable Thompson sampling}; the full algorithm is presented in Algorithm~\ref{alg:Modified-Thompson-sampling}.

\begin{algorithm}
    \caption{Stable Thompson Sampling}\label{alg:Modified-Thompson-sampling}
    \begin{algorithmic}[]
	\State{\textbf{Inputs}: (a) Number of epoch $T \;\;$  (b) Tuning parameter $\gamma_T$ \\
            \textbf{Initialize}: Pull each arm $a \in [K]$ once and set $\muhat_{a,K}$ as the associated reward. Set $n_{a,K} = 1$.
            }
        \For{$t = K + 1, \ldots, T$}
            \State For each arm $a \in [K]$, sample $\sample_{a,t} \sim \Ncal\left(\muhat_{a,t-1}, \frac{\gamma_T}{n_{a,t-1}}\right)$.
            \State Pull arm $A_t = \arg\max_a \sample_{a,t}$ and observe associated reward $X_t$.
            \State Set
            \vspace{-.3cm}
            $$ n_{a,t} = n_{a,t - 1} + \I{\{A_t=a \}} \qquad \text{and} \quad    \widehat{\mu}_{a,t} =  \frac{(n_{a,t}-1)\muhat_{a,t-1} + X_t \I{\{A_t=a \}} }{n_{a,t}}$$
        \EndFor
    \end{algorithmic} 
\end{algorithm}

\subsection{Theoretical guarantees}
In this section, we state our main theoretical guarantees on Stable Thompson Sampling. We show that Stable Thompson algorithm is stable for a $K$-- armed bandit problem under some restrictions. Throughout, we assume that the tuning parameter $\gamma_T$ in Algorithm~\ref{alg:Modified-Thompson-sampling} satisfies
\begin{align}
    \label{gamma-cond}
    \frac{\gamma_T}{\log\log T} \rightarrow \infty 
    \qquad \text{and} \qquad  \frac{\sqrt{\log T}}{\gamma_T} \rightarrow \infty 
    \quad \text{as} \;\; T \rightarrow \infty. 
\end{align}
\noindent Under this setup, we have:\footnote{The initial version of this article, which was arXiv-ed on May 2025, dealt with the $2$-- armed scenario. This updated version was submitted to 39th Annual Conference on Learning Theory (COLT 2026) on February 4, 2026. After our manuscript submission, we later became aware of the work by \textit{Yan and Zhong} \cite{yan2026optimismstabilizesthompsonsampling}, arXiv-ed on February 5, 2026, which resolved the general $K$-- armed Gaussian bandit problem.}
\begin{theorem}[Stability in $K$-- armed bandit]
    \label{main-thm}
    In a $K$-- armed Gaussian bandit problem, let $\mathcal{I}:=\{j\in[K]:\Delta_j=0\}$. Let $|\mathcal{I}|\le2$ and we run Algorithm~\ref{alg:Modified-Thompson-sampling} with tuning parameter $\gamma_T$ satisfying condition~\eqref{gamma-cond}. Then, Algorithm~\ref{alg:Modified-Thompson-sampling} is stable. Concretely, the following hold
    \begin{align*}
        \frac{n_{j,T}}{T}\inprob\frac{1}{|\mathcal{I}|}\;\forevery j\in\mathcal{I}\qquad\text{and}\qquad\frac{n_{a,T}}{\gamma_T\cdot\log T}\inprob\frac{2}{\Delta_a^2}\;\forevery a\notin\mathcal{I}
    \end{align*}
\end{theorem}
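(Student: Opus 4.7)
The plan is to split the argument into two parts: a Gaussian-tail analysis of the sub-optimal arms ($a \notin \mathcal{I}$), followed by a symmetrization argument for the optimal arms ($j \in \mathcal{I}$), which is where the restriction $|\mathcal{I}| \le 2$ enters. Throughout I would work on a high-probability good event on which every arm satisfies a uniform self-normalized LIL bound $|\hat\mu_{a,t}-\mu_a|=O(\sqrt{\log\log n_{a,t}/n_{a,t}})$ and every arm has already been explored $n_{a,t}\to\infty$ times (the latter follows because the inflated variance guarantees a uniform lower bound on the per-round pull probability whenever $n_{a,t}$ is small).

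\textbf{Sub-optimal arms.} Fix $a\notin\mathcal{I}$ and any reference optimal arm $j^\star$. On the good event, $\theta_{a,t}-\theta_{j^\star,t}\sim\mathcal{N}(\hat\mu_{a,t-1}-\hat\mu_{j^\star,t-1},\,\gamma_T(1/n_{a,t-1}+1/n_{j^\star,t-1}))$, whose mean is $-\Delta_a(1+o(1))$ and whose variance is dominated by $\gamma_T/n_{a,t-1}$, so Gaussian tail bounds yield
\begin{align*}
    P(A_t=a\mid\mathcal{F}_{t-1}) \;\le\; P(\theta_{a,t}>\theta_{j^\star,t}\mid\mathcal{F}_{t-1}) \;\le\; \exp\!\Big(-\tfrac{n_{a,t-1}\Delta_a^2}{2\gamma_T}(1-o(1))\Big).
\end{align*}
Thus once $n_{a,t-1}\ge(2+\varepsilon)\gamma_T\log T/\Delta_a^2$, the per-round probability is $\le T^{-1-\varepsilon/2}$; summing over $t\le T$ shows the expected surplus pulls is $o(1)$, giving the upper bound $n_{a,T}\le(2+\varepsilon)\gamma_T\log T/\Delta_a^2$ with high probability. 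For the matching lower bound, I would use the Gaussian lower-tail together with a separating threshold $c_t=\mu^\star+o(1)$ to show $P(A_t=a\mid\mathcal{F}_{t-1})\ge\Phi(-\Delta_a\sqrt{n_{a,t-1}/\gamma_T})(1-o(1))$ whenever $n_{a,t-1}\le(2-\varepsilon)\gamma_T\log T/\Delta_a^2$, and then apply a Freedman-type inequality with quadratic variation $\le O(\gamma_T\log T)$ to convert this per-round lower bound into a lower bound on $n_{a,T}$. Because $T^{\varepsilon/2}/\sqrt{\log T}$ dominates both $\gamma_T\log T$ and the martingale fluctuation $\sqrt{\gamma_T}\log T$ under condition~\eqref{gamma-cond}, the event $\{n_{a,T}\le(2-\varepsilon)\gamma_T\log T/\Delta_a^2\}$ has vanishing probability. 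Letting $\varepsilon\downarrow 0$ yields $n_{a,T}/(\gamma_T\log T)\inprob 2/\Delta_a^2$.

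\textbf{Optimal arms.} When $|\mathcal{I}|=1$, the previous step implies $\sum_{a\notin\mathcal{I}} n_{a,T}=O(\gamma_T\log T)=o(T)$, so the unique optimal arm automatically satisfies $n_{j,T}/T\inprob 1$. When $\mathcal{I}=\{1,2\}$, the central observation is the exact Gaussian identity
\begin{align*}
    P(\theta_{1,t}>\theta_{2,t}\mid\mathcal{F}_{t-1}) \;=\; \Phi\!\left(\frac{\hat\mu_{1,t-1}-\hat\mu_{2,t-1}}{\sqrt{\gamma_T(1/n_{1,t-1}+1/n_{2,t-1})}}\right),
\end{align*}
which equals $\tfrac12$ whenever $\hat\mu_{1,t-1}=\hat\mu_{2,t-1}$, for \emph{any} $n_{1,t-1},n_{2,t-1}$. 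On the good event the numerator is $O(\sqrt{\log\log T/n_{\min}})$ while the denominator is at least $\sqrt{\gamma_T/n_{\min}}$; their ratio is $O(\sqrt{\log\log T/\gamma_T})=o(1)$ by condition~\eqref{gamma-cond}. Hence this conditional probability tends uniformly to $1/2$, and the probability that a sub-optimal arm intercedes is also $o(1)$ by the preceding step, so $P(A_t=1\mid\mathcal{F}_{t-1})\to 1/2$. An Azuma bound on the martingale $\sum_{s\le T}(\I\{A_s=1\}-P(A_s=1\mid\mathcal{F}_{s-1}))$ gives $O(\sqrt{T\log T})$ fluctuations, leading to $n_{1,T}=T/2+o(T)$, and symmetrically for arm $2$.

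\textbf{Main obstacle.} The $|\mathcal{I}|=2$ symmetrization rests on the algebraic accident that $P(\theta_1>\theta_2)=\tfrac12$ for any two Gaussians with equal means and arbitrary variances --- a feature special to pairs. For $|\mathcal{I}|\ge 3$ the quantities $P(\theta_i=\max_{j\in\mathcal{I}}\theta_j)$ depend nontrivially on the relative values of $(n_j)_{j\in\mathcal{I}}$, the pull proportions become a genuinely coupled stochastic system, and no martingale argument alone can pin the limit down to $1/|\mathcal{I}|$. This explains the restriction here and is precisely the regime that requires the SDE-type analysis carried out by \cite{han2026thompsonsamplingprecisearmpull,yan2026optimismstabilizesthompsonsampling}. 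A secondary technical point is establishing the uniform LIL bound on $\hat\mu_{a,t}$ under adaptive sampling, which I would handle via a self-normalized concentration inequality in the spirit of \cite{de2004self,abbasi2011improved}.
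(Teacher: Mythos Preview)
Your plan is sound and the optimal-arm analysis---including the identification of why $|\mathcal I|\le 2$ is essential---matches the paper. The suboptimal-arm analysis, however, takes a genuinely different route. The paper does not argue via threshold-crossing and martingale concentration; it instead studies the waiting times $\tau_2(j)$ between successive pulls of the suboptimal arm, sandwiches $\gamma_T\log T/n_{2,T}$ between $\gamma_T\log\bigl(\sum_{j<n_{2,T}}\tau_2(j)\bigr)/n_{2,T}$ and $\gamma_T\log\bigl(\sum_{j\le n_{2,T}}\tau_2(j)\bigr)/n_{2,T}$, approximates each $\tau_2(j)$ by an explicit geometric proxy with success probability $\exp(-j\Delta^2/(2\gamma_T))$, and uses a log-sum-exp inequality to pin both bounds to $\Delta^2/2$. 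This renewal argument treats the two directions in one stroke and sidesteps martingale concentration for $n_{a,T}$ entirely; the only coarse input it needs, $n_{2,T}\gtrsim\log T$ with high probability, is imported from the Lai--Robbins lower bound, applicable because Proposition~\ref{prop:regret-bd} certifies the algorithm is uniformly good. Your route is more standard and the upper bound is clean, but one caution on the lower bound: the quadratic variation is not almost surely $O(\gamma_T\log T)$ on the bad event $\{n_{a,T}\le(2-\varepsilon)\gamma_T\log T/\Delta_a^2\}$---your own per-round lower bound forces the compensator $\sum_t P(A_t=a\mid\mathcal F_{t-1})\ge T^{\varepsilon/2}$ there, so invoking Freedman with that variance bound is circular. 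The repair is easy: since $\E\bigl[\sum_t P(A_t=a\mid\mathcal F_{t-1})\bigr]=\E[n_{a,T}]=O(\gamma_T\log T)$ by the regret bound, Markov's inequality on the compensator already kills the bad event without Freedman.
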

\noindent We prove a special case of Theorem~\ref{main-thm} in Section~\ref{sec:outline-main-thm} and defer the proof of the general case to Appendix~\ref{sec:proof-main-thm-app}. An immediate consequence of this theorem, together with Lemma~\ref{lem:lai-wei-lemma}, is that sample mean of each arm is asymptotically normal. As a result, given any target level $\alpha > 0$, we have the following asymptotically exact confidence intervals. Specifically,  for arms $a \in[K]$ we have $\lim_{T \rightarrow \infty } \prob \left( \CI_{a, \alpha, T} \ni \mu_a \right) = 1 - \alpha$ where
\begin{align}
    \label{eq:asymp-CI}
    \CI_{a, \alpha, T} = \left[ \widehat{\mu}_{a,T} - z_{1 - \alpha/2} \frac{\sigmahat_{a, T} }{\sqrt{n_{a,T}}}, \quad \widehat{\mu}_{a,T} + z_{1 - \alpha/2} \frac{\sigmahat_{a, T} }{\sqrt{n_{a,T}}} \right]
\end{align}
Here, $\widehat{\mu}_{a,T}$ and $\sigmahat_{a,T}$ are the sample mean and sample standard deviation of the rewards from arm $a$, respectively. 
\paragraph{Regret comparison:}
While Stable Thompson Sampling enables valid inference despite adaptive data collection, it is natural to ask how much we lose in terms of expected regret. The following regret bound addresses this question in the general $K$-- armed bandit setting.
\begin{proposition}
\label{prop:regret-bd}
For $K$-- armed bandit problem, the regret of Algorithm~\ref{alg:Modified-Thompson-sampling} is upper bounded by
\begin{align}
    \regret(T) \le \sum_{j:\Delta_j > 0} \left[ \kappa_1 \cdot {\frac{\gamma_T}{\Delta_j} \cdot \log\left( \frac{T\Delta_j^2}{\gamma_T} + \kappa_2 \right)} + {\kappa_3 \cdot\frac{\gamma_T}{\Delta_j}} \right],
\end{align}
where \(\Delta_j = \mustar - \mu_j\), and \(\kappa_1, \kappa_2, \kappa_3\) are positive universal constants.
\end{proposition}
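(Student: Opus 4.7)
The plan is to follow the Agrawal--Goyal style analysis of Gaussian Thompson Sampling (from \cite{agrawal2017near}), adapted to account for the variance inflation factor $\gamma_T$ in the posterior. I first write the standard regret decomposition
\[
\regret(T) \;=\; \sum_{j:\Delta_j>0} \Delta_j \cdot \Exs[n_{j,T}],
\]
and reduce the problem to bounding $\Exs[n_{j,T}]$ for each suboptimal arm $j$. Without loss of generality I take arm $1$ to be optimal, so $\mu^\star = \mu_1$, and fix thresholds $x_j := \mu_j + \Delta_j/3$ and $y_j := \mu^\star - \Delta_j/3$, so that $\mu_j < x_j < y_j < \mu^\star$.

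Next I decompose the event $\{A_t = j\}$ into three cases, according to whether the empirical mean of arm $j$ is well-behaved, whether the posterior sample for arm $j$ is anomalously high, or whether the posterior sample for arm $1$ is anomalously low:
\begin{align*}
\mathbb{1}\{A_t = j\} &\le \mathbb{1}\{\hat{\mu}_{j,t-1} > x_j\} \;+\; \mathbb{1}\{\hat{\mu}_{j,t-1}\le x_j,\; \theta_{j,t} > y_j\} \;+\; \mathbb{1}\{A_t = j,\; \theta_{j,t} \le y_j\}.
\end{align*}
For term (A), I use the sub-Gaussian concentration of $\hat{\mu}_{j,n}$ around $\mu_j$ across the pulls of arm $j$; summing Gaussian tails $\exp(-n\Delta_j^2/18\sigma_j^2)$ yields $O(1/\Delta_j^2)$, which is dominated by the stated bound. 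For term (B), I condition on the history at the $n$-th pull of arm $j$: on $\{\hat{\mu}_{j,t-1} \le x_j\}$, the sample $\theta_{j,t} \sim \Ncal(\hat{\mu}_{j,t-1}, \gamma_T/n)$ exceeds $y_j$ with probability at most $\exp\!\bigl(-n\Delta_j^2/(18\gamma_T)\bigr)$; summing the geometric-like series across all possible values of $n_{j,t-1}$ gives $O(\gamma_T/\Delta_j^2)$, which contributes to the $\kappa_3\gamma_T/\Delta_j$ term after multiplication by $\Delta_j$.

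Term (C) is the main obstacle and the source of the $\gamma_T \log(T\Delta_j^2/\gamma_T)$ behavior. Here I use the Agrawal--Goyal change-of-measure trick: letting $p_{j,t} := \prob(\theta_{1,t} > y_j \mid \mathcal{F}_{t-1})$, a martingale/waiting-time argument shows that the number of rounds between consecutive pulls of arm $1$ that also contribute to event (C) is dominated stochastically by a geometric with success probability $p_{j,t}$, so
\[
\Exs\!\left[\sum_{t=1}^{T} \mathbb{1}\{A_t = j,\; \theta_{j,t} \le y_j\}\right] \;\le\; \Exs\!\left[\sum_{\ell=1}^{n_{1,T}} \frac{1-p_{j,\tau_\ell}}{p_{j,\tau_\ell}}\right],
\]
where $\tau_\ell$ is the time of the $\ell$-th pull of arm $1$. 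The key quantitative step is to control $(1-p)/p$ under the inflated variance: conditional on $\hat{\mu}_{1,t-1}$ being within its typical band around $\mu^\star$, $\theta_{1,t} \sim \Ncal(\hat{\mu}_{1,t-1}, \gamma_T/\ell)$, and $p_{j,\tau_\ell}$ crosses $1/2$ once $\ell \gtrsim \gamma_T/\Delta_j^2$. For small $\ell$ (below this threshold) I use the standard Gaussian anti-concentration bound $(1-p)/p \le C \sqrt{\ell/\gamma_T}\,\exp(\ell \Delta_j^2/(18\gamma_T))$, which sums (after splitting at $\ell = \gamma_T/\Delta_j^2$ and handling the tail $\ell > \gamma_T/\Delta_j^2$ by a geometric) to an expression of the form $C\,\gamma_T/\Delta_j^2\cdot\log(T\Delta_j^2/\gamma_T+\kappa_2)$; an additive $O(1)$ correction coming from the event that $\hat{\mu}_{1,t-1}$ is \emph{not} in its typical band is absorbed into $\kappa_3$ via a standard concentration argument on arm $1$.

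The hardest part of the argument will be a clean split in the term (C) sum at $\ell \asymp \gamma_T/\Delta_j^2$: for $\ell$ below the threshold, the Gaussian anti-concentration estimate must be tight enough to integrate to only a logarithmic factor rather than a polynomial blow-up, and for $\ell$ above the threshold the geometric tail must be summed uniformly despite the variance scaling. Everything else is bookkeeping: combining the bounds on (A), (B), and (C), multiplying by $\Delta_j$, and summing over suboptimal arms yields the stated inequality with suitable universal constants $\kappa_1,\kappa_2,\kappa_3$.
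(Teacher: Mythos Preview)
Your approach is essentially the Agrawal--Goyal analysis the paper also uses: the three-term decomposition (A), (B), (C) matches the paper's (E3), (E2), (E1), and the change-of-measure bound on (C) via $\frac{1-p_{j,t}}{p_{j,t}}$ is the same key step. The one structural difference is that the paper does not track $\gamma_T$ directly; instead it observes that dividing the selection rule through by $\sqrt{\gamma_T}$ exhibits Stable Thompson Sampling as ordinary Thompson Sampling on a rescaled instance with gaps $\Delta_{j,T}=\Delta_j/\sqrt{\gamma_T}$, reward variance $\sigma_T^2=1/\gamma_T\le 1/4$, and posterior variance $1/n$. It then proves a single bound on $\E[n_{j,\horizon}]$ valid for any $\sigma_T\le 1/2$ and substitutes $\Delta_{j,T}=\Delta_j/\sqrt{\gamma_T}$ at the end. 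Your direct route is equivalent but must carry the two scales (data spread $1/\sqrt\ell$, posterior spread $\sqrt{\gamma_T/\ell}$) through every estimate.

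Your handling of term (C) is where the sketch is loose. The inequality $(1-p)/p \le C\sqrt{\ell/\gamma_T}\exp\bigl(\ell\Delta_j^2/(18\gamma_T)\bigr)$ has the wrong sign in the exponent on the typical event (when $\hat\mu_{1,t-1}\ge y_j$ one already has $p>1/2$, so $p$ does not ``cross $1/2$''), and splitting at $\ell=\gamma_T/\Delta_j^2$ with a geometric tail does not by itself produce the logarithm. In the paper the $\log(T\Delta_j^2/\gamma_T+\kappa_2)$ factor arises differently: one first establishes a \emph{uniform} constant bound $\E[1/p_{\tau_\ell}]\le C_0$ for all $\ell$ (via anti-concentration of the maximum of $r$ posterior samples, which works precisely because the inflated posterior spread $\sqrt{\gamma_T/\ell}$ dominates the data spread $\sqrt{1/\ell}$), and only proves the sharp bound $\E[1/p_{\tau_\ell}]\le 1+O\!\bigl(\gamma_T/(T\Delta_j^2)\bigr)$ once $\ell$ exceeds the threshold $L\asymp(\gamma_T/\Delta_j^2)\log(T\Delta_j^2/\gamma_T)$. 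Summing the constant bound up to $L$ is what yields the logarithmic term. Your outline will go through once this two-regime control of $\E[1/p_{\tau_\ell}]$ replaces the stated formula.
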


Comparing this bound with the regret guarantee for standard Thompson Sampling~\cite[\textsc{Theorem~2}]{korda2013thompson}, we see that the regret of the Stable Thompson Sampling algorithm is at most a factor of \(\gamma_T\) worse, up to universal constants.
\vspace{-5pt}
\subsection{Illustrative simulations}
\label{subsec:sim-result}
\vspace{-5pt}
We now present numerical simulation results for Algorithm~\ref{alg:Modified-Thompson-sampling} on a four-armed Gaussian bandit problem with $\mu=(1,1,0.5,0)$, using $T = 10^4$ rounds and $\gamma_T = 4 \cdot (\log T)^{0.4}$, averaged over $10^4$ independent experiments.

\begin{figure}[htbp!]
    \centering
    \hfill
    \begin{minipage}[t]{0.24\textwidth}
        \centering
        \includegraphics[trim=0 0 0 22, clip,width=\linewidth]{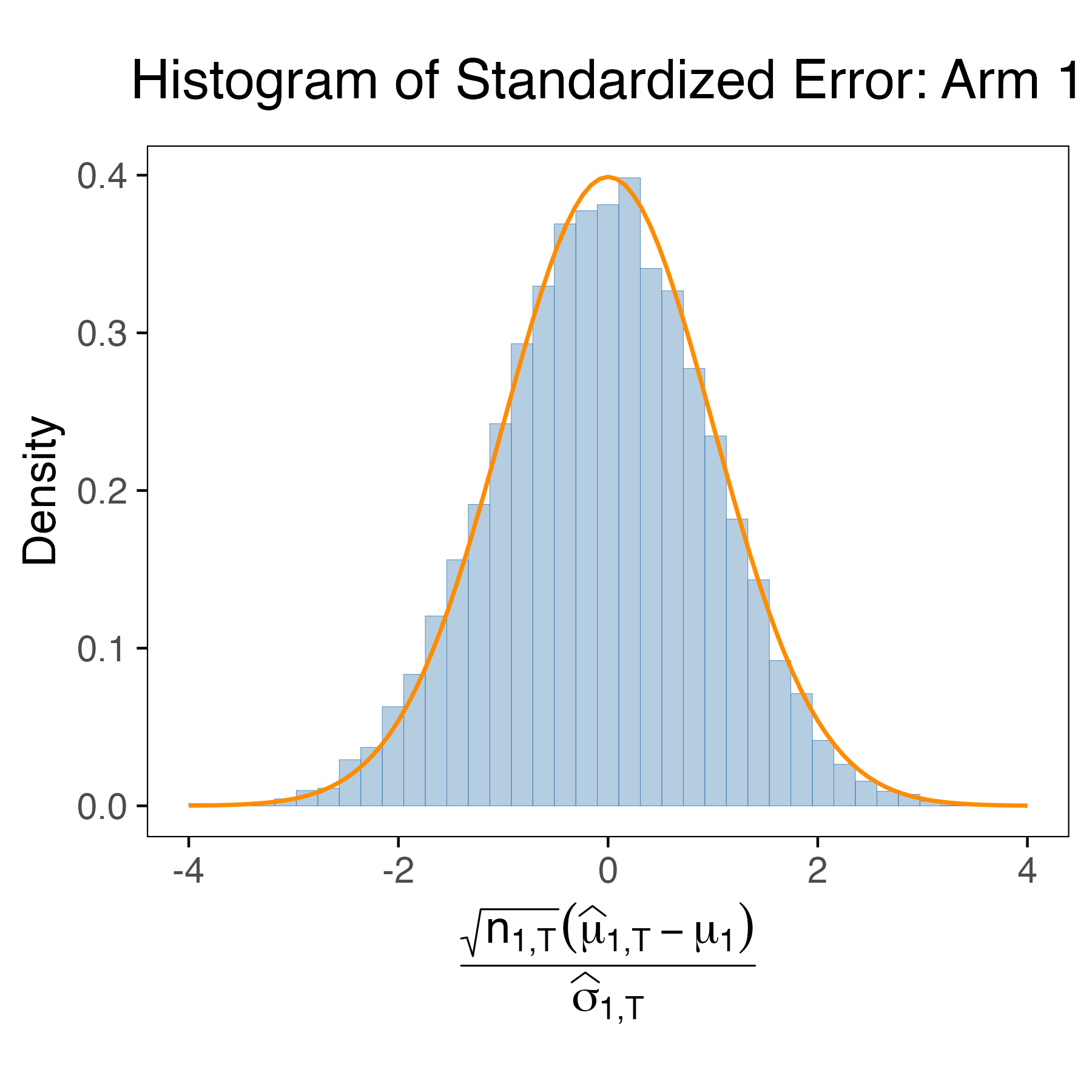}
    \end{minipage}
    \hfill
    \begin{minipage}[t]{0.24\textwidth}
        \centering
        \includegraphics[trim=0 0 0 22, clip,width=\linewidth]{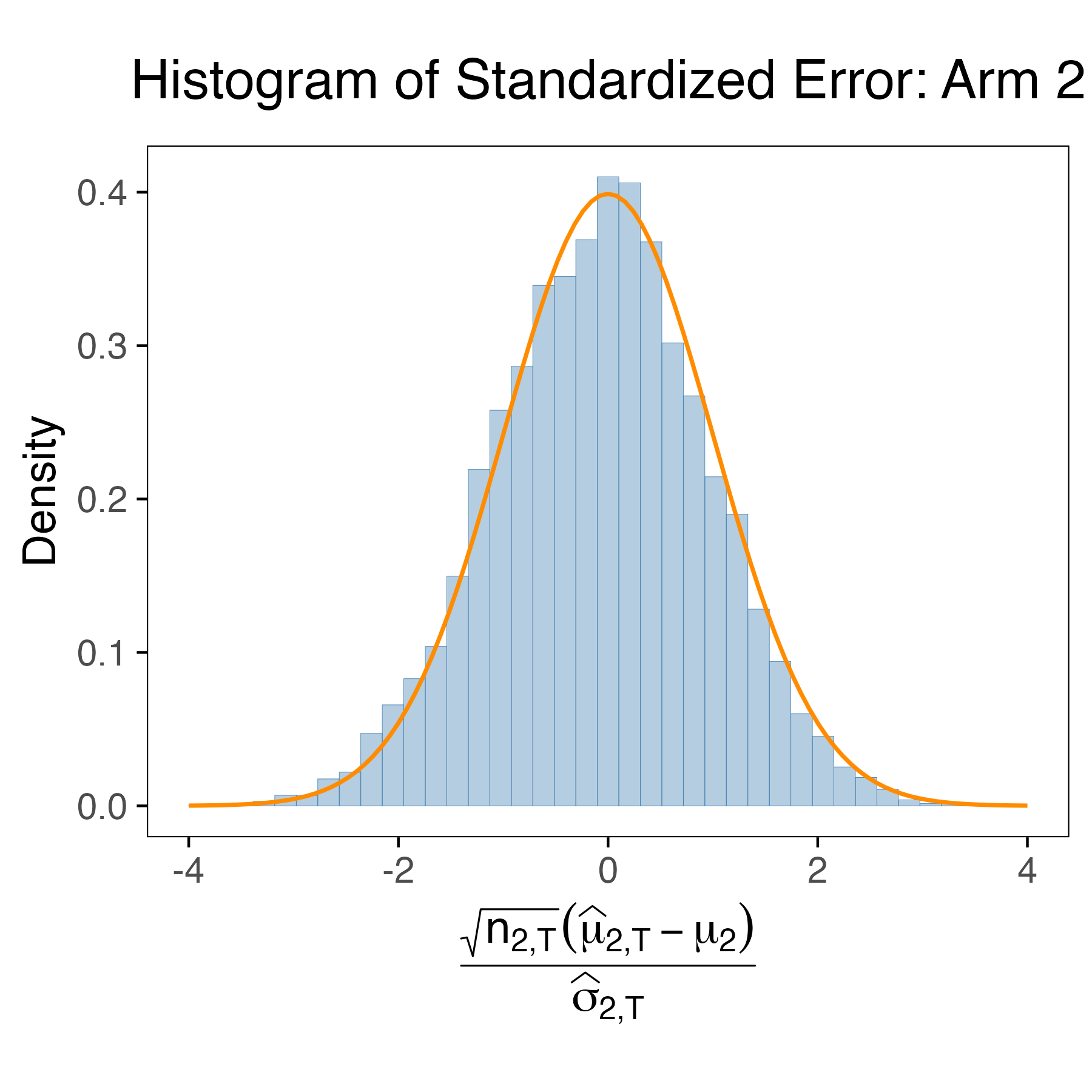}
    \end{minipage}
    \begin{minipage}[t]{0.24\textwidth}
        \centering
        \includegraphics[trim=0 0 0 22, clip,width=\linewidth]{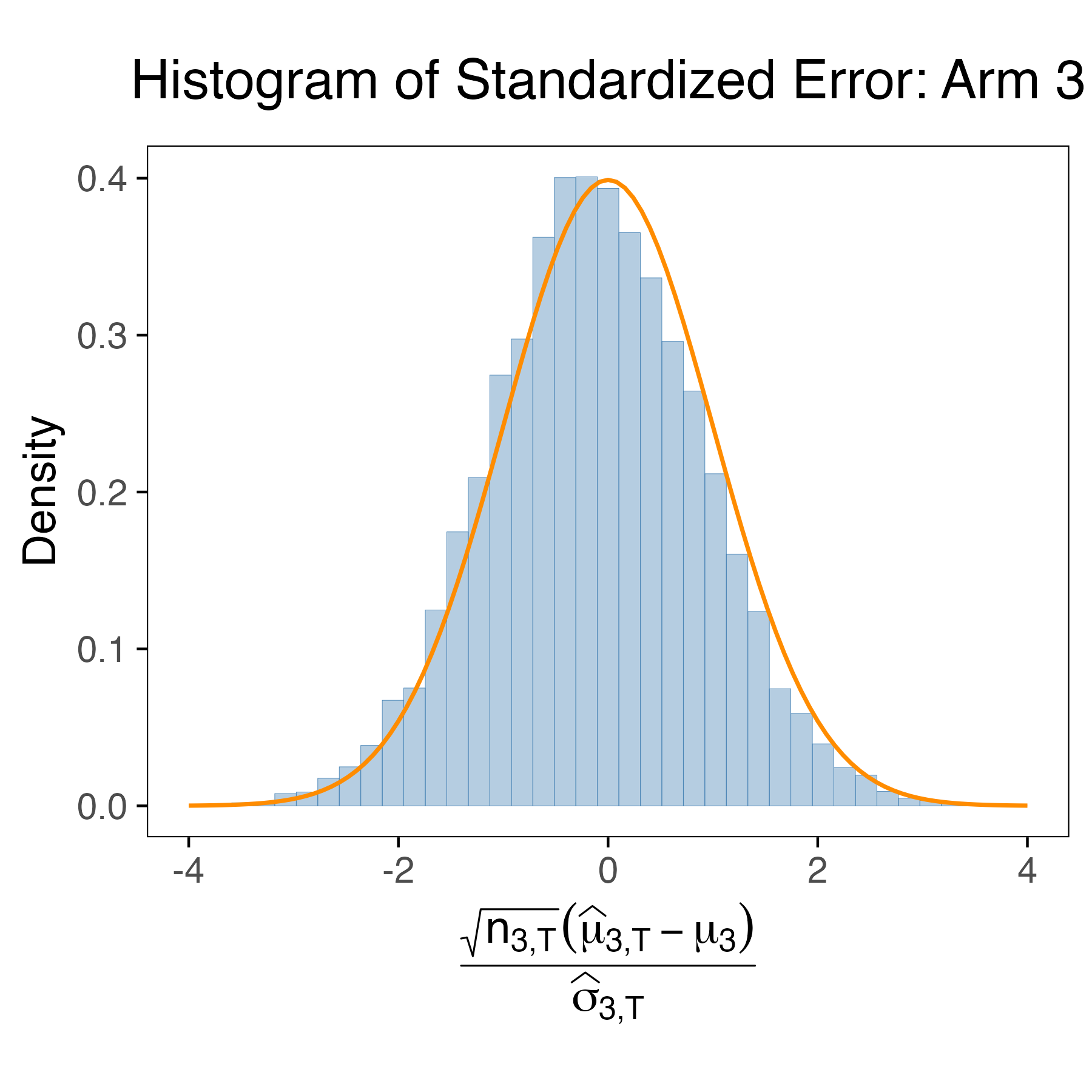}
    \end{minipage}
    \hfill
    \begin{minipage}[t]{0.24\textwidth}
        \centering
        \includegraphics[trim=0 0 0 22, clip,width=\linewidth]{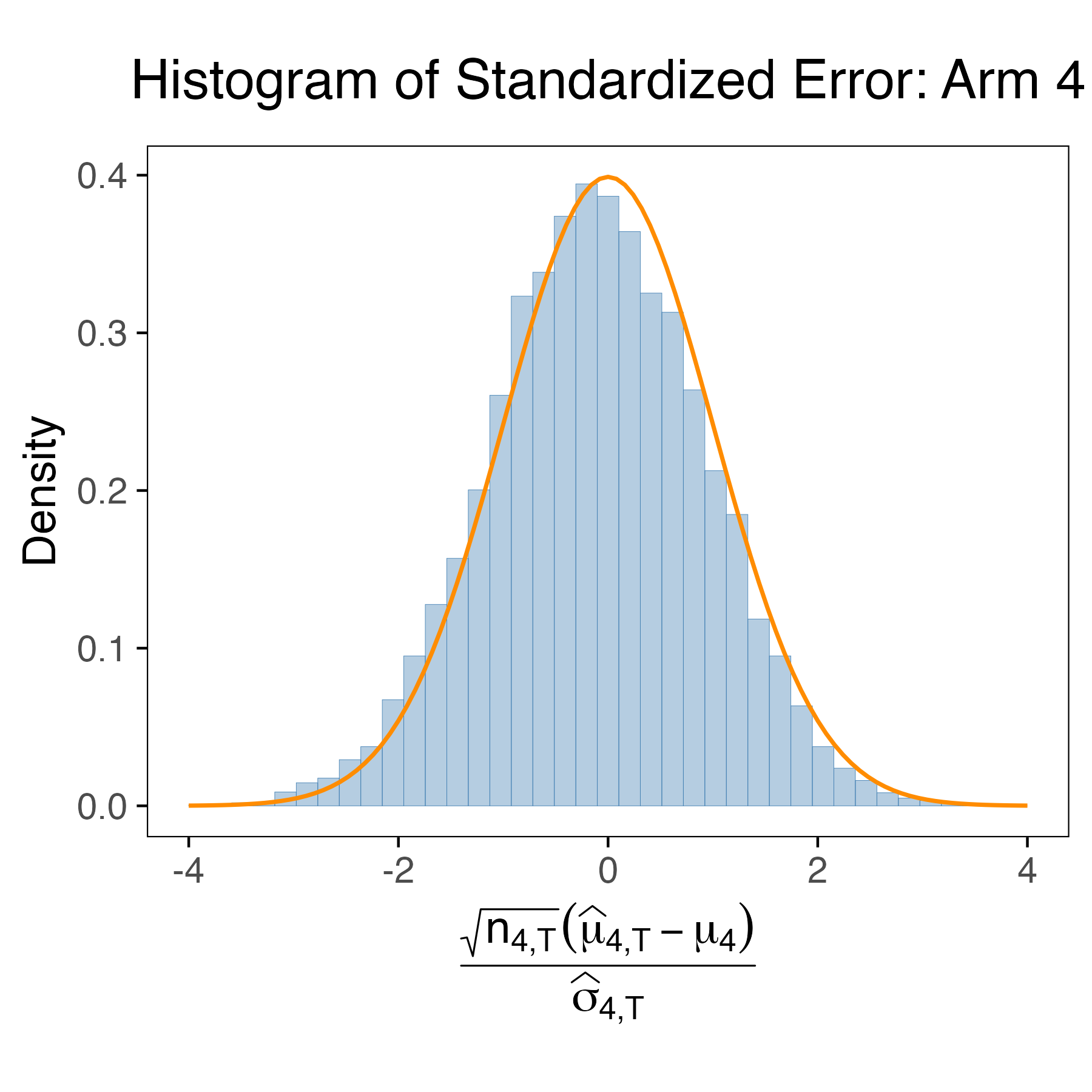}
    \end{minipage}
    \hfill
    \caption{Empirical behavior of Stable Thompson Sampling for a four-armed bandit with $\mu=(1,1,0.5,0)$: standardized estimation errors \(\sqrt{n_{a,T}}(\widehat{\mu}_{a,T} - \mu_a)/\widehat{\sigma}_{a,T}\) are approximately standard normal.}
    \label{fig:sts-normality}
\end{figure}

Figure~\ref{fig:sts-normality} plots the standardized estimation errors against the standard Gaussian density, confirming the asymptotic normality of the sample means. This validates the use of normality based confidence intervals for the arm means under Stable Thompson Sampling.

\begin{figure*}[htbp!]
    \centering
    \hfill
    \begin{minipage}[t]{0.24\textwidth}
        \includegraphics[width=\textwidth]{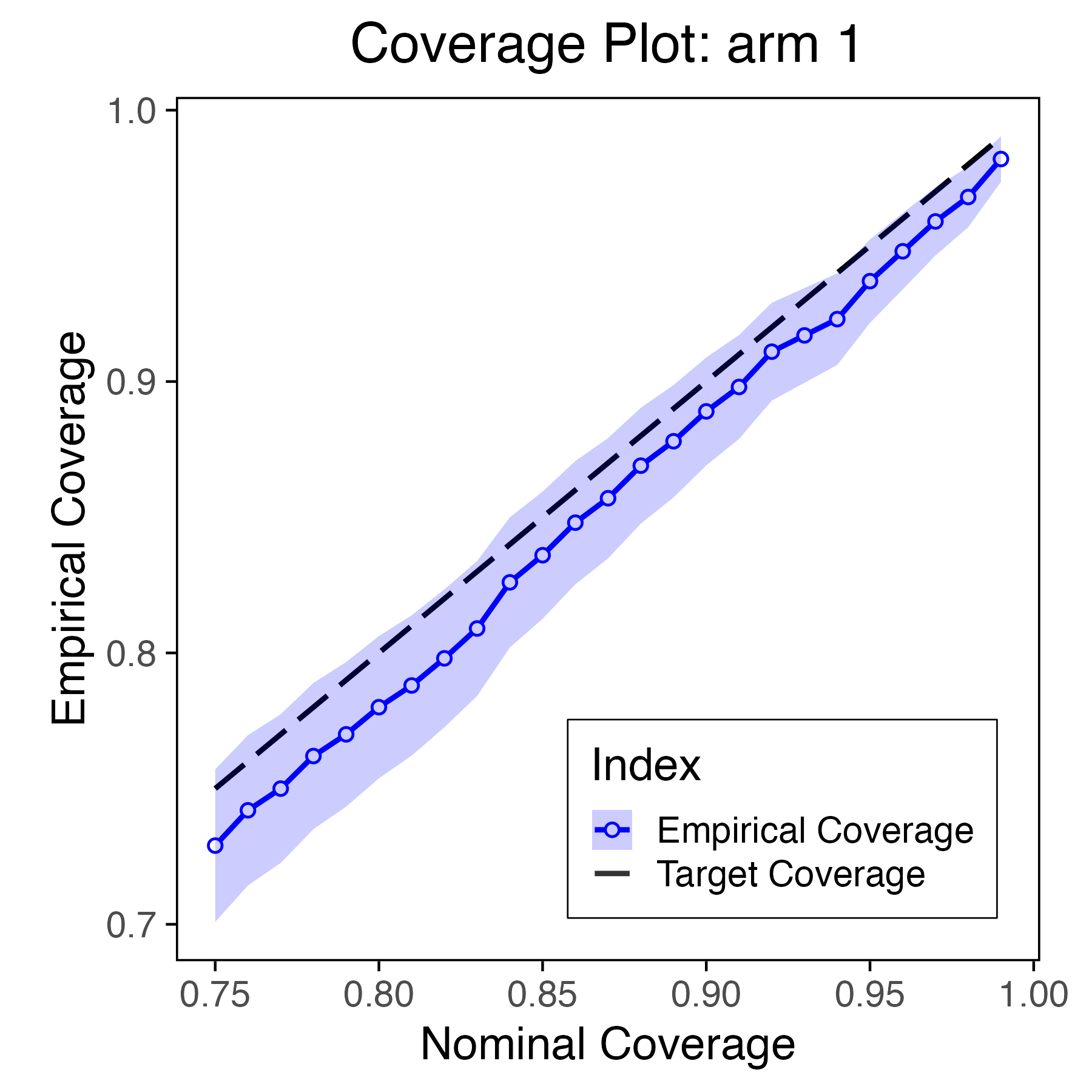}
    \end{minipage}
    \hfill
    \begin{minipage}[t]{0.24\textwidth}
        \includegraphics[width=\textwidth]{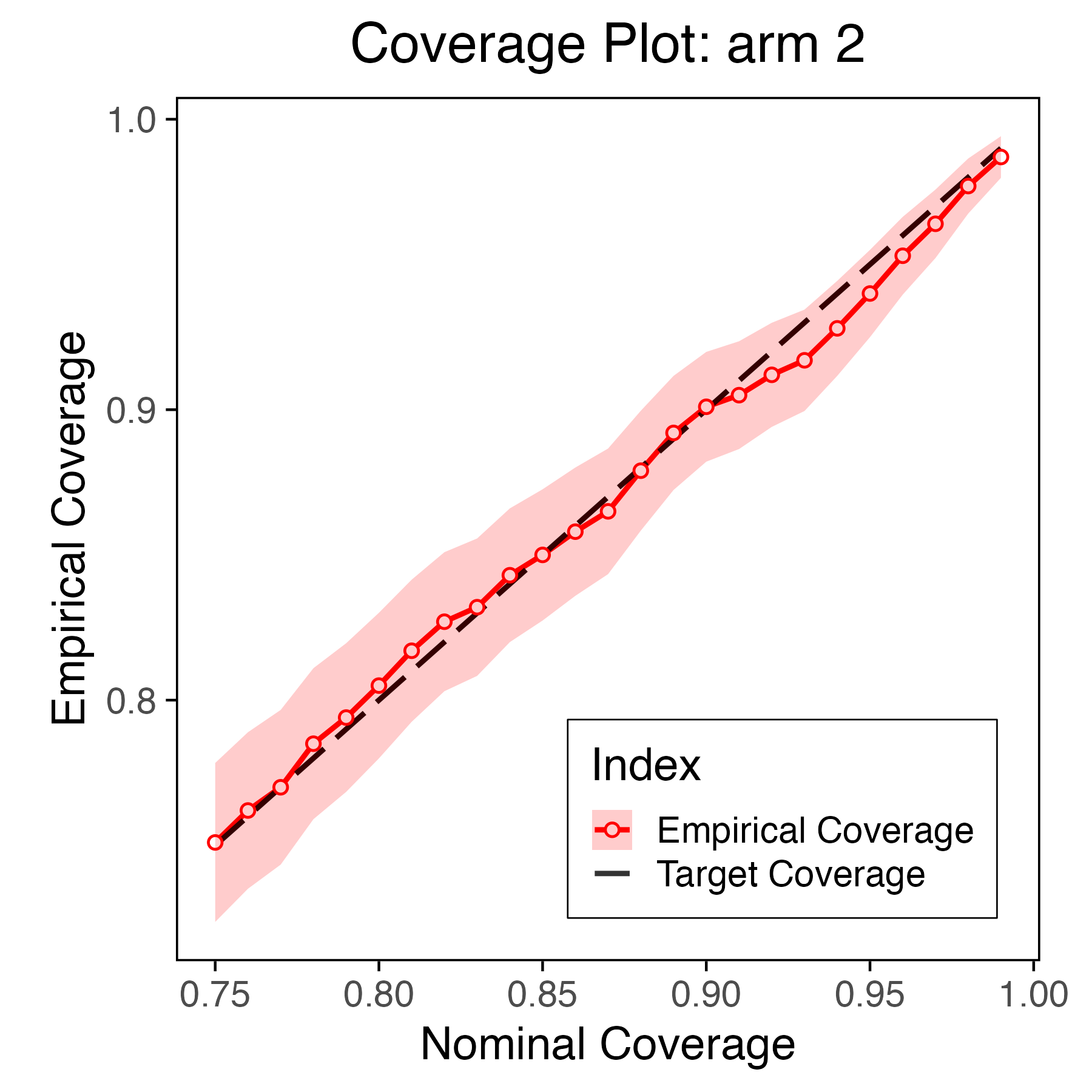}
    \end{minipage}
    \hfill
    \begin{minipage}[t]{0.24\textwidth}
        \includegraphics[width=\textwidth]{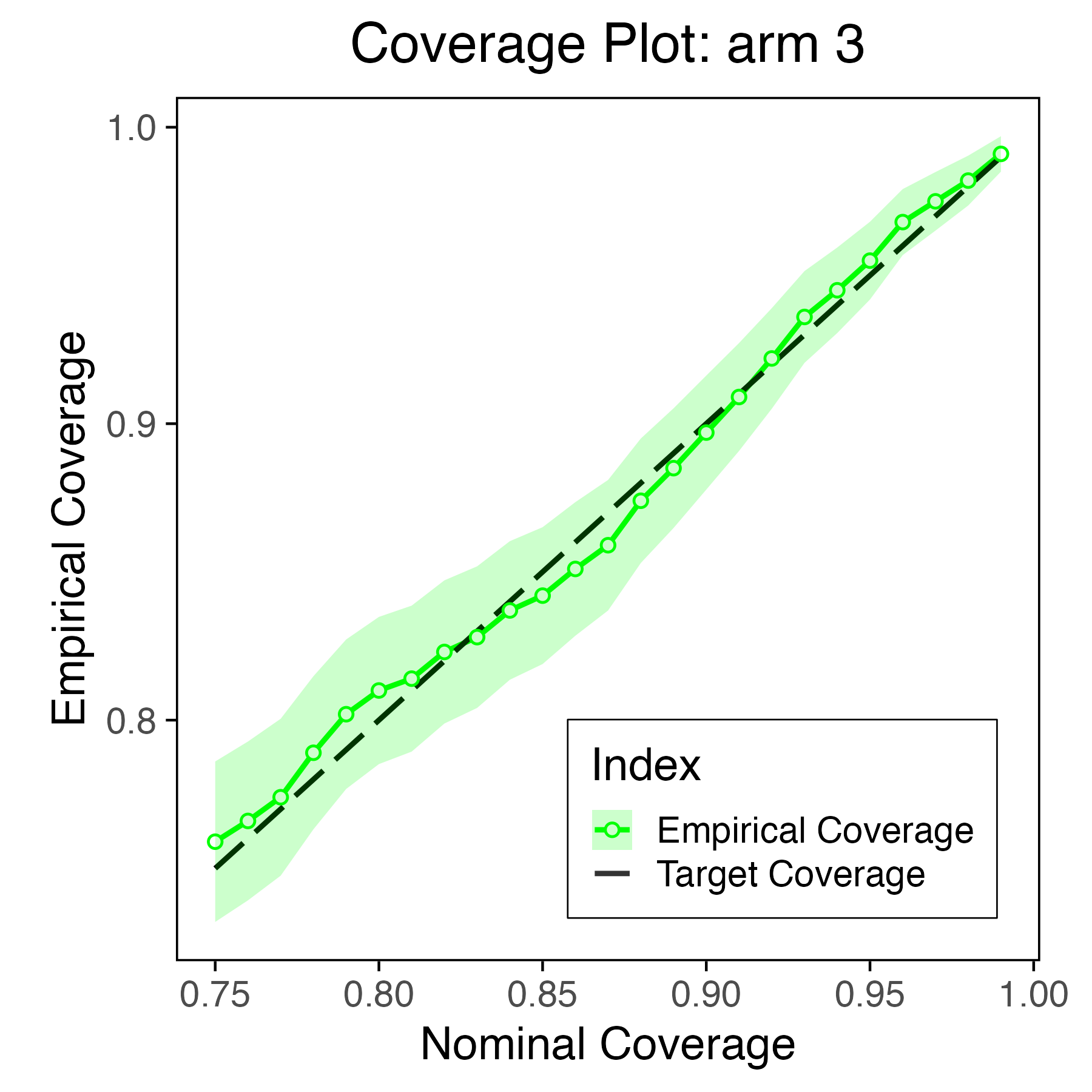}
    \end{minipage}
    \hfill
    \begin{minipage}[t]{0.24\textwidth}
        \includegraphics[width=\textwidth]{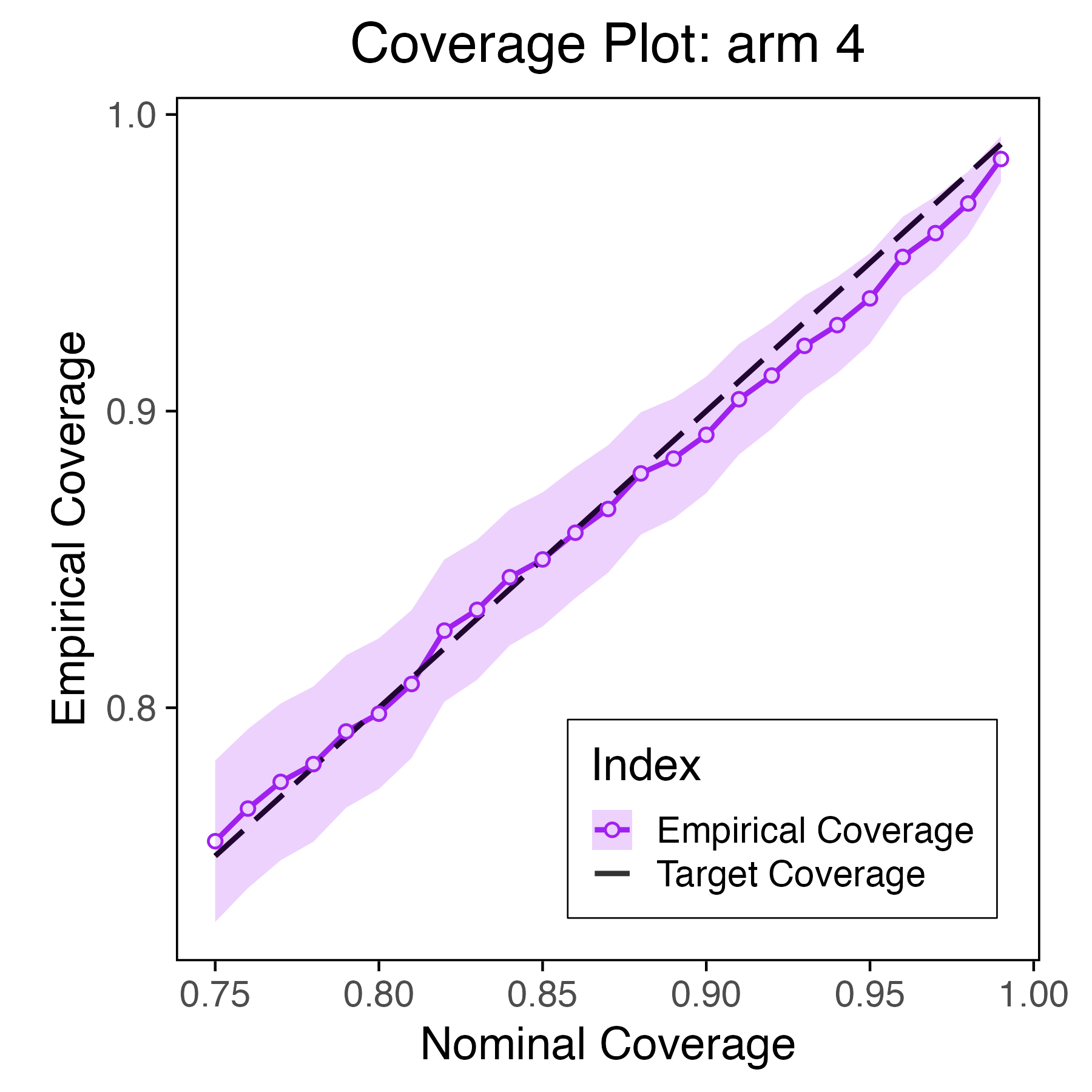}
    \end{minipage}
    \hfill
    \caption{
Empirical coverage probability of confidence intervals constructed using Stable Thompson Sampling, plotted against nominal confidence levels for each arm in a four-armed Gaussian bandit. Results are aggregated over $10,000$ independent trials, and shaded regions represent standard errors. The near alignment with the diagonal confirms the asymptotic validity of our inference procedure.
}
    \label{fig:cov-prob-sts}
\end{figure*}

We conclude the simulation section with coverage probability plots for each arm in Figure~\ref{fig:cov-prob-sts}. We evaluate a sequence of nominal confidence levels $(1 - \alpha)$ ranging from $0.75$ to $0.99$. The solid curves represent the empirical coverage probabilities for arms $1-4$, respectively, and the shaded regions denote the standard error across $10,000$ repetitions. The dashed black line marks the target nominal coverage level. Across the entire confidence range $(0.75, 0.99)$, each arm exhibits excellent agreement with the nominal levels, confirming the asymptotic validity of the confidence intervals constructed under Stable Thompson Sampling.

\section{Outline of Proof of Theorem~\ref{main-thm}}
\label{sec:outline-main-thm}
In this section, we provide a roadmap of the proof of Theorem~\ref{main-thm} for two-armed gaussian bandit. Throughout, we assume arm $1$ is optimal and the reward variance $\sigma^2=1$.
\subsection{Case 1: \texorpdfstring{$\Delta = 0$}{Delta = 0}}
\label{sec:Delta-zero}
Let us define $\UU_{t}:=\I\{A_t=1\}$ and $p_1\left(n_{1,t-1}\; ,t\right):=\E\left[\UU_{t}\; \big|\; \F_{t-1}\right]=\prob\left(A_{t}=1\; \big|\; \F_{t-1}\right)$.  For convenience, we denote $p_1\left(n_{1,t-1}\; ,t\right)$ as simply $p_{1,t}$. Rewrite $\frac{n_{1,T}}{T}=\frac{1+\sum_{t=3}^T \UU_t}{T}=\frac{1}{T}+\mathfrak{G}_1^{(T)}+\mathfrak{G}_2^{(T)}$ where $\mathfrak{G}_1^{(T)}=\frac{\sum_{t=3}^T \left(\UU_t-p_{1,t}\right)}{T}$ and $\mathfrak{G}_2^{(T)}=\frac{\sum_{t=3}^T p_{1,t}}{T}$. We prove that $\mathfrak{G}_1^{(T)}\inprob0$ and $\mathfrak{G}_2^{(T)}\inprob\frac{1}{2}$.

\noindent By Markov Inequality, for any $\varepsilon>0$, we have $\prob\left(\left|\frac{\sum_{t=3}^T \left(_t-p_{1,t}\right)}{T}\right|>\varepsilon\right)\le\frac{\E\left[\left[\sum_{t=3}^T \left(\UU_t-p_{1,t}\right)\right]^2\right]}{\varepsilon^2 T^2}$. Rewrite $\left[\sum_{t=3}^T \left(\UU_t-p_{1,t}\right)\right]^2=\left(\sum_{t=3}^T(\UU_t-p_{1,t})^2 \right)+2\left(\sum_{3\le i< j\le T}\left(\UU_i-p_{1,i}\right)\left(\UU_j-p_{1,j}\right)\right)$. Observe that, for $i<j$, $\E\left[\left(\UU_i-p_{1,i}\right)\left(\UU_j-p_{1,j}\right)\right]=\E\left[\left(\UU_i-p_{1,i}\right)\E\left[\left(\UU_j-p_{1,j}\right)\Big|\F_{j-1}\right]\right]=0$. This is because $p_{1,j}$ is $\F_{j-1}$-- measurable and hence, $\E\left[\left(\UU_j-p_{1,j}\right)\Big|\F_{j-1}\right]=\E\left[\UU_j\Big|\F_{j-1}\right]-p_{1,j}=0$. Therefore, $\prob\left(\left|\frac{\sum_{t=3}^T \left(\UU_t-p_{1,t}\right)}{T}\right|>\varepsilon\right)\le\frac{T-2}{\varepsilon^2 T^2}$. This proves $\mathfrak{G}_1^{(T)}\inprob0$.

\noindent Next, we prove $\mathfrak{G}^{(T)}_2\inprob1/2$ by showing that $p_{1,t}\inprob1/2$. Recalling the posterior sample at time $t$ is $\theta_{a,t}=\widehat{\mu}_{a,t-1}+\sqrt{\frac{\gamma_{T}}{n_{a,t-1}}}Z_{a,t}$ for $Z_{a,t}\overset{\text{i.i.d.}}{\sim}\NN(0,1)$ and letting $\widehat{\Delta}_t:=\widehat{\mu}_{1,t}-\widehat{\mu}_{2,t}$, we have
\begin{align*}
    p_{1,t}&=\prob\left(\widehat{\mu}_{1,t-1}+\sqrt{\frac{\gamma_{T}}{n_{1,t-1}}}Z_{1,t}>\widehat{\mu}_{2,t-1}+\sqrt{\frac{\gamma_{T}}{n_{2,t-1}}}Z_{2,t}\; \Big|\; \F_{t-1}\right)\\
    &=\prob\left(\sqrt{\frac{\gamma_{T}}{n_{2,t-1}}}Z_{2,t}-\sqrt{\frac{\gamma_{T}}{n_{1,t-1}}}Z_{1,t}<\widehat{\Delta}_{t-1}\; \Big|\; \F_{t-1}\right)\\
    &=\Phi\left(\widehat{\Delta}_{t-1}\sqrt{\frac{n_{1,t-1}\; n_{2,t-1}}{t\; \gamma_{T}}}\right)
\end{align*}
Now, consider the event $\EE^{(T)}:=\left\{\left|\widehat{\mu}_{j,t}-\mu_j\right|\le\sqrt{\frac{7\log(\log(T))}{n_{j,t}}}\; \forevery\; t\in[T],\; \forevery\; j=1,2\right\}$. On $\EE^{(T)}$, for all $t\in[T]$, $\left|\widehat{\Delta}_{t-1}\right|\sqrt{\frac{n_{1,t-1}\; n_{2,t-1}}{t\; \gamma_{T}}}\le\Big[\left|\widehat{\mu}_{1,t-1}-\mu_1\right|+\left|\widehat{\mu}_{2,t-1}-\mu_2\right|\Big]\sqrt{\frac{n_{1,t-1}\; n_{2,t-1}}{t\; \gamma_{T}}}\le \sqrt{7\log(\log(T))} \left(\frac{\sqrt{n_{1,t-1}}+\sqrt{n_{2,t-1}}}{t\; \gamma_T}\right)\le\sqrt{\frac{14\log(\log(T))}{\gamma_T}}$. Therefore, on the event $\EE^{(T)}$, for all $t\in[T]$, we have
\begin{align*}
    \Phi\left(-\sqrt{\frac{14\log(\log(T))}{\gamma_T}}\right)\le p_{1,t}\le \Phi\left(\sqrt{\frac{14\log(\log(T))}{\gamma_T}}\right)
\end{align*}
and hence
\begin{align}
    \label{eq:p2-bdd-delta=0}
    \frac{T-2}{T}\cdot\Phi\left(-\sqrt{\frac{14\log(\log(T))}{\gamma_T}}\right)\le \frac{\sum_{t=3}^T p_{1,t}}{T}\le\frac{T-2}{T}\cdot \Phi\left(\sqrt{\frac{14\log(\log(T))}{\gamma_T}}\right)
\end{align}
Additionally, for $t\in[T]$ and large $T$, $3\log\left(\log\left(2n_{j,t}\right)\right)+ 3\log\left(\log\left(T\right)\right) \le 7\log(\log(T))$. For $\EE_4^{(T)}$ defined in equation~\eqref{eqn:hp-events}, Lemma \ref{lem:sets-whp} yields $\prob\left(\EE^{(T)}\right) \ge\prob\left(\EE_4^{(T)}\right)\ge1-\frac{2}{\log(T)}$. Putting together the pieces, we conclude $\mathfrak{G}^{(T)}_2\inprob1/2$.
\subsection{Case 2:  \texorpdfstring{$\Delta > 0$}{Delta > 0}}
\label{sec:Delta-positive-main}
The key idea is to investigate the behavior of the waiting times between successive pulls of the sub-optimal arm. To do this, let $\TT_{2,k}$ denote the time of playing arm $2$ for the $k^{\text{th}}$ time with $\TT_{2,1}=1$. Let $\tau_2(0)=1$ and for $j\ge1$, define $\tau_2(j):=\TT_{2,j+1}-\TT_{2,j}$.\\
After we have reached round $T$, we continue to pull the arms following Algorithm~\ref{alg:Modified-Thompson-sampling} until $\TT_{2,n_{2,T}+1}$--th round. Note that $\TT_{2,n_{2,T}+1}$ is the first time we pull arm $2$ after time $T$. We start by observing the following relation 
\begin{align}
        \label{eq:main-thm-renewal-bdd-2}
        \frac{\gamma_T \log\left(\sum_{j=0}^{n_{2,T}-1}\tau_2(j)\right)}{n_{2,T}}\le\frac{\gamma_T\log(T)}{n_{2,T}}\le\frac{\gamma_T \log\left(\sum_{j=0}^{n_{2,T}}\tau_2(j)\right)}{n_{2,T}}.
\end{align}
It now suffices to show that both the right and left hand side of the above string of inequalities~\eqref{eq:main-thm-renewal-bdd-2} converge to $\frac{\Delta^2}{2}$ in probability, i.e., 
\begin{align}
    \label{eq:main-thm-bdd-limits}
    \frac{\gamma_T \log\left(\sum_{j=0}^{n_{2,T}-1}\tau_2(j)\right)}{n_{2,T}} \inprob \frac{\Delta^2}{2} \qquad \text{and} \qquad 
    \frac{\gamma_T \log\left(\sum_{j=0}^{n_{2,T}}\tau_2(j)\right)}{n_{2,T}} \inprob \frac{\Delta^2}{2}. 
\end{align}
Unfortunately, it is difficult to analyze the waiting time $\tau_2(j)$ directly, instead, we approximate the waiting time $\tau_2(j)$ by a \textit{proxy} $\widetilde{\tau}_2(j)$. To prove~\eqref{eq:main-thm-bdd-limits}, we show that for $n\in\left\{n_{2,T},n_{2,T}+1\right\}$,
\begin{subequations}
    \begin{align}
        \label{eq:conv-tau-tilde}
        \frac{\gamma_T \log\left(\sum_{j=0}^{n-1}\widetilde{\tau}_2(j)\right)}{n_{2,T}} \inprob \frac{\Delta^2}{2} \qquad{\text{and}}
    \end{align}
    \begin{align}
        \label{eq:conv-tau-tilde-&-tau-diff}
        \frac{\gamma_T \log\left(\sum_{j=0}^{n-1}\tau_2(j)\right)}{n_{2,T}}-\frac{\gamma_T \log\left(\sum_{j=0}^{n-1}\widetilde{\tau}_2(j)\right)}{n_{2,T}}\inprob  0  
    \end{align}
\end{subequations}
We do so by proving that under events with \emph{high probability}, $n_{2,t}$ is large when $t$ is large and, as a consequence, we prove $\widetilde{\tau}_2(j)$ is a \emph{good} approximation of $\tau_2(j)$. The complete details of the proof are given in Appendix~\ref{sec:proof-main-thm-app}.
\section{Discussion}
This work takes a step toward bridging the gap between adaptive decision-making and valid statistical inference in bandit settings. We introduced a stabilized variant of Thompson Sampling, called Stable Thompson Sampling, and established its stability and asymptotic normality in the canonical two-armed Gaussian bandit problem. Our theoretical results, supported by empirical evidence, demonstrate that a modest variance inflation enables confidence intervals with nominal coverage while preserving near-optimal regret performance. 

When extended to a general $K$-- armed bandit problem, the Stable Thompson Sampling algorithm retains favorable performance: Proposition~\ref{prop:regret-bd} shows that regret increases only logarithmically relative to standard Thompson Sampling. Our current analysis assumes Gaussian reward distributions. It would be natural to ask whether similar stability and inferential guarantees can be established under more general reward models.

Ultimately, our work underscores that statistical inference and decision-making need not be at odds: with careful algorithmic design, randomized posterior-based methods -- such as Thompson Sampling -- can be stabilized to provide valid confidence guarantees while maintaining strong learning performance. We view this as an initial step toward extending such guarantees to broader classes of adaptive algorithms.

\bibliographystyle{plain}
\bibliography{ref}

\newpage
\appendix
\label{sec:Appendix}

\section{Proof of Main Theorem}
\label{sec:proof-main-thm-app}
In Section~\ref{sec:Delta-positive-main}, we provided a roadmap of the proof of Theorem~\ref{main-thm} for two-armed gaussian bandit with identical arms. We now provide the full proof building on the preceding outline.
\subsection{\texorpdfstring{$\Delta > 0$}{}}
\label{sec:Delta-positive}
Recall, $\TT_{2,k}$ denotes the time of playing arm $2$ for the $k^{\text{th}}$ time with $\TT_{2,1}=1$, $\tau_2(0)=1$ and for $j\ge1$, we defined $\tau_2(j)=\TT_{2,j+1}-\TT_{2,j}$. Let
\begin{align}
    p_2\left(n_{2,t}\; ,t+1\right) &:=\prob\left(A_{t+1}=2\; \Big|\; \F_t\right)\nonumber\\
    &=\prob\left(\widehat{\mu}_{2,t}+\sqrt{\frac{\gamma_T}{n_{2,t}}}Z_{2,t+1}>\widehat{\mu}_{1,t}+\sqrt{\frac{\gamma_T}{n_{1,t}}}Z_{1,t+1}\; \Big|\; \F_t\right)\nonumber\\
    &=\prob\left(V_2(t+1)<B_2\left(n_{2,t},t+1\right)\; \Big|\; \F_t\right)\label{arm-sel-prob}
\end{align}
where $V_2(t+1):=1-\Phi(Z_{2,t+1})$ follows $\text{Unif}(0,1)$ and
\begin{align}
    \label{actual-prob}
    B_2\left(n_{2,t}\; ,t+1\right)&:=1-\Phi\left(\sqrt{n_{2,t}}\left[\frac{\widehat{\Delta}_t}{\sqrt{\gamma_T}}+\frac{Z_{1,t+1}}{\sqrt{n_{1,t}}}\right]\right)
\end{align}
Here, $\widehat{\Delta}_t:=\widehat{\mu}_{1,t}-\widehat{\mu}_{2,t}$. Suppose arm $2$ has been played $j$ times. Consider the random variables $\xi_2(j,i):= V_2(\TT_{2,j}+i)$, $1\le i\le\tau(j)$. By construction, $\left\{\xi_2(j,i)\right\}$ are independent and distributed according to $\text{Unif}(0,1)$. Define
\begin{align}
    \widetilde{p}_2(j)&:=\exp\left\{-j\cdot\frac{\Delta^2}{2\gamma_T}\right\}\label{prob-approx}\\
    \widetilde{\tau}_2(j)&:=\inf\{i\ge1:\xi_2(j,i)<\widetilde{p}_2(j)\}\label{time-approx}
\end{align}
\subsubsection*{Proof of claim~\eqref{eq:conv-tau-tilde}:}
We have
    \begin{align}
        \log\left(\sum_{j=0}^{n-1}\widetilde{\tau}_2(j)\right)&= \log\left(\sum_{j=0}^{n-1}\exp\left\{j\frac{\Delta^2}{2\gamma_T}+\log\left(\widetilde{\tau}_2(j)\widetilde{p}_2(j)\right)\right\}\right)\nonumber \\
        &\le\max_{0\le j\le {n-1}}\left\{j\frac{\Delta^2}{2\gamma_T}+\log\left(\widetilde{\tau}_2(j)\widetilde{p}_2(j)\right)\right\}+\log(n)\nonumber\\
        &\le  \frac{n \Delta^2}{2\gamma_T} + 
        \max_{0\le j\le {n-1}}
        \left\{\log\left(\widetilde{\tau}_2(j)\widetilde{p}_2(j)\right) \right\}
        +\log(n)\label{eq:main-thm-ub}
    \end{align}
    To derive a lower bound on $\log\left(\sum_{j=0}^{n-1} \widetilde{\tau}_2(j)\right)$ we note 
    \begin{align}
        \log\left(\sum_{j=0}^{n-1} \widetilde{\tau}_2(j)\right)&\ge \max_{0\le j\le {n -1}}\left\{j\frac{\Delta^2}{2\gamma_T}+\log\left(\widetilde{\tau}_2(j)\widetilde{p}_2(j)\right)\right\}\nonumber\\
        &\ge \frac{(n - 1) \cdot \Delta^2}{2\gamma_T}+\log\left(\widetilde{\tau}_2({n-1})\widetilde{p}_2({n-1})\right)\label{eq:main-thm-lb}
    \end{align}
    The above two inequalities use the bounds in \eqref{lem:log-exp-bdd}. Additionally, using Lemma~\ref{lem:geom-conv-0-a} and \ref{lem:geom-conv-0-b}, it follows that
    \begin{align}
    \label{eq:log-product-term}
        \frac{ \max \limits_{ 0 \leq j \leq n - 1} \left\{ \log\left(\widetilde{\tau}_2(j)\widetilde{p}_2(j)\right) \right\} }{\sqrt{n}} \inprob 0,\quad \frac{ \log\left(\widetilde{\tau}_2(n-1)\widetilde{p}_2(n-1)\right)}{\sqrt{n}} \inprob 0 \qquad \text{as} \;\; n \rightarrow \infty. 
    \end{align}
    Combining the relations~~\eqref{eq:main-thm-ub}, \eqref{eq:main-thm-lb} and~\eqref{eq:log-product-term} we deduce
    \begin{align}
       \frac{n - 1}{n} \cdot \frac{\gap^2}{2}
       +  \frac{\gamma_T \cdot \log\left(\widetilde{\tau}_2({n-1})\widetilde{p}_2({n-1})\right)}{n}  &\le  \frac{\gamma_T \cdot \log\left(\sum_{j=0}^{n-1} \widetilde{\tau}_2(j)\right) }{n}\nonumber\\
       &\le \frac{\Delta^2}{2} + \frac{ \gamma_T \cdot\max \limits_{ 0 \leq j \leq n - 1} \left\{ \log\left(\widetilde{\tau}_2(j)\widetilde{p}_2(j)\right) \right\} }{n}\label{eqn:combined-ineq}
    \end{align}
    Finally, we show that $n_{2, T}$ is large with probability converging to one. To this end, define the event $\EE^{(T)}_1:=\left\{n_{2,T}\ge\frac{\log(T)}{2\Delta^2}\right\}$. Using the regret bound from Proposition~\ref{prop:regret-bd} with \cite[\textsc{Theorem 2}]{lairobbins}, we deduce that $\lim_{T\to\infty}\prob\left(\EE^{(T)}_1\right)=1$.
    Claim~\eqref{eq:conv-tau-tilde} now follows from~\eqref{eqn:combined-ineq} by noting that $\frac{\gamma_T}{\sqrt{\log T}} \rightarrow 0$ by condition~\eqref{gamma-cond}. 
\subsubsection*{Proof of claim~\eqref{eq:conv-tau-tilde-&-tau-diff}:}
We prove this claim in two steps --- first we approximate  $p_2\left(n_{2,t},t+1\right)$ by $\widetilde{p}_2\left(n_{2,t}\right)$ for large values of $t$. Throughout we assume that the horizon length $T$ is appropriately large.  To do this, let $\varepsilon\in\left(0,\frac{\Delta^2}{2}\right)$ and define
    \begin{align}
        \widetilde{p}_2^{\; +}(j)&:=\exp\left\{-\frac{j}{\gamma_T}\left(\frac{\Delta^2}{2}+\varepsilon\right)\right\}\label{eq:p-tilde-plus}\\
        \widetilde{\tau}_2^{\; +}(j)&:=\inf\{i\ge1:\xi_2(j,i)<\widetilde{p}_2^{\; +}(j)\}\nonumber\\
        \widetilde{p}_2^{\; -}(j)&:=3\exp\left\{-\frac{j}{\gamma_T}\left(\frac{\Delta^2}{2}-\varepsilon\right)\right\}\label{eq:p-tilde-minus}\\
        \widetilde{\tau}_2^{\; -}(j)&:=\inf\{i\ge1:\xi_2(j,i)<\widetilde{p}_2^{\; -}(j)\}\nonumber
    \end{align}
    Additionally, let us consider the following events:
    \begin{align}
        \EE^{(T)}_2&:=\left\{n_{2,t}\ge\frac{\sqrt{\log(T)}}{4\Delta^2}\; \forevery\; \exp\left\{\sqrt{\log(T)}\right\}\le t\le \TT_{2,n_{2,T}+1}\right\}
        \notag \\
        \EE^{(T)}_3&:=\left\{n_{2,t}\le\left(\log(T)\right)^2\; \forevery\; \exp\left\{\sqrt{\log(T)}\right\}\le t\le \TT_{2,n_{2,T}+1}\right\}\notag \\
        \EE^{(T)}_4&:=\left\{\left|\widehat{\mu}_{j,t}-\mu_j\right|\le\sqrt{\frac{3\log\left(\log\left(2n_{j,t}\right)\right)+3\log\left(\log\left(T\right)\right)}{n_{j,t}}}\; \forevery\; t\ge1,\; \forevery\; j=1,2\right\}
        \label{eqn:hp-events}
    \end{align}
    In Lemma \ref{lem:sets-whp} in Appendix~\ref{Proof of Key Lemmas}, we prove that each of these events hold with \textit{high probability}, in the sense that, $\prob\left(\EE_i^{(T)}\right)\to1$ as $T\to\infty$ for $i=2,3,4$.\\
    
\noindent  
  Next, we prove that on the high-probability event $\displaystyle\bigcap_{i=1}^4\EE_i^{(T)}$ the following sandwich relation holds:
    \begin{align}
        \label{eq:prob-approx-bdds}
        \widetilde{p}_2^{\; +}(n_{2,t})\le p_2(n_{2,t},t+1)\le\widetilde{p}_2^{\; -}(n_{2,t}) \qquad  \forevery \exp\left\{\sqrt{\log(T)}\right\} \leq t \leq \TT_{2,n_{2,T}+1}. 
    \end{align}
    
    \noindent Note that, for all $t\ge\exp\left\{\sqrt{\log(T)}\right\}$, the following holds
    \begin{align*}
        n_{1,t}=t-n_{2,t}\ge\exp\left\{\sqrt{\log(T)}\right\}-(\log(T))^2\ge (\log(T))^3
    \end{align*}
    Let $\varepsilon'>0$. First, we prove that $\left|\widehat{\Delta}_t-\Delta\right|$ can be made \textit{small}.
    \begin{align}
        \left|\widehat{\Delta}_t-\Delta\right|&\le\left|\widehat{\mu}_{1,t}-\mu_1\right|+\left|\widehat{\mu}_{2,t}-\mu_2\right|\nonumber\\
        &\le \sqrt{\frac{3\log\left(\log\left(2n_{1,t}\right)\right)+3\log\left(\log\left(T\right)\right)}{n_{1,t}}} +\sqrt{\frac{3\log\left(\log\left(2n_{2,t}\right)\right)+3\log\left(\log\left(T\right)\right)}{n_{2,t}}}\label{eq:delta-bbd}
    \end{align}
    We bound the first term in the following manner
    \begin{align*}
        \frac{3\log\left(\log\left(2n_{1,t}\right)\right) +3\log\left(\log\left(T\right)\right)}{n_{1,t}}&\le\frac{3\log\left(n_{1,t}\right)}{n_{1,t}}+\frac{3\log(\log(T))}{n_{1,t}}\\
        &\le\frac{3\log\left(\left(\log(T)\right)^3\right)}{\left(\log(T)\right)^3}+\frac{3\log\left(\log(T)\right)}{\left(\log(T)\right)^3}
    \end{align*}
    The last inequality uses the result that $\log(x)/x$ is decreasing for $x>e$ and $n_{1,t}>\left(\log(T)\right)^3>e$ for all large $T$. Therefore, for all sufficiently large $T$, $\frac{3\log\left(\left(\log(T)\right)^3\right)}{\left(\log(T)\right)^3}+\frac{3\log\left(\log(T)\right)}{\left(\log(T)\right)^3}\le\left(\frac{\varepsilon'}{4}\right)^2$.\\
    
    \noindent The bound for the second term is rather straightforward
    \begin{align*}
        \frac{3\log\left(\log\left(2n_{2,t}\right)\right)+ 3\log\left(\log\left(T\right)\right)}{n_{2,t}}&\le\frac{3\log\left(\log\left(2T\right)\right)+3\log\left(\log\left(T\right)\right)}{\frac{\sqrt{\log(T)}}{4\Delta_2^2}}
    \end{align*}
    For all large $T$, this is bounded by $<\left(\frac{\varepsilon'}{4}\right)^2$.\\
    Thus, we conclude that for all large $T$, for all $\exp\left\{\sqrt{\log(T)}\right\}\le t\le \TT_{2,n_{2,T}+1}$,
    \begin{align}
        \label{eq:delta-hat-close}
        \Delta-\frac{\varepsilon'}{2}\le \widehat{\Delta}_t\le\Delta+\frac{\varepsilon'}{2}
    \end{align}

    \noindent Now, consider the event $D_{t+1}:=\left\{\sqrt{\frac{\gamma_T}{n_{1,t}}}\cdot\left|Z_{1,t+1}\right|\le\frac{\varepsilon'}{2}\right\}$. On $D_{t+1}\bigcap \F_{t}$, we have
    \begin{align}
        1-\Phi\left(\sqrt{\frac{n_{2,t}}{\gamma_T}}\cdot(\Delta+\varepsilon')\right)&\le1-\Phi\left(\sqrt{\frac{n_{2,t}}{\gamma_T}}\cdot\left[\widehat{\Delta}_t+\sqrt{\frac{\gamma_T}{n_{1,t}}}\cdot Z_{1,t+1}\right]\right)\nonumber\\
        &\le1-\Phi\left(\sqrt{\frac{n_{2,t}}{\gamma_T}}\cdot(\Delta-\varepsilon')\right)\label{eq:p2-bdd-dt}
    \end{align}
    Next, $\prob\left(D_{t+1}^{\text{c}}\right)=2\Phi\left(-\sqrt{\frac{n_{1,t}}{\gamma_T}}\cdot\frac{\varepsilon'}{2}\right)$. For all $t\ge\exp\left\{\sqrt{\log(T)}\right\}$, observe that $\frac{n_{1,t}}{n_{2,t}}\ge\log(T)$. Thus, for all large $T$,
    \begin{align}
        \label{eq:n1-smaller-ord-than-n2}
        \prob\left(D_{t+1}^{\text{c}}\right)\le 2\Phi\left(-\sqrt{\frac{n_{2,t}\log(T)}{\gamma_T}}\cdot\frac{\varepsilon'}{2}\right)\le 2\Phi\left(-\sqrt{\frac{n_{2,t}}{\gamma_T}}\cdot(\Delta-\varepsilon')\right)
    \end{align}
    Recall $B_2(n_{2,t}\; ,t+1) =1-\Phi\left(\sqrt{\frac{n_{2,t}}{\gamma_T}}\cdot\left[\widehat{\Delta}_t+\sqrt{\frac{\gamma_T}{n_{1,t}}}\cdot Z_{1,t+1}\right]\right)$ defined in \eqref{actual-prob}. From \eqref{eq:p2-bdd-dt} and \eqref{eq:n1-smaller-ord-than-n2}, it follows
    \begin{align*}
        1-\Phi\left(\sqrt{\frac{n_{2,t}}{\gamma_T}}\cdot(\Delta+\varepsilon')\right)&\le \prob\left(V_2(t+1)\le B_2(n_{2,t}\; ,t+1),\; D_{t+1}\; \Big|\; \F_t\right) \\
        &\le\prob\left(V_2(t+1)\le B_2(n_{2,t}\; ,t+1)\; \Big|\; \F_t\right)\\
        &=p_2\left(n_{2,t},t+1\right)\\
        &\le\prob\left(V_2(t+1)\le B_2(n_{2,t}\; ,t+1),\; D_{t+1}\; \Big|\; \F_t\right)+\prob\left(D_{t+1}^{\text{c}}\right)\\
        &\le \left[1-\Phi\left(\sqrt{\frac{n_{2,t}}{\gamma_T}}\cdot(\Delta-\varepsilon')\right)\right]+2\Phi\left(-\sqrt{\frac{n_{2,t}}{\gamma_T}}\cdot(\Delta-\varepsilon')\right)\\
        &=3\left[1-\Phi\left(\sqrt{\frac{n_{2,t}}{\gamma_T}}\cdot(\Delta-\varepsilon')\right)\right]
    \end{align*}
    All we are left to prove is $1-\Phi\left(\sqrt{\frac{n_{2,t}}{\gamma_T}}\cdot(\Delta+\varepsilon')\right)\ge \widetilde{p}_2^{\; +}(n_{2,t})$ and $3\left[1-\Phi\left(\sqrt{\frac{n_{2,t}}{\gamma_T}}\cdot(\Delta-\varepsilon')\right)\right]\le \widetilde{p}_2^{\; -}(n_{2,t})$. By condition~\eqref{gamma-cond}, $\frac{n_{2,t}}{\gamma_T}$ can be made arbitrarily large. The proof of \eqref{eq:prob-approx-bdds} is established by taking $\varepsilon'>0$ sufficiently small and applying inequality \eqref{lem:Phi-&-phi-bdd}.\\
    
     \noindent Since $\widetilde{\tau}_2(j)$, $\widetilde{\tau}_2^{\; +}(j)$ and $\widetilde{\tau}_2^{\; -}(j)$ are defined using a common set of random variables $\xi(j,i)$, we have almost surely for all $j$,
    \begin{align}
        \label{eq:order-tau-tilde}
        \widetilde{\tau}_2^{\; -}(j)\le\widetilde{\tau}_2(j)\le\widetilde{\tau}_2^{\; +}(j)
    \end{align}
    Furthermore, \eqref{eq:prob-approx-bdds} implies for all $t\ge\exp\left\{\sqrt{\log(T)}\right\}$
    \begin{align}
        \label{eq:order-tau}
        \widetilde{\tau}_2^{\; -}\left(n_{2,t}\right)\le\tau_2\left(n_{2,t}\right)\le\widetilde{\tau}_2^{\; +}\left(n_{2,t}\right)
    \end{align}
    Let $S_T:=\left\{j\ge0:\tau_2(0)+\ldots+\tau_2(j)\le \exp\left\{\sqrt{\log(T)}\right\}\right\}$ and $S_T^{\text{c}}:=\{0,1,\ldots,n_{2,T}\}\backslash S_T$. $\tau_2(0)=1\le \exp\left\{\sqrt{\log(T)}\right\}$ and $\sum_{j=0}^{n_{2,T}}\tau_2(j)>T>\exp\left\{\sqrt{\log(T)}\right\}$ imply $0\in S_T$ and $n_{2,T}\in S_T^{\text{c}}$ respectively.
    
    \begin{align*}
        \log\left(\sum_{j=0}^{n_{2,T}}\tau_2(j)\right)&= \log\left(\sum_{j\in S_T}\tau_2(j)+\sum_{j\in S_T^{\text{c}}}\tau_2(j)\right)\\
        &\le\log\left(\sum_{j\in S_T}\tau_2(j)\right)+\log\left(\sum_{j\in S_T^{\text{c}}}\tau_2(j)\right)\\
        &\le\sqrt{\log(T)}+\log\left(\sum_{j\in S_T^{\text{c}}}\widetilde{\tau}_2^{\; +}(j)\right)-\log\left(\sum_{j\in S_T^{\text{c}}}\widetilde{\tau}_2^{\; -}(j)\right)+\log\left(\sum_{j=0}^{n_{2,T}}\widetilde{\tau}_2(j)\right)
    \end{align*}
    The last inequality follows from \eqref{eq:order-tau-tilde}, \eqref{eq:order-tau} and the definition of $S_T$. Applying \eqref{lem:log-exp-bdd}, we have
    \begin{align}
        0&\le \log\left(\sum_{j\in S_T^{\text{c}}}\widetilde{\tau}_2^{\; +}(j)\right)-\log\left(\sum_{j\in S_T^{\text{c}}}\widetilde{\tau}_2^{\; -}(j)\right)\nonumber\\
        &\le\max_{j\in S_T^{\text{c}}}\left\{\frac{j}{\gamma_T}\left(\frac{\Delta^2}{2}+\varepsilon\right)+\log\left(\widetilde{\tau}_2^{\; +}(j)\widetilde{p}_2^{\; +}(j)\right)\right\}+\log\left(n_{2,T}\right)\nonumber\\
        &\hspace{4cm}-\max_{j\in S_T^{\text{c}}}\left\{\frac{j}{\gamma_T}\left(\frac{\Delta^2}{2}-\varepsilon\right)+\log\left(3^{-1}\widetilde{\tau}_2^{\; -}(j)\widetilde{p}_2^{\; -}(j)\right)\right\}\nonumber\\
        &\le\frac{n_{2,T}}{\gamma_T}\left(\frac{\Delta^2}{2}+\varepsilon\right)+\max_{j\in S_T^{\text{c}}}\log\left(\widetilde{\tau}_2^{\; +}(j)\widetilde{p}_2^{\; +}(j)\right)+\log\left(n_{2,T}\right)\nonumber\\
        &\hspace{4cm}-\frac{n_{2,T}}{\gamma_T}\left(\frac{\Delta^2}{2}-\varepsilon\right)-\log\left(\widetilde{\tau}_2^{\; +}\left(n_{2,T}\right)\widetilde{p}_2^{\; +}\left(n_{2,T}\right)\right)+\log(3)\nonumber\\
        &\le\frac{n_{2,T}}{\gamma_T}\cdot 2\varepsilon+\max_{j\in S_T^{\text{c}}}\log\left(\widetilde{\tau}_2^{\; +}(j)\widetilde{p}_2^{\; +}(j)\right)+\log\left(n_{2,T}\right)\nonumber\\
        &\hspace{4cm}-\log\left(\widetilde{\tau}_2^{\; +}\left(n_{2,T}\right)\widetilde{p}_2^{\; +}\left(n_{2,T}\right)\right)+\log(3)\label{eq:useful-bdd}
    \end{align}
    Therefore,
    \begin{align}
        &\frac{\gamma_T \log\left(\sum_{j=0}^{n_{2,T}}\tau_2(j)\right)}{n_{2,T}}-\frac{\gamma_T \log\left(\sum_{j=0}^{n_{2,T}}\widetilde{\tau}_2(j)\right)}{n_{2,T}}-\nonumber\\
        \le&\; \frac{\gamma_T\sqrt{\log(T)}}{n_{2,T}}+2\varepsilon+\frac{\gamma_T}{\sqrt{n_{2,T}}}\cdot\frac{\max_{j\in S_T^{\text{c}}}\log\left(\widetilde{\tau}_2^{\; +}(j)\widetilde{p}_2^{\; +}(j)\right)}{\sqrt{n_{2,T}}}+\frac{\gamma_T\log\left(n_{2,T}\right)}{n_{2,T}}\nonumber\\
        &\hspace{4cm}-\frac{\gamma_T}{\sqrt{n_{2,T}}}\cdot\frac{\log\left(\widetilde{\tau}_2^{\; +}\left(n_{2,T}\right)\widetilde{p}_2^{\; +}\left(n_{2,T}\right)\right)}{\sqrt{n_{2,T}}}+\frac{\gamma_T \log(3)}{n_{2,T}}
    \end{align}
    Since $n_{2,T}\left(\ge\frac{\log(T)}{2\Delta^2}\right)$ can be made arbitrarily large on $\displaystyle\bigcap_{i=1}^4\EE_i^{(T)}$, using Lemma \ref{lem:geom-conv-0-a} and \ref{lem:geom-conv-0-b}, it holds that for all large $T$
    \begin{align}
        \label{eq:tau-diff-ub}
        \frac{\gamma_T \log\left(\sum_{j=0}^{n_{2,T}}\tau_2(j)\right)}{n_{2,T}}-\frac{\gamma_T \log\left(\sum_{j=0}^{n_{2,T}}\widetilde{\tau}_2(j)\right)}{n_{2,T}}\le 3\varepsilon
    \end{align}
    We use a similar approach to establish the lower bound:
    \begin{align}
        &\log\left(\sum_{j=0}^{n_{2,T}}\tau_2(j)\right)-\log\left(\sum_{j=0}^{n_{2,T}}\widetilde{\tau}_2(j)\right)\nonumber\\
        \ge\;& \log\left(\sum_{j\in S_T^{\text{c}}}\tau_2(j)\right)-\log\left(\sum_{j\in S_T^{\text{c}}} \widetilde{\tau}_2(j)\right)-\log\left(\sum_{j\in S_T}\widetilde{\tau}_2(j)\right)\nonumber\\
        \ge\;&-\left[\log\left(\sum_{j\in S_T^{\text{c}}}\widetilde{\tau}_2^{\; +}(j)\right)-\log\left(\sum_{j\in S_T^{\text{c}}}\widetilde{\tau}_2^{\; -}(j)\right)\right]-\log\left(\sum_{j\in S_T}\widetilde{\tau}_2(j)\right)\label{eq:tau-diff-lb-addn-term}
    \end{align}
    To bound first term, we use \eqref{eq:useful-bdd}.\\
    Let us define $T^\ast:=\left\lfloor\exp\left\{\sqrt{\log(T)}\right\}\right\rfloor$. Since $\sum_{j=0}^{n_{2,t}} \tau_2(j)>t$ holds for any $t$, we conclude that $\max S_T< n_{2,T^\star}\le n_{2,T}$. Therefore, applying \eqref{lem:log-exp-bdd} on the second term in \eqref{eq:tau-diff-lb-addn-term}, we have
    \begin{align*}
        \frac{\gamma_T \log\left(\sum_{j\in S_T}\widetilde{\tau}_2(j)\right)}{n_{2,T}}&\le \frac{\displaystyle\gamma_T\left[ \max_{j\in S_T}\left\{\frac{j}{\gamma_T}\left(\frac{\Delta^2}{2}+\varepsilon\right)+\log\left(\widetilde{\tau}_2(j)\widetilde{p}_2(j)\right)\right\}+\log\left(n_{2,T}\right)\right]}{n_{2,T}}\\
        &\le \frac{n_{2,T^\star}}{n_{2,T}}\left(\frac{\Delta^2}{2}+\varepsilon\right)+\frac{\gamma_T}{\sqrt{n_{2,T}}}\cdot\frac{\max_{j\in S_T}\log\left(\widetilde{\tau}_2(j)\widetilde{p}_2(j)\right)}{\sqrt{n_{2,T}}}+\frac{\gamma_T\log\left(n_{2,T}\right)}{n_{2,T}}
    \end{align*}
    From \eqref{eq:regret-bd-sts}, we have
    \begin{align*}
        \E\left[n_{2,T^\star}\right]\le \frac{\kappa_1\log\left(\frac{T^\star\Delta^2}{\gamma_T}+\kappa_2\right)\gamma_T}{\Delta^2}+\frac{\kappa_3\gamma_T}{\Delta^2}
    \end{align*}
    The above bound indicates $\frac{\E\left[n_{2,T^\star}\right]}{\log(T)}$ converges to $0$ as $T\to\infty$.\\
    Recall the event $\EE^{(T)}_1=\left\{n_{2,T}\ge\frac{\log(T)}{2\Delta^2}\right\}$ introduced in the proof of claim \eqref{eq:conv-tau-tilde}. Since $\displaystyle\lim_{T\to\infty}\prob\left(\EE^{(T)}_1\right)=1$, this implies $\frac{n_{2,T^\star}}{n_{2,T}}\inprob0$. So, on $\displaystyle\bigcap_{i=1}^4\EE_i^{(T)}$,
    \begin{align}
        \label{eq:tau-diff-lb}
        \frac{\gamma_T \log\left(\sum_{j=0}^{n_{2,T}}\tau_2(j)\right)}{n_{2,T}}-\frac{\gamma_T \log\left(\sum_{j=0}^{n_{2,T}}\widetilde{\tau}_2(j)\right)}{n_{2,T}}\ge -3\varepsilon
    \end{align}
    for large $T$. Thus, for arbitrary $\varepsilon>0$
    \begin{align*}
        &\limsup_{T\to\infty}\prob\left(\left|\frac{\gamma_T \log\left(\sum_{j=0}^{n_{2,T}}\tau_2(j)\right)}{n_{2,T}}-\frac{\gamma_T \log\left(\sum_{j=0}^{n_{2,T}}\widetilde{\tau}_2(j)\right)}{n_{2,T}}\right|>3\varepsilon\right)\\
        &\le\;\lim_{T\to\infty}\sum_{i=1}^4\left[ 1-\prob\left(\EE_i^{(T)}\right)\right]=0
    \end{align*}
    The other case $\frac{\gamma_T \log\left(\sum_{j=0}^{n_{2,T}-1}\tau_2(j)\right)}{n_{2,T}}-\frac{\gamma_T \log\left(\sum_{j=0}^{n_{2,T}-1}\widetilde{\tau}_2(j)\right)}{n_{2,T}}\inprob0$ follows similarly. This completes the proof for two-armed scenario with unique optimal arm.
\subsection{\texorpdfstring{$2$ optimal arms}{}}
\noindent We now prove the general case of Theorem~\ref{main-thm}. Assume $K\ge3$ and $\mu_1=\mu_2>\mu_a$ for all $a>2$, that is, arms $1$ and $2$ are optimal (the proof for unique optimal arm is similar and we skip it). The following lemma proves stability of the optimal arms in this setting.
\begin{lemma}
    \label{lem:opt-arm-lim-exten}
    For all $t\ge T^\star$, we have
    \begin{align*}
        \frac{n_{1,t}}{t}\inprob \frac{1}{2}\qquad\text{ and }\qquad\frac{n_{2,t}}{t}\inprob \frac{1}{2}
    \end{align*}
    as $T\to\infty$.
\end{lemma}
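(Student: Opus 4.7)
The plan is to adapt Case~1 of Section~\ref{sec:Delta-zero} to the $K$-armed setting. I would decompose
\begin{equation*}
\frac{n_{1,t}}{t}=\frac{1}{t}+\frac{1}{t}\sum_{s=K+1}^{t}\bigl(\I\{A_s=1\}-p_{1,s}\bigr)+\frac{1}{t}\sum_{s=K+1}^{t}p_{1,s},
\end{equation*}
where $p_{1,s}:=\prob(A_s=1\mid\F_{s-1})$. The middle term is a normalized sum of martingale differences with pairwise-orthogonal, uniformly bounded summands, so the Markov / second-moment estimate used in Section~\ref{sec:Delta-zero} shows it is $o_P(1)$ uniformly over $t\ge T^\star$. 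It therefore suffices to prove $\tfrac{1}{t}\sum_{s}p_{1,s}\inprob 1/2$; the analogous statement for arm~$2$ then follows by swapping labels and using exchangeability of the two optimal arms.

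To analyze $p_{1,s}$ I would sandwich it between two-arm competition probabilities,
\begin{equation*}
\prob(\theta_{1,s}>\theta_{2,s}\mid\F_{s-1})-\sum_{a>2}\prob(\theta_{a,s}\ge\theta_{1,s}\mid\F_{s-1})\le p_{1,s}\le\prob(\theta_{1,s}>\theta_{2,s}\mid\F_{s-1}),
\end{equation*}
and work on the intersection of the LIL-type event $\EE_4^{(T)}$ from~\eqref{eqn:hp-events} with the event $\{n_{a,s-1}\le C\gamma_T(\log T)^2\ \forall s\le T,\ a>2\}$, the latter following from Proposition~\ref{prop:regret-bd} and Markov's inequality applied to the monotone process $n_{a,\cdot}$. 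The upper sandwich equals $\Phi\bigl(\widehat{\Delta}_{s-1}\sqrt{n_{1,s-1}n_{2,s-1}/[(n_{1,s-1}+n_{2,s-1})\gamma_T]}\bigr)$, and applying $(\sqrt{a}+\sqrt{b})^2\le 2(a+b)$ together with the LIL bound reproduces the Case~1 estimate $\bigl|\widehat{\Delta}_{s-1}\bigr|\sqrt{n_{1,s-1}n_{2,s-1}/[(n_{1,s-1}+n_{2,s-1})\gamma_T]}\le\sqrt{14\log\log T/\gamma_T}\to 0$ by condition~\eqref{gamma-cond}, so the upper sandwich tends to $1/2$ uniformly, \emph{regardless of how the pulls are split between the two optimal arms}.

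For the correction, each $\prob(\theta_{a,s}\ge\theta_{1,s}\mid\F_{s-1})$ equals $\Phi\bigl((\widehat{\mu}_{a,s-1}-\widehat{\mu}_{1,s-1})/\sqrt{\gamma_T(1/n_{a,s-1}+1/n_{1,s-1})}\bigr)$. Provided $n_{1,s-1}\ge h(T):=C\log\log T/\Delta_{\min}^2$, the good event forces the numerator to be at most $-\Delta_a/2$ and the denominator to be $O(1/\sqrt{\log T})$ (using the regret-driven upper bound on $n_{a,s-1}$), making the tail polynomially small in $T$; summing over $a>2$ and $s\le T$ contributes $o(1)$ to the averaged drift.

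The hard part is the short regime $n_{1,s-1}<h(T)$, in which the union-bound correction collapses because the posterior variance of $\theta_{1,s}$ is so large that suboptimal samples compete nontrivially. I would manage it in two pieces: (a)~truncate the sum at a scale $s_0$ with $s_0/t\to 0$ but $s_0\to\infty$, so the block $[K+1,s_0]$ contributes at most $s_0/t=o(1)$ to the drift for \emph{any} values of $p_{1,s}$; and (b)~for $s>s_0$, show $\prob(n_{1,s-1}<h(T))\to 0$ by combining the label-exchange symmetry $n_{1,\cdot}\stackrel{d}{=}n_{2,\cdot}$, the constraint $n_{1,s-1}+n_{2,s-1}\ge s-1-\sum_{a>2}n_{a,s-1}$, and a concentration estimate on $n_{1,s-1}-n_{2,s-1}$ (whose distribution is symmetric around zero). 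Combining the four steps yields $\tfrac{1}{t}\sum_s p_{1,s}\inprob 1/2$, which completes the proof.
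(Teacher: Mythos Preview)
Your martingale decomposition and the upper sandwich $p_{1,s}\le\prob(\theta_{1,s}>\theta_{2,s}\mid\F_{s-1})\le\Phi\bigl(\sqrt{14\log\log T/\gamma_T}\bigr)$ coincide with the paper's argument. The divergence, and the gap, is in how you obtain the matching lower bound on $\tfrac{1}{t}\sum_s p_{1,s}$.

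Your threshold $h(T)=C\log\log T/\Delta_{\min}^2$ is too small to make the correction $\prob(\theta_{a,s}\ge\theta_{1,s}\mid\F_{s-1})$ negligible. With $n_{1,s-1}\approx h(T)$ one has $\gamma_T/n_{1,s-1}\asymp\gamma_T/\log\log T\to\infty$ by condition~\eqref{gamma-cond}, so the posterior standard deviation of $\theta_{1,s}$ explodes and the standardized argument of $\Phi$ tends to $0$; the tail probability is then $1/2+o(1)$, not polynomially small. The parenthetical ``denominator $O(1/\sqrt{\log T})$ (using the regret-driven upper bound on $n_{a,s-1}$)'' is in the wrong direction: an \emph{upper} bound on $n_{a,s-1}$ lower-bounds $1/n_{a,s-1}$ and hence enlarges the denominator. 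To make the correction small you would need $n_{1,s-1}\gg\gamma_T$, not merely $\gg\log\log T$. Separately, step~(b) invokes a ``concentration estimate on $n_{1,s-1}-n_{2,s-1}$'' that is never supplied; symmetry of the law of $n_1-n_2$ around zero does not by itself yield concentration (indeed, for \emph{unstabilized} TS the analogous quantity does not concentrate at all), and proving it here is essentially equivalent to the lemma you are trying to establish.

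The paper sidesteps the lower-bound difficulty entirely. It only proves the two \emph{upper} bounds $\tfrac{1}{t}\sum_s p_{1,s}\le\tfrac12+\alpha_T$ and $\tfrac{1}{t}\sum_s p_{2,s}\le\tfrac12+\alpha_T$ on the high-probability event, and then uses the identity $\sum_{a\in[K]}\tfrac{1}{t}\sum_s p_{a,s}=1-O(K/t)$ together with the regret bound (Proposition~\ref{prop:regret-bd}), which forces $\E\bigl[\tfrac{1}{t}\sum_s p_{a,s}\bigr]\to 0$ for each $a>2$. A short squeeze argument (Lemma~\ref{lem:bd-implies-conv}: if $X_n,Y_n,Z_n\ge 0$ sum to one, $X_n,Y_n\le\tfrac12+a_n$ with $a_n\downarrow 0$ on a high-probability event, and $\E[Z_n]\to 0$, then $X_n,Y_n\inprob\tfrac12$) then delivers both limits at once. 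This avoids any need to lower-bound $n_{1,s-1}$ or to control the individual corrections $\prob(\theta_a\ge\theta_1)$.
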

\begin{proof}
    First observe that for $j\in[K]$,
    \begin{align}
        \label{eq:mart-diff-0}
        \frac{n_{j,t}}{t}-\frac{\sum_{i=1}^t p_{j,i}^{(T)}}{t}\inprob0\qquad\text{as}\quad T\to\infty
    \end{align}
    where $p_{j,i}^{(T)}:=\prob\left(A_i=j\Big|\F_{i-1}\right)$.\\
    Recall $\EE^{(T)}:=\left\{\left|\widehat{\mu}_{j,t}-\mu_j\right|\le\sqrt{\frac{7\log(\log(T))}{n_{j,t}}}\; \forevery\; t\in[T],\; \forevery\; j=1,2\right\}$. On this event,
    \begin{align}
        p_{1,i}^{(T)}&=\prob\left(A_i=1\Big|\F_{i-1}\right)\nonumber\\
        &=\prob\left(\widehat{\mu}_{1,i-1}+\sqrt{\frac{\gamma_{T}}{n_{1,i-1}}}Z_{1,i}\ge \widehat{\mu}_{j,i-1}+\sqrt{\frac{\gamma_{T}}{n_{j,i-1}}}Z_{j,i}\; \forall\; j\in[K] \Big|\F_{i-1}\right)\nonumber\\
        &\le\prob\left(\widehat{\mu}_{1,i-1}+\sqrt{\frac{\gamma_{T}}{n_{1,i-1}}}Z_{1,i}\ge \widehat{\mu}_{2,i-1}+\sqrt{\frac{\gamma_{T}}{n_{2,i-1}}}Z_{2,i} \Big|\F_{i-1}\right)\nonumber\\
        &\le\Phi\left(\sqrt{\frac{14\log(\log(T))}{\gamma_T}}\right)\label{eq:upper-bd-cond-prob-exten}
    \end{align}
    The convergence result in \eqref{eq:mart-diff-0} and inequality~\eqref{eq:upper-bd-cond-prob-exten} follow from the argument in Section~\ref{sec:Delta-zero}.\\
    Now, observe that $\Phi\left(\sqrt{\frac{14\log(\log(T))}{\gamma_T}}\right)\downarrow1/2$ as $T\to\infty$. This implies, on $\EE^{(T)}$,
    \begin{align*}
        \frac{\sum_{i=1}^t p_{1,i}^{(T)}}{t}\le\frac{1}{2}+\alpha_T
    \end{align*}
    where $\alpha_T\downarrow0$ as $T\to\infty$. By similar arguments, we conclude, on $\EE^{(T)}$,
    \begin{align*}
        \frac{\sum_{i=1}^t p_{2,i}^{(T)}}{t}\le\frac{1}{2}+\alpha_T
    \end{align*}
    From the regret bound in Proposition~\ref{prop:regret-bd}, it follows $\E\left[\frac{n_{a,T}}{\log(T)^2}\right]\to0$ for all $a>2$ as $T\to\infty$. Since $t\ge T^\star\ge (\log T)^2$, using \eqref{eq:mart-diff-0} and \textit{Slutsky's Theorem}, we conclude for all $a>2$,
    \begin{align*}
        \E\left[\frac{\sum_{i=1}^t p_{a,i}^{(T)}}{t}\right]\to0
    \end{align*}
    as $T\to\infty$. By Lemma~\ref{lem:bd-implies-conv}, it follows, for $j=1,2$, we have $\frac{\sum_{i=1}^t p_{j,i}^{(T)}}{t}\inprob\frac{1}{2}$. Using \eqref{eq:mart-diff-0}, we reach the desired conclusion.
\end{proof}
\noindent Now, we show stability for suboptimal arms, WLOG, for arm $3$, that is, we will prove
\begin{align*}
    \frac{n_{3,T}}{\gamma_T\log(T)}\inprob\frac{2}{\Delta_3^2}
\end{align*}
To this end, let us define the following events:
\begin{align}
    \GG^{(T)}_1&:=\left\{n_{1,t},n_{2,t}\ge\frac{\exp\left\{\sqrt{\log(T)}\right\}}{3}\; \forevery \exp\left\{\sqrt{\log(T)}\right\}\le t\le \TT_{3,n_{3,T}+1}\right\}\notag\\
    \GG^{(T)}_2&:=\left\{n_{j,t}\ge\frac{\sqrt{\log(T)}}{4\Delta_j^2}\; \forevery \exp\left\{\sqrt{\log(T)}\right\}\le t\le \TT_{3,n_{3,T}+1},\; \forevery j\ge3\right\}\notag \\
    \GG^{(T)}_3&:=\left\{n_{j,t}\le\left(\log(T)\right)^2\; \forevery \exp\left\{\sqrt{\log(T)}\right\}\le t\le \TT_{3,n_{3,T}+1},\; \forevery j\ge3\right\}\notag \\
    \GG^{(T)}_4&:=\left\{\left|\widehat{\mu}_{j,t}-\mu_j\right|\le\sqrt{\frac{3\log\left(\log\left(2n_{j,t}\right)\right)+3\log\left(\log\left(T\right)\right)}{n_{j,t}}}\; \forevery t\ge1,\; \forevery j\in[K]\right\}
    \label{eqn:hp-events-extension}
\end{align}
An immediate consequence of Lemma~\ref{lem:opt-arm-lim-exten} is that $\prob\left(\GG^{(T)}_1\right)\to1$ as $T\to\infty$. For the rest of the events, $\prob\left(\GG^{(T)}_u\right)\to1$ as $T\to\infty$ can be proved by imitating the proof of Lemma~\ref{lem:sets-whp} in Section~\ref{Proof of Key Lemmas}, $u=2,3,4$.\\

\noindent Fix $\varepsilon\in\left(0,\frac{\Delta_3^2}{2}\right)$ and define
\begin{align*}
    p_3(n_{3,t},t+1)&=\prob\left(A_{t+1}=3\;\Big|\;\F_{t}\right)\\
    \widetilde{p}_2^{\; +}(n_{3,t})&=\exp\left\{-\frac{n_{3,t}}{\gamma_T}\left(\frac{\Delta^2}{2}+\varepsilon\right)\right\}\\
    \widetilde{p}_2^{\; -}(n_{3,t})&=3\exp\left\{-\frac{n_{3,t}}{\gamma_T}\left(\frac{\Delta^2}{2}-\varepsilon\right)\right\}
\end{align*}
Observe that it is sufficient to show that with probability tending to $1$,
\begin{align}
    \label{eq:prob-approx-bdds-exten}
    \widetilde{p}_3^{\; +}(n_{3,t})\le p_3(n_{3,t},t+1)\le\widetilde{p}_3^{\; -}(n_{3,t}) \qquad  \forevery \exp\left\{\sqrt{\log(T)}\right\} \leq t \leq \TT_{3,n_{3,T}+1}. 
\end{align}
This is because the rest of the proof can be established using a similar line of argument used in Section~\ref{sec:Delta-positive} (from equation~\eqref{eq:order-tau-tilde} onwards).\\
Let $\displaystyle\Delta^\star:=\min_{j\ge3}\Delta_j$ and $\displaystyle\Delta_0:=\max_{j\ge3}\Delta_j$. Select arbitrary $\displaystyle\delta\in\left(0,\Delta^\star\right)$. For $M_T:=\left[\frac{\sqrt{\log(T)}}{4\gamma_T\Delta_0^2}\right]^{\frac{1}{2}}$, let us consider the event
\begin{align}
    \label{eq:good-event-exten}
    \LL_{t+1}:=\left\{\left|\frac{Z_j(t+1)}{M_T}\right|\le\frac{\delta}{4}\;\forevery j\neq3\right\}
\end{align}
By condition~\eqref{gamma-cond}, it follows $M_T\to\infty$ and thus $\prob(\LL_{t})\ge\frac{1}{2}$ for all large $T$. Now, let $\theta_{j,t+1}:=\widehat{\mu}_{j,t}+\sqrt{\frac{\gamma_T}{n_{j,t}}}Z_{j}(t+1)$ denote the posterior sample corresponding to arm $j$ after $t$ rounds.
\begin{align*}
    p_3(n_{3,t},t+1)&=\prob\left(A_{t+1}=3\;\Big|\;\F_{t}\right)\\
    &=\prob\left(\theta_{3,t+1} >\theta_{j,t+1}\;\forevery j\neq3\;\Big|\;\F_{t}\right)\\
    &\le\prob\left(\theta_{3,t+1} >\theta_{1,t+1}\;\Big|\;\F_{t}\right)\\
    &\le \widetilde{p}_3^{\; -}(n_{3,t})
\end{align*}
where the last inequality is immediate from the two-armed scenario (see Section~\ref{sec:Delta-positive}).\\

\noindent Now, we focus on the lower bound. Observe that on the event $\bigcap_{u=1}^4 \GG^{(T)}_u$ and $\LL_{t+1}$, we have for all $j\ge3$,
\begin{align*}
    \widehat{\mu}_{j,t}+\sqrt{\frac{\gamma_T}{n_{j,t}}}Z_j(t+1)&\le \mu_j+\sqrt{\frac{3\log\log(2n_{j,t})+3\log\log(T)}{n_{j,t}}}+\sqrt{\frac{\gamma_T}{n_{j,t}}}Z_j(t+1)\\
    &\le \mu_j+\sqrt{\frac{3\log\log\left(\frac{\sqrt{\log(T)}}{2\Delta_j^2}\right)+3\log\log(T)}{\frac{\sqrt{\log(T)}}{4\Delta_j^2}}}+\sqrt{\frac{\gamma_T}{\frac{\sqrt{\log(T)}}{4\Delta_j^2}}}Z_j(t+1)\\
    &\le\mu_j+\frac{\delta}{4}+\frac{Z_j(t+1)}{M_T}\\
    &\le\mu_j+\frac{\delta}{2}\\
    &\le\mu_1-\frac{\delta}{2}\\
    &\le\mu_1-\sqrt{\frac{3\log\log(2n_{1,t})+3\log\log(T)}{n_{1,t}}}+\sqrt{\frac{\gamma_T}{n_{1,t}}}Z_1(t+1)\\
    &\le\widehat{\mu}_{1,t}+\sqrt{\frac{\gamma_T}{n_{1,t}}}Z_1(t+1)
\end{align*}
for all large $T$. Thus, it follows, for all large $T$, on the event $\bigcap_{u=1}^4 \GG^{(T)}_u$, we have
\begin{align*}
    &\prob\left(\theta_{3,t+1}> \theta_{j,t+1}\forevery\;j\neq3,\;\LL_{t+1}\Big|\;\F_{t}\right)\\
    \ge\;&\prob\left(\theta_{3,t+1} >\theta_{1,t+1},\;\theta_{2,t+1},\;\LL_{t+1}\Big|\;\F_{t}\right)\\
    \ge\;&\prob\left(\theta_{3,t+1} >\widehat{\mu}_{1,t}+\frac{\delta}{4},\;\widehat{\mu}_{2,t}+\frac{\delta}{4},\; \LL_{t+1} \Big|\;\F_{t}\right)\\
    \overset{(a)}{=}\;&\prob\left(\theta_{3,t+1} >\widehat{\mu}_{1,t}+\frac{\delta}{4},\;\widehat{\mu}_{2,t}+\frac{\delta}{4}\; \Big|\;\F_{t}\right)\times\prob\left(\LL_{t+1}\right)\\
    \ge\;&\prob\left(\theta_{3,t+1} >\widehat{\mu}_{1,t}+\frac{\delta}{4},\;\widehat{\mu}_{2,t}+\frac{\delta}{4}\; \Big|\;\F_{t}\right)\times\frac{1}{2}\\
    \ge\;&\frac{1}{2}\times\prob\left(\sqrt{\frac{\gamma_T}{n_{3,t}}}Z_3(t+1)\ge\widehat{\mu}_{1,t}-\widehat{\mu}_{3,t}+\frac{\delta}{4},\widehat{\mu}_{2,t}-\widehat{\mu}_{3,t}+\frac{\delta}{4}\; \Big|\;\F_{t}\right)
\end{align*}
The equality in $(a)$ holds because the distribution of $\theta_{3,t+1}$ given $\F_t$ is independent of $\LL_{t+1}$.\\
Now, note that for $j=1,2$, on $\GG_1^{(T)}\bigcap \GG_2^{(T)} \bigcap \GG_4^{(T)}$, it holds
\begin{align*}
    \left|\widehat{\mu}_{j,t}-\widehat{\mu}_{3,t}\right|&\le \mu_1-\mu_3+\sqrt{\frac{3\log\left(\log\left(2n_{j,t}\right)\right)+3\log\left(\log\left(T\right)\right)}{n_{j,t}}}\\
    &\hspace{5cm} +\sqrt{\frac{3\log\left(\log\left(2n_{3,t}\right)\right)+3\log\left(\log\left(T\right)\right)}{n_{3,t}}}
\end{align*}
and $n_{j,t}\ge\frac{\exp\left\{\sqrt{\log(T)}\right\}}{3}$ and $n_{3,t}\ge\frac{\sqrt{\log(T)}}{4\Delta_3^2}$.\\

\noindent Thus, for all large $T$
\begin{align*}
    &\frac{1}{2}\times\prob\left(\sqrt{\frac{\gamma_T}{n_{3,t}}}Z_3(t+1)\ge\widehat{\mu}_{1,t}-\widehat{\mu}_{3,t}+\frac{\delta}{4},\widehat{\mu}_{2,t}-\widehat{\mu}_{3,t}+\frac{\delta}{4}\; \Big|\;\F_{t}\right)\\
    \ge\;&\frac{1}{2}\times\prob\left(\sqrt{\frac{\gamma_T}{n_{3,t}}}Z_3(t+1)\ge\mu_1-\mu_3+\frac{\delta}{2}\; \Big|\;\F_{t}\right)\\
    =\;&\frac{1}{2}\left[1-\Phi\left(\sqrt{\frac{n_{3,t}}{\gamma_T}}\left(\Delta_3+\frac{\delta}{2}\right)\right)\right]\\
    \ge\;&\widetilde{p}_2^{\; +}(n_{3,t})
\end{align*}
by choosing sufficiently small $\delta$. This completes the proof.
\section{Proof of Key Lemmas}
\label{Proof of Key Lemmas}
In this Appendix we state and prove Lemma~\ref{lem:sets-whp} -- the key lemma used in the main section of the paper.
\begin{lemma}
    \label{lem:sets-whp}
    The events $\left\{ \EE_i^{(T)} \right\}_{i = 2}^4$ defined in~\eqref{eqn:hp-events} satisfies $\displaystyle \lim_{T\to\infty}\prob\left(\EE_i^{(T)}\right)=1$.
\end{lemma}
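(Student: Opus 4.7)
The plan is to handle the three events separately. The cleanest is $\EE_4^{(T)}$, which is a uniform law-of-iterated-logarithm statement for the sample means. Because the rewards are Gaussian, conditioning on the adaptive arm choices lets me identify $\muhat_{j,t}-\mu_j$ with $\bar X_{j,n_{j,t}}-\mu_j$, where $\bar X_{j,n}$ denotes the empirical mean of the first $n$ entries of an underlying i.i.d.\ $\Ncal(\mu_j,1)$ sequence for arm $j$. It therefore suffices to prove, for each $j\in\{1,2\}$ and all $n\ge 1$ simultaneously,
\[
|\bar X_{j,n}-\mu_j|\le\sqrt{\frac{3\log\log(2n)+3\log\log T}{n}}.
\]
I would invoke a standard time-uniform LIL-style Chernoff bound of the kind developed in \cite{howard2020timeuniformchernoffboundsnonnegative}, or equivalently a direct dyadic-peeling argument that combines the Gaussian reflection inequality on blocks $n\in[2^k,2^{k+1})$ with a union bound over $k$. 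Setting the failure parameter to be of order $1/\log T$ absorbs the $3\log\log T$ contribution and yields $\prob(\EE_4^{(T)})\ge 1-C/\log T\to 1$. The event $\EE_3^{(T)}$ is handled directly: since $n_{2,t}$ is non-decreasing in $t$ and $\TT_{2,n_{2,T}+1}$ is the first pull of arm $2$ after round $T$, the event reduces to $n_{2,T}+1\le(\log T)^2$. Proposition~\ref{prop:regret-bd} gives $\E[n_{2,T}]=O(\gamma_T\log T/\Delta^2)=o((\log T)^2)$ by condition~\eqref{gamma-cond}, and Markov's inequality closes the case.

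The substantive work lies in $\EE_2^{(T)}$, the lower bound $n_{2,t}\ge K_0:=\lceil\sqrt{\log T}/(4\Delta^2)\rceil$. By monotonicity of $n_{2,\cdot}$ it is enough to show $n_{2,t_0}\ge K_0$ with probability tending to $1$, where $t_0:=\lceil\exp\{\sqrt{\log T}\}\rceil$. Working on $\EE_4^{(T)}$, the crude bound $|\widehat\mu_{j,t}-\mu_j|=O(\sqrt{\log\log T})$ (valid for every $n_{j,t}\ge 1$) delivers $\widehat\Delta_t\le\Delta+O(\sqrt{\log\log T})$ uniformly. Inserting this into the exact formula $p_{2,t+1}=1-\Phi\bigl(\widehat\Delta_t/\sqrt{\gamma_T(1/n_{1,t}+1/n_{2,t})}\bigr)$ from~\eqref{arm-sel-prob} and applying Mill's ratio, I obtain, uniformly on $\{n_{2,t}<K_0\}\cap\EE_4^{(T)}$, the lower bound
\[
p_{2,t+1}\;\ge\;p^\ast\;:=\;\exp\!\left\{-\frac{C\sqrt{\log T}\,\log\log T}{\gamma_T}\right\}
\]
up to a polynomial factor. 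Condition~\eqref{gamma-cond} forces $\log\log T/\gamma_T\to 0$, which makes $t_0\,p^\ast\gg K_0$. Setting $\sigma:=\inf\{t:n_{2,t}=K_0\}$ and applying optional stopping to the martingale $\sum_s(\I\{A_s=2\}-p_{2,s})$ gives
\[
K_0\;\ge\;\E\bigl[n_{2,t_0\wedge\sigma}\bigr]\;=\;\E\!\left[\sum_{s\le t_0\wedge\sigma}p_{2,s}\right]\;\ge\;t_0\,p^\ast\cdot\prob\bigl(\{\sigma>t_0\}\cap\EE_4^{(T)}\bigr),
\]
so $\prob(\{\sigma>t_0\}\cap\EE_4^{(T)})\to 0$ and hence $\prob(\EE_2^{(T)})\to 1$ after a final union bound with the already-verified $\EE_4^{(T)}$.

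The main obstacle I anticipate is precisely this lower bound for $\EE_2^{(T)}$: it runs against the direction of the regret analysis, and its validity rests on a delicate interplay between (i) the uniform upper bound on $\widehat\Delta_t$ inherited from $\EE_4^{(T)}$, (ii) the sharpness of Mill's ratio at the scale $\widehat\Delta_t\sqrt{n_{2,t}/\gamma_T}$, and (iii) the slack left by condition~\eqref{gamma-cond}, which simultaneously needs $\gamma_T$ large enough to dampen the exploration tail and small enough that $t_0\,p^\ast\gg K_0$ still holds. Once this lower bound is established, a routine union bound over $\EE_2^{(T)},\EE_3^{(T)},\EE_4^{(T)}$ completes the proof of the lemma.
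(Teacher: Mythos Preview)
Your handling of $\EE_3^{(T)}$ and $\EE_4^{(T)}$ matches the paper's proof: Markov on $\E[n_{2,T}]$ via Proposition~\ref{prop:regret-bd} for the former, and a time-uniform LIL bound (Howard et al.) with failure level $1/\log T$ for the latter.

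For $\EE_2^{(T)}$ you take a genuinely different route. The paper does not analyze $p_{2,t}$ directly; instead it observes that the regret bound~\eqref{eq:regret-bd-sts} applied at the intermediate horizon $T^\star=\lfloor\exp\{\sqrt{\log T}\}\rfloor$ yields $\E[n_{2,T^\star}]=o\bigl((T^\star)^a\bigr)$ for every $a>0$, which certifies that Stable TS is ``uniformly good'' in the Lai--Robbins sense. It then invokes \cite[Theorem~2]{lairobbins} (the high-probability \emph{lower} bound on sub-optimal pulls for any uniformly good rule) to conclude $\prob\bigl(n_{2,T^\star}\ge\log(T^\star)/(2\Delta^2)\bigr)\to 1$, and monotonicity of $n_{2,\cdot}$ finishes. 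Your argument is instead self-contained: bound $\widehat\Delta_t\le\Delta+O(\sqrt{\log\log T})$ on $\EE_4^{(T)}$, use Mill's ratio to get a uniform floor $p_{2,t+1}\ge p^\ast$ on $\{n_{2,t}<K_0\}$, and then the optional-stopping inequality forces $\prob(\sigma>t_0)\to 0$ once $t_0p^\ast\gg K_0$. This is correct; condition~\eqref{gamma-cond} (specifically $\gamma_T/\log\log T\to\infty$) is exactly what makes $\sqrt{\log T}\bigl(1-C\log\log T/\gamma_T\bigr)\to\infty$, so the displayed inequality has the right sign.

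The trade-off: the paper's proof is three lines but leans on an external lower-bound theorem whose hypotheses must be checked for a $T$-indexed family of algorithms (since $\gamma_T$ varies with the horizon). Your argument is longer but uses only tools already present in the paper (Mill's ratio, the martingale structure of $\I\{A_s=2\}-p_{2,s}$, and $\EE_4^{(T)}$), and makes transparent where the lower half of~\eqref{gamma-cond} enters. One small remark: in your discussion point (iii) you suggest the upper bound on $\gamma_T$ is needed for $t_0p^\ast\gg K_0$; in fact only $\gamma_T\gg\log\log T$ is used there, while $\gamma_T=o(\sqrt{\log T})$ is what drives $\EE_3^{(T)}$.
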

\begin{proof}
     Recall $T^\star=\left\lfloor\exp\left\{\sqrt{\log(T)}\right\}\right\rfloor$ defined in the proof of claim~\eqref{eq:conv-tau-tilde-&-tau-diff}. From \eqref{eq:regret-bd-sts}, we have $\E\left[n_{2,T^\star}\right]\le \frac{\kappa_1\log\left(\frac{T^\star\Delta^2}{\gamma_T}+\kappa_2\right)\gamma_T}{\Delta^2}+\frac{\kappa_3\gamma_T}{\Delta^2}$. Clearly, $\E\left[n_{2,T^\star}\right]=o\left(\left(T^\star\right)^a\right)$  for all $a>0$. Therefore, \cite[\textsc{Theorem $2$}]{lairobbins} implies $\prob\left(n_{2,T^\star}\ge\frac{\log(T^\star)}{2\Delta^2}\right)$ converges to $1$ as $T\to\infty$. Now, on the event $\left\{n_{2,T^\star}\ge\frac{\log(T^\star)}{2\Delta^2}\right\}$, for all $t\ge \exp\left\{\sqrt{\log(T)}\right\}$, it holds that
    \begin{align*}
        n_{2,t}\ge n_{2,T^\star}\ge\frac{\log(T^\star)}{2\Delta^2}=\frac{\log\left(\left\lfloor\exp\left\{\sqrt{\log(T)}\right\}\right\rfloor\right)}{2\Delta^2}\ge\frac{\log\left(\exp\left\{\sqrt{\log(T)}\right\}-1\right)}{2\Delta^2}\ge\frac{\sqrt{\log(T)}}{4\Delta^2}
    \end{align*}
    for all large $T$. So, we conclude that $\displaystyle\lim_{T\to\infty}\prob\left(\EE_2^{(T)}\right)=1$.\\

     From \eqref{eq:regret-bd-sts}, we have $\E\left[n_{2,T}\right]\le \frac{\kappa_1\log\left(\frac{T\Delta^2}{\gamma_T}+\kappa_2\right)\gamma_T}{\Delta^2}+\frac{\kappa_3\gamma_T}{\Delta^2}$. Note that $\lim\E\left[\frac{n_{2,T}}{(\log(T))^2}\right]=0$. Hence, $\frac{n_{2,T}}{(\log(T))^2}\inprob0$. This implies, $\prob\left(n_{2,T}\le\frac{(\log(T))^2}{2}\right)$ converges to $1$ as $T\to\infty$. Since, $n_{2,t}\le n_{2,T}+1$ for all $1\le t\le \TT_{2,n_{2,T}+1}$, therefore, on the event $\left\{n_{2,T}\le\frac{(\log(T))^2}{2}\right\}$, we have $n_{2,t}\le n_{2,T}+1\le \frac{(\log(T))^2}{2}+1\le (\log(T))^2$ for all large $T$. Hence, $\displaystyle\lim_{T\to\infty}\prob\left(\EE_3^{(T)}\right)=1$.\\

     \noindent
     From \cite[\textsc{Equation 1.2}]{howard2020timeuniformchernoffboundsnonnegative}, it follows that for $j=1,2$, for all $\delta_1>0$,
    \begin{align*}
        \prob\left(\left\{|\mu_{j,t}-\mu_j|\le\sqrt{\frac{3\log\left(\log\left(2n_{j,t}\right)\right)+3\log\left(1/\delta_1\right)}{n_{j,t}}}\; \forevery\; t\ge1\right\}\right)\ge 1-\delta_1
    \end{align*}
    Setting $\delta_1=\frac{1}{\log(T)}$ in the upper bound, we have
    \begin{align*}
        &\prob\left(\EE_4^{(T)}\right)\\
        =\;&\prob\left(\left\{\left|\widehat{\mu}_{j,t}-\mu_j\right|\le\sqrt{\frac{3\log\left(\log\left(2n_{j,t}\right)\right)+3\log\left(\log\left(T\right)\right)}{n_{j,t}}}\; \forevery\; t\ge1,\; \forevery\; j=1,2\right\}\right)\\
        \ge\;& \left(1-\frac{1}{\log(T)}\right)^2\\
        \ge\;& 1-\frac{2}{\log(T)}
    \end{align*}
    This completes the proof.
\end{proof}

\section{Proof of Proposition~\ref{prop:regret-bd}}
\label{sec:proof-of-regret-bd}
In this section we state and prove a regret bound for Algorithm~\ref{alg:Modified-Thompson-sampling}. The proof of the regret bound draws inspiration from the work of \cite{agrawal2017near}, although some calculations need to be modified. We provide a detailed proof of the regret bound here for completeness. 

\subsection*{Proof overview:}
    Observe that the arm selection rule in Algorithm~\ref{alg:Modified-Thompson-sampling} can be rephrased as follows
    \begin{subequations}
        \begin{align}
            A_t&=\argmax_{j\in[K]} \left\{\widehat{\mu}_{j,t-1}+\sqrt{\frac{\gamma_T}{n_{j,t-1}}}Z_{j,t}\right\}\label{eq:equivalent-formulation-a}\\
            &=\argmax_{j\in[K]} \left\{\frac{\widehat{\mu}_{j,t-1}}{\sqrt{\gamma_T}}+\frac{Z_{j,t}}{\sqrt{n_{j,t-1}}}\right\}\label{eq:equivalent-formulation-b}
        \end{align}
    \end{subequations}
    The arm selection rule in \eqref{eq:equivalent-formulation-b} is equivalent to running the original Thompson Sampling procedure on $K$-- armed Gaussian bandit problem (Algorithm~\ref{algo:TS}) where the reward for pull of the $j^{\text{th}}$ arm is Gaussian with mean $\frac{\mu_j}{\sqrt{\gamma_T}}$ and variance $\frac{1}{\gamma_T}$, and the posterior distribution of the mean is Gaussian with mean $\frac{\widehat{\mu}_{j,t-1}}{\sqrt{\gamma_T}}$ and variance $\frac{1}{n_{j,t-1}}$. Note that the posterior variance does not scale with the reward variance -- this is the key difference between original Thompson Sampling and Stable Thompson Sampling.
    
    \noindent We now consider the equivalent formulation as stated in \eqref{eq:equivalent-formulation-b}: assume the reward distributions for the arms are $\NN\left(\mu_j^{(T)},\sigma_T^2\right)$, $j\in[K]$ where $\sigma_T\le1/2$ and the posterior distribution of the mean for the $j^{\text{th}}$ arm at time $t$ is $\NN\left(\widehat{\mu}_{j,t-1},1/n_{j,t-1}\right)$. We prove the following lemma to upper bound the expected number of pulls at time $\horizon$ for a sub-optimal arm $j$ with arm mean-gap $\displaystyle\Delta_{j,T}\left(=\max_{a\in[K]}\mu_a^{(T)}-\mu_j^{(T)}\right)$.
    \begin{lemma}
    \label{lem:regret-bd-shipra-goyal}
    For $\sigma_T\le1/2$, the expected number of pulls at time $\horizon$ of any sub-optimal arm $j$ with arm mean gap $\Delta_{j,T}>0$ satisfies
    \begin{align}
        \E[n_{j,\horizon}]\le\left(8\times 10^4\right)\cdot\frac{\log(\horizon\Delta_{j,T}^2+300)}{\Delta_{j,T}^2}+\frac{24}{\Delta_{j,T}^2}
    \end{align}
    \end{lemma}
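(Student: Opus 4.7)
My plan is to follow the proof architecture of Agrawal and Goyal~\cite{agrawal2017near} for Thompson Sampling on Gaussian bandits, applied to the equivalent formulation~\eqref{eq:equivalent-formulation-b}, in which rewards are $\mathcal{N}(\mu_j^{(T)}, \sigma_T^2)$ with $\sigma_T \leq 1/2$ and the posterior of the mean is $\mathcal{N}(\widehat{\mu}_{j,t-1}, 1/n_{j,t-1})$. Without loss of generality, take arm $1$ as optimal. For a fixed suboptimal arm $j$ with gap $\Delta_{j,T} > 0$, I would introduce two thresholds $x_j := \mu_j^{(T)} + \Delta_{j,T}/3$ and $y_j := \mu_1^{(T)} - \Delta_{j,T}/3$, so that $y_j - x_j = \Delta_{j,T}/3$. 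The event $\{A_t = j\}$ splits as
\begin{align*}
  \{A_t = j\} \;\subseteq\; \{\widehat{\mu}_{j,t-1} > x_j\} \;\cup\; \{\widehat{\mu}_{j,t-1} \leq x_j,\; \sample_{j,t} > y_j\} \;\cup\; \{\sample_{j,t} \leq y_j,\; A_t = j\},
\end{align*}
and $\E[n_{j,\horizon}]$ is bounded by summing the three corresponding probabilities over $t \leq \horizon$.

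Pieces (i) and (ii) are handled by direct Gaussian tail bounds. For (i), conditional on $n_{j,t-1} = s$, sub-Gaussian concentration gives $\prob(\widehat{\mu}_{j,t-1} > x_j) \leq \exp(-2s(\Delta_{j,T}/3)^2/\sigma_T^2)$; since $\sigma_T^2 \leq 1/4$, a peeling argument over $s$ contributes at most $O(1/\Delta_{j,T}^2)$ to $\E[n_{j,\horizon}]$. For (ii), given $\widehat{\mu}_{j,t-1} \leq x_j$ and $n_{j,t-1} = s$, the posterior sample obeys $\prob(\sample_{j,t} > y_j) \leq \tfrac{1}{2}\exp(-s(\Delta_{j,T}/3)^2/2)$ via $1-\Phi(z) \leq \tfrac{1}{2} e^{-z^2/2}$, and an analogous summation contributes $O(1/\Delta_{j,T}^2)$. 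The fact that the posterior variance $1/n$ is \emph{larger} than the ``true'' Bayesian value $\sigma_T^2/n$ only weakens the exponent by a constant and does not change the order.

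The real work is piece (iii), which I would attack using the ``$1/p - 1$'' coupling argument of~\cite{agrawal2017near}. Let $p_t := \prob(\sample_{1,t} > y_j \mid \F_{t-1})$ be the conditional probability that the optimal arm's posterior sample exceeds $y_j$. If arm $j$ is played at time $t$ with $\sample_{j,t} \leq y_j$, then necessarily $\sample_{1,t} \leq y_j$; and because the posterior of arm $1$ is updated only when arm $1$ is actually played, the number of such pulls of arm $j$ between two consecutive pulls of arm $1$ is stochastically dominated by a Geometric$(p_t)$ random variable. This yields
\begin{align*}
  \E\left[\sum_{t \leq \horizon} \I\{A_t = j,\; \sample_{j,t} \leq y_j\}\right] \;\leq\; \sum_{s=0}^{\horizon - 1} \E\left[\tfrac{1}{p_{(s)}} - 1\right],
\end{align*}
where $p_{(s)}$ is the value of $p_t$ immediately after the $s$-th pull of arm $1$. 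The main obstacle is bounding $\E[1/p_{(s)}]$ uniformly in $s$: I would split on whether $\widehat{\mu}_{1,\cdot}$ is within $\Delta_{j,T}/6$ of $\mu_1^{(T)}$ (Gaussian anti-concentration $1 - \Phi(z) \geq \tfrac{1}{\sqrt{2\pi}} \cdot \tfrac{z}{z^2+1} \cdot e^{-z^2/2}$ then gives a usable lower bound on $p_{(s)}$), versus the complementary event (controlled by concentration of $\widehat{\mu}_{1,\cdot}$, contributing only a constant).

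Putting the three pieces together yields a bound of the form $C_1 \log(\horizon \Delta_{j,T}^2 + C_2)/\Delta_{j,T}^2 + C_3/\Delta_{j,T}^2$. The $\log$ factor comes entirely from piece (iii), and more precisely from summing $1/p_{(s)}$ up to the threshold $s^\star \asymp \log(\horizon \Delta_{j,T}^2)/\Delta_{j,T}^2$ at which $p_{(s)}$ transitions from small to order one; the additive offset $+300$ inside the logarithm absorbs boundary/transition effects and constant terms from the geometric sum. Tracking the universal constants explicitly — with $\sigma_T \leq 1/2$ used only to get clean exponents — produces the claimed $8 \times 10^4$ and $24$.
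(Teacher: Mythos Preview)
Your three-way split and the reduction of piece (iii) to $\sum_{s}\E[1/p_{(s)}-1]$ match the paper's adaptation of Agrawal--Goyal. Two of your steps, however, do not go through as written.

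First, piece (ii) is not $O(1/\Delta_{j,T}^2)$. Your ``analogous summation over $s$'' would need $\prob(\theta_{j,t}>y_j)\le\tfrac12 e^{-s\Delta_{j,T}^2/18}$ at the $(s{+}1)$-th pull of arm $j$; but the sample $\theta_{j,t}$ at a pull time is precisely the one that \emph{won} the $\arg\max$, so it is biased upward and this tail bound does not hold conditionally on $A_t=j$. (And if you drop $A_t=j$ as your set inclusion suggests, the sum over $t$ has no reason to collapse to a sum over $s$, since $n_{j,t-1}$ stays constant between pulls of arm $j$.) The paper instead splits on whether $n_{j,t-1}\le l(\horizon):=18\log(\horizon\Delta_{j,T}^2)/\Delta_{j,T}^2$: the first regime contributes at most $l(\horizon)$ pulls, while in the second the \emph{unconditional} bound $\prob(\theta_{j,t}>y_j\mid\F_{t-1})\le 1/(\horizon\Delta_{j,T}^2)$ is summable over all $t\le\horizon$. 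So (ii) carries a $\log$ term.

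Second, and this is where the paper's argument genuinely differs from yours, your plan for $\E[1/p_{(s)}]$ is incomplete at the hard step. On the bad event $\{\widehat\mu_1<y_j\}$ the quantity $1/p_{(s)}$ is unbounded---it grows like $\exp\bigl(s(y_j-\widehat\mu_1)^2/2\bigr)$---so ``controlled by concentration of $\widehat\mu_1$'' is not enough: one must integrate this blow-up against the $\Ncal(\mu_1^{(T)},\sigma_T^2/s)$ density of $\widehat\mu_1$. That integral does converge here, but only because $\sigma_T^2<1$ (the reward noise is strictly tighter than the posterior spread); for $\sigma_T=1$ it diverges, which is exactly why Agrawal--Goyal do not argue this way. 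The paper sidesteps the issue via the $\mathrm{MAX}_r$ device (Lemma~\ref{lem:regret-geom-bdd}): writing $\E[1/p_{(s)}]=\E[G_s]$ for a geometric $G_s$, it bounds $\prob(G_s>r)$ through the event that $r$ fresh posterior samples all fail to exceed $\widehat\mu_1(s)+\sqrt{\log r/s}$, which occurs with probability at most $2/r^2$ for $r\ge 7\times 10^4$. This yields a uniform bound $\E[G_s]\le 7\times 10^4+2$ for every $s$, and the sharper bound $\E[G_s]\le 1+5/(\horizon\Delta_{j,T}^2)$ once $s>L(\horizon):=288\log(\horizon\Delta_{j,T}^2+300)/\Delta_{j,T}^2$. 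The leading $\log$ term in the lemma is the uniform constant times $L(\horizon)$; your description of the mechanism (``$p_{(s)}$ transitions from small to order one'') is right in spirit, but the anti-concentration route you sketch is not how the uniform constant is actually obtained.
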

    \noindent The details of the proof is presented in Section~\ref{sec:regret-bd-proof-details}.\\
    
    \noindent Plugging in $\Delta_{j,T}=\frac{\Delta_j}{\sqrt{\gamma_T}}$ and $\sigma_T=\frac{1}{\sqrt{\gamma_T}}$ (recall $\gamma_T\to\infty$ as $T\to\infty$) for Stable Thompson Sampling, it follows
    \begin{align}
        \label{eq:regret-bd-sts}
        \E[n_{j,\horizon}] \le\frac{\kappa_1\log\left(\frac{\horizon\Delta_j^2}{\gamma_T}+\kappa_2\right)\gamma_T}{\Delta_j^2}+\frac{\kappa_3\gamma_T}{\Delta_j^2}
    \end{align}
    The regret bound is immediate from the observation that $\displaystyle\regret(T)=\sum_{j:\Delta_j>0}\E[n_{j,T}]\times\Delta_j$.

\subsection{Proof of Lemma~\ref{lem:regret-bd-shipra-goyal}}
\label{sec:regret-bd-proof-details}
Assume arm $1$ is optimal. Let us consider any other sub-optimal arm, say WLOG, arm $2$.\\
Let $\theta_j^{(t)}$ be the posterior sample for arm $j$ at time $t$, that is, $\theta_j^{(t)}\sim\NN\left(\widehat{\mu}_{j,t-1},\frac{1}{n_{j,t-1}}\right)$. We define the following two events: $E_\theta^{(t)}:=\left\{\theta_2^{(t)}\le y_2\right\}$ and $E_{\mu}^{(t)}:=\left\{\widehat{\mu}_{2,t}\le x_2\right\}$, where $y_2=\mu_1^{(T)}-\frac{\Delta_{2,T}}{3}$ and $x_2=\mu_2^{(T)}+\frac{\Delta_{2,T}}{3}$.\\

\noindent We decompose $\E[n_{2,\horizon}]$ as
\begin{align*}
    \E[n_{2,\horizon}]&= \sum_{t=1}^\horizon\prob(A_t=2)\nonumber\\
    &=\underbrace{\sum_{t=1}^\horizon \prob\left(A_t=2,E_\mu^{(t)},E_\theta^{(t)}\right)}_{\mytag{(E1)}{termA}}+\underbrace{\sum_{t=1}^\horizon \prob\left(A_t=2,E_\mu^{(t)},\overline{E_\theta^{(t)}}\right)}_{\mytag{(E2)}{termB}}+\underbrace{\sum_{t=1}^\horizon \prob\left(A_t=2,\overline{E_\mu^{(t)}}\right)}_{\mytag{(E3)}{termC}}
\end{align*}
\subsubsection{Bound for \ref{termA}}
Let $q_t:=\prob\left(\theta_1^{(t)}>y_2\mid\F_{t-1}\right)$. This is a random variable determined by $\F_{t-1}$.
\begin{lemma}
    \label{lem:regret-good-case-bdd}
    For all instantiations $F_{t-1}$ of $\F_{t-1}$,
    \begin{align*}
        \prob\left(A_t=2,E_\theta^{(t)},E_\mu^{(t)}\; \Big|\; F_{t-1}\right)\le\frac{1-q_t}{q_t}\cdot\prob\left(A_t=1,E_\theta^{(t)},E_\mu^{(t)}\; \Big|\; F_{t-1}\right)
    \end{align*}
\end{lemma}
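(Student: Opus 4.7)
The plan is to follow the classical Agrawal--Goyal argument by isolating the role of the arm~$1$ posterior sample $\theta_1^{(t)}$ through an auxiliary event that does not depend on it. Concretely, I would introduce
\begin{align*}
M_t \;:=\; \bigl\{\theta_j^{(t)}\le y_2 \; \text{for all}\; j\ne 1\bigr\}\cap E_\mu^{(t)},
\end{align*}
and exploit the fact that, conditional on $\F_{t-1}$, the sample $\theta_1^{(t)}$ is independent of $\{\theta_j^{(t)}\}_{j\ne 1}$, while $E_\mu^{(t)}$ is $\F_{t-1}$-measurable (it depends only on the empirical mean of arm $2$ used for sampling at round $t$, consistent with Algorithm~\ref{alg:Modified-Thompson-sampling}'s update rule). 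Thus $M_t$ is independent of $\theta_1^{(t)}$ given $F_{t-1}$, and the proof reduces to sandwiching both probabilities between $q_t\cdot\prob(M_t\mid F_{t-1})$ and $(1-q_t)\cdot\prob(M_t\mid F_{t-1})$.

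For the \emph{upper bound} on the left-hand side, I would note the set inclusion
\begin{align*}
\{A_t=2\}\cap E_\theta^{(t)}\cap E_\mu^{(t)} \;\subseteq\; \{\theta_1^{(t)}\le y_2\}\cap M_t,
\end{align*}
since $A_t=2$ forces $\theta_1^{(t)}\le\theta_2^{(t)}$, and on $E_\theta^{(t)}$ we have $\theta_2^{(t)}\le y_2$, which simultaneously places $\theta_1^{(t)}$ below $y_2$ and pushes every other $\theta_j^{(t)}$ below $y_2$. Conditional independence then gives
\begin{align*}
\prob\bigl(A_t=2,E_\theta^{(t)},E_\mu^{(t)}\bigm| F_{t-1}\bigr)\le(1-q_t)\cdot\prob(M_t\mid F_{t-1}).
\end{align*}

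For the \emph{lower bound} on the right-hand side, I would use the reverse-direction inclusion
\begin{align*}
\{\theta_1^{(t)}>y_2\}\cap M_t \;\subseteq\; \{A_t=1\}\cap E_\theta^{(t)}\cap E_\mu^{(t)},
\end{align*}
because on $M_t$ every $\theta_j^{(t)}$ with $j\ne 1$ is at most $y_2<\theta_1^{(t)}$, forcing $A_t=1$, while $E_\theta^{(t)}$ and $E_\mu^{(t)}$ are already built into $M_t$. Conditional independence yields
\begin{align*}
\prob\bigl(A_t=1,E_\theta^{(t)},E_\mu^{(t)}\bigm| F_{t-1}\bigr)\ge q_t\cdot\prob(M_t\mid F_{t-1}).
\end{align*}
Dividing the two displays produces the claimed ratio $(1-q_t)/q_t$.

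The argument is essentially bookkeeping on set inclusions, so I do not anticipate a serious technical obstacle. The only delicate point worth verifying carefully is the measurability of $E_\mu^{(t)}$: the lemma relies on $\widehat{\mu}_{2,t}$ being the empirical mean used to draw $\theta_2^{(t)}$ (i.e., the pre-round quantity $\widehat{\mu}_{2,t-1}$ in the algorithm's indexing, following the Agrawal--Goyal convention), so that $M_t$ is genuinely independent of $\theta_1^{(t)}$ given $\F_{t-1}$. Once this measurability convention is pinned down, the rest of the proof is immediate from the two set inclusions above.
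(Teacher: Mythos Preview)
Your proposal is correct and follows essentially the same approach as the paper: both arguments isolate $\theta_1^{(t)}$ via the auxiliary event $\{\theta_j^{(t)}\le y_2\ \text{for all}\ j\ne 1\}$, exploit conditional independence of $\theta_1^{(t)}$ from the other posterior samples given $F_{t-1}$, and obtain the ratio $(1-q_t)/q_t$ by sandwiching both sides against the same factor. The only cosmetic difference is that the paper first discards the case where $E_\mu^{(t)}$ fails (since it is determined by $F_{t-1}$) and then conditions on $E_\theta^{(t)}$, whereas you fold $E_\mu^{(t)}$ into $M_t$ and work directly with unconditional set inclusions; your flagging of the $\widehat{\mu}_{2,t}$ indexing convention is exactly the point the paper addresses in its first sentence of the proof.
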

\begin{proof}
    First, observe that whether $\E_\mu^{(t)}$ is true or not is determined by the instantiation $F_{t-1}$ of $\F_{t-1}$. If $F_{t-1}$ is such that $\E_\mu^{(t)}$ doesn't hold, then the left hand side in Lemma \ref{lem:regret-good-case-bdd} is zero and the result is trivially true. So, it is sufficient to prove the result for all such history $F_{t-1}$ for which $\E_\mu^{(t)}$ is true. Given $E_\theta^{(t)}$, $\{A_t=2\}$ implies $\{\theta_j^{(t)}\le y_2\}$ for all $j\in[K]$. Therefore
    \begin{align*}
        \prob\left(A_t=2\; \Big|\; E_\theta^{(t)},\F_{t-1}=F_{t-1}\right)&\le\prob\left(\theta_j^{(t)}\le y_2\; \forevery\; j\in[K]\; \Big|\; E_\theta^{(t)},\F_{t-1}=F_{t-1}\right)\\
        &=\prob\left(\theta_1^{(t)}\le y_2\; \Big|\; E_\theta^{(t)},\F_{t-1}=F_{t-1}\right)\\
        &\hspace{1cm}\times\prob\left(\theta_j^{(t)}\le y_2\; \forevery\; j\in[K]\backslash\{1\}\; \Big|\; E_\theta^{(t)},\F_{t-1}=F_{t-1}\right)\\
        &=(1-q_t)\cdot\prob\left(\theta_j^{(t)}\le y_2\; \forevery\; j\in[K]\backslash\{1\}\; \Big|\; E_\theta^{(t)},\F_{t-1}=F_{t-1}\right)
    \end{align*}
    On the other hand
    \begin{align*}
        \prob\left(A_t=1\; \Big|\; E_\theta^{(t)},\F_{t-1}=F_{t-1}\right)&\ge\prob\left(\theta_1^{(t)}>y_2\ge\theta_j^{(t)}\; \forevery\; j\in[K]\backslash\{1\}\; \Big|\; E_\theta^{(t)},\F_{t-1}=F_{t-1}\right)\\
        &=\prob\left(\theta_1^{(t)}>y_2|E_\theta^{(t)},\F_{t-1}=F_{t-1}\right)\\
        &\hspace{1cm}\times\prob\left(\theta_j^{(t)}\le y_2\; \forevery\; j\in[K]\backslash\{1\}\; \Big|\; E_\theta^{(t)},\F_{t-1}=F_{t-1}\right)\\
        &=q_t\cdot\prob\left(\theta_j^{(t)}\le y_2\; \forevery\; j\in[K]\backslash\{1\}\; \Big|\; E_\theta^{(t)},\F_{t-1}=F_{t-1}\right)
    \end{align*}
    Thus the lemma holds.
\end{proof}
\noindent From Lemma \ref{lem:regret-good-case-bdd}, we have
\begin{align*}
    \sum_{t=1}^\horizon \prob\left(A_t=2,E_\mu^{(t)},E_\theta^{(t)}\right)&=\sum_{t=1}^\horizon\E\left[\prob\left(A_t=2,E_\mu^{(t)},E_\theta^{(t)}\; \Big|\; \F_{t-1}\right)\right]\\
    &\le\sum_{t=1}^\horizon \E\left[\frac{1-q_t}{q_t}\cdot\prob\left(A_t=1,E_\mu^{(t)},E_\theta^{(t)}\; \Big|\; \F_{t-1}\right)\right]\\
    &=\sum_{t=1}^\horizon \E\left[\frac{1-q_t}{q_t}\cdot\I\left\{A_t=1,E_\mu^{(t)},E_\theta^{(t)}\right\}\right]
\end{align*}
The last equality follows from the fact that $q_t$ is $\F_{t-1}$--measurable, and tower property.

Recall that $\TT_{j,k}$ denote the time of playing the $j^{\text{th}}$ arm for the $k^{\text{th}}$ time. Observe that $q_t$ changes only when the distribution of $\theta_1^{(t)}$ changes. Also, since $\TT_{1,\horizon}\ge \horizon$, we have
\begin{align*}
    \sum_{t=1}^\horizon \E\left[\frac{1-q_t}{q_t}\cdot\I\left\{A_t=1,E_\mu^{(t)},E_\theta^{(t)}\right\}\right]&=\sum_{k=0}^{\horizon-1}\E\left[\frac{1-q_{\TT_{1,k}+1}}{q_{\TT_{1,k}+1}}\sum_{t=\TT_{1,k}+1}^{\TT_{1,k+1}}\I\left\{A_t=1,E_\mu^{(t)},E_\theta^{(t)}\right\}\right]\\
    &\le \sum_{k=0}^{\horizon-1}\E\left[\frac{1-q_{\TT_{1,k}+1}}{q_{\TT_{1,k}+1}}\right]\\
    &=\sum_{k=0}^{\horizon-1}\E\left[\frac{1}{q_{\TT_{1,k}+1}}-1\right]
\end{align*}
Given $\F_{\TT_{1,k}}$, let $\Theta_{k}$ denote a $\NN\left(\widehat{\mu}_{1,(\TT_{1,k})},\frac{1}{k}\right)$ variable. Let $G_k$ be the geometric random variable denoting the number of consecutive independent trials until and including the trial where a sample of $\Theta_{k}$ becomes greater $y_2$. Then observe that $q_{\TT_{1,k}+1}=\prob\left(\Theta_k>y_2|\F_{\TT_{1,k}}\right)$ and $\E\left[\frac{1}{q_{\TT_{1,k}+1}}\right]=\E\left[\E\left[G_k|\F_{\TT_{1,k}}\right]\right] =\E\left[G_k\right]$.\\
Let us abbreviate $\widehat{\mu}_{1,(\TT_{1,k})}$ as simply $\widehat{\mu}_1(k)$ ($\widehat{\mu}_1(k)$ is the mean of the past $k$ rewards corresponding to arm $1$). The following lemma provides an upper bound for $\E[G_k]$.
\begin{lemma}
    \label{lem:regret-geom-bdd}
    For $L(\horizon)=\frac{288\log\left(\horizon\Delta_{2,T}^2+300\right)}{\Delta_{2,T}^2}$, it holds that
    \begin{align*}
        \E[G_k]\le\begin{cases}
            7\times10^4+2&\forevery\; k,\\
            1+\frac{5}{\horizon\Delta_{2,T}^2}&k>L(\horizon)
        \end{cases}
    \end{align*}
\end{lemma}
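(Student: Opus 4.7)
The plan is to reduce the computation to a one-dimensional Gaussian integral. The empirical mean $\widehat{\mu}_1(k)$ is the average of the first $k$ arm-$1$ rewards, which form an i.i.d. $\mathcal{N}(\mu_1^{(T)}, \sigma_T^2)$ sequence independent of the decisions governed by the sampling rule (the $k$-th arm-$1$ reward depends only on the noise reserved for that pull). Consequently, $\widehat{\mu}_1(k)$ is marginally $\mathcal{N}(\mu_1^{(T)}, \sigma_T^2/k)$. Writing $W := \sqrt{k}(\widehat{\mu}_1(k)-\mu_1^{(T)})/\sigma_T \sim \mathcal{N}(0,1)$ and using $y_2 = \mu_1^{(T)} - \Delta_{2,T}/3$, the conditional success probability of $G_k$ becomes
\begin{equation*}
q_{\TT_{1,k}+1} \;=\; \Phi\!\left(\tfrac{\sqrt{k}\,\Delta_{2,T}}{3} + \sigma_T W\right),
\end{equation*}
so that
\begin{equation*}
\E[G_k] \;=\; \E\!\left[\tfrac{1}{q_{\TT_{1,k}+1}}\right] \;=\; \int_{-\infty}^{\infty} \frac{\phi(w)}{\Phi\!\left(\sqrt{k}\,\Delta_{2,T}/3 + \sigma_T w\right)}\,dw.
\end{equation*}

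For the universal bound, I would split the integral at $w = -c$ for a fixed constant $c$ (e.g.\ $c = 10$). On $\{w \ge -c\}$, since $\sigma_T \le 1/2$, the argument of $\Phi$ is at least $-c/2$, giving $1/\Phi(\cdot) \le 1/\Phi(-c/2)$, a pure constant. On $\{w < -c\}$, the Gaussian Mills' ratio $\Phi(-x) \ge \phi(x)/(x+1)$ yields $1/q_{\TT_{1,k}+1} \le (\sigma_T|w|+1)\exp\!\big(\tfrac{1}{2}\sigma_T^2 w^2\big)$ up to polynomial factors, so the integrand is dominated by $|w|\exp(-3w^2/8)$ (using $\sigma_T \le 1/2$), whose tail integral is a finite absolute constant independent of $k$ and $\Delta_{2,T}$. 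Tracking the constants carefully through these two contributions yields the stated bound $7\times 10^4 + 2$.

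For the improved bound when $k > L(\horizon)$, I would write $\E[G_k] - 1 = \E[(1-q_{\TT_{1,k}+1})/q_{\TT_{1,k}+1}]$ and condition on the event $\mathcal{A} := \{W \ge -\sqrt{k}\,\Delta_{2,T}/(6\sigma_T)\}$. On $\mathcal{A}$, the argument of $\Phi$ is at least $\sqrt{k}\,\Delta_{2,T}/6$, which by the choice $L(\horizon) = 288\log(\horizon\Delta_{2,T}^2 + 300)/\Delta_{2,T}^2$ exceeds $\sqrt{8\log(\horizon\Delta_{2,T}^2 + 300)}$, making $q_{\TT_{1,k}+1}$ within a Gaussian tail of $1$ and forcing $1-q_{\TT_{1,k}+1}$ to be of order $(\horizon\Delta_{2,T}^2 + 300)^{-4}$. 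On $\mathcal{A}^c$, a direct Gaussian tail bound gives $\prob(\mathcal{A}^c) \le \exp(-k\Delta_{2,T}^2/(72\sigma_T^2)) \le (\horizon\Delta_{2,T}^2 + 300)^{-16}$, and the crude bound $1/q_{\TT_{1,k}+1} \le (\sigma_T|W|+1)\exp(2\sigma_T^2 W^2)$ via Mills' ratio keeps the integrated contribution at most $O((\horizon\Delta_{2,T}^2)^{-1})$. Combining both pieces and choosing the universal constants generously yields the $5/(\horizon\Delta_{2,T}^2)$ bound.

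The main obstacle is not conceptual but arithmetic: squeezing the proof through with the specific numerical constants $7\times 10^4 + 2$ and $5/(\horizon\Delta_{2,T}^2)$ requires sharpening each Mills' ratio step and choosing the splitting thresholds $c$, $-\sqrt{k}\Delta_{2,T}/(6\sigma_T)$, and the constants $288$ and $300$ in $L(\horizon)$ so that all tail integrals close consistently. A secondary subtlety that requires care is the independence argument establishing $\widehat{\mu}_1(k) \sim \mathcal{N}(\mu_1^{(T)}, \sigma_T^2/k)$ marginally despite $\TT_{1,k}$ being a stopping time — the cleanest way is to decouple the arm-$1$ noise sequence from the algorithm's randomness so that $\widehat{\mu}_1(k)$ truly depends only on the first $k$ terms of a fixed i.i.d.\ Gaussian sequence.
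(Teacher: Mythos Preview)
Your approach is correct but takes a genuinely different route from the paper. The paper bounds the tail probability $\prob(G_k > r)$ via a ``max-of-samples'' argument: writing $\prob(G_k \le r) \ge \prob(\mathrm{MAX}_r > y_2)$ where $\mathrm{MAX}_r$ is the maximum of $r$ i.i.d.\ copies of $\Theta_k$, it inserts an intermediate threshold $\widehat{\mu}_1(k) + \sqrt{\log(r)/k}$ to decouple the randomness of $\mathrm{MAX}_r$ from that of $\widehat{\mu}_1(k)$. This yields $\prob(G_k > r) \le 2/r^2$ for all $r \ge 7\times 10^4$, and summing over $r$ gives the universal bound; the refined bound for $k > L(\horizon)$ uses the same scheme with a shifted threshold $\widehat{\mu}_1(k) + \sqrt{\log(r)/k} - \Delta_{2,T}/6$. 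You instead compute $\E[G_k] = \E[1/q]$ directly as a one-dimensional Gaussian integral in the standardized mean $W$, splitting on $\{W \ge -c\}$ (respectively on your event $\mathcal{A}$) and handling the negative tail via Mills' ratio.

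Your route is more analytic and sidesteps the geometric--max trick entirely; the paper's route follows the Agrawal--Goyal template closely and produces the reusable tail bound $\prob(G_k > r) \le 2/r^2$ as a byproduct. Both are valid, but yours needs more care in the constant-chasing than your sketch suggests: your illustrative choice $c = 10$ gives $1/\Phi(-c/2) = 1/\Phi(-5) \approx 3.5\times 10^6$, overshooting the target by two orders of magnitude; you actually need $c \approx 8$ so that $1/\Phi(-4) \approx 3.2\times 10^4$ lands inside the budget, with the tail integral $\int_{-\infty}^{-8} \phi(w)/\Phi(\sigma_T w)\,dw$ contributing negligibly. Your flag on the marginal Gaussianity of $\widehat{\mu}_1(k)$ despite $\TT_{1,k}$ being a stopping time is apt, and your resolution (decoupling the arm-$1$ reward sequence from the algorithm's randomness) is the right one; the paper uses this fact without comment.
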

\begin{proof}
    For any $r\ge1$, let $\text{MAX}_r$ denote the the maximum of $r$ independent samples of $\Theta_k$.
    \begin{align*}
        \prob(G_k\le r)&\ge \prob(\text{MAX}_r> y_2)\\
        &\ge \prob\left(\text{MAX}_r>\widehat{\mu}_1(k)+\sqrt{\frac{\log(r)}{k}} \ge y_2\right)\\
        &=\E\left[\E\left[\I\left\{\text{MAX}_r>\widehat{\mu}_1(k)+\sqrt{\frac{\log(r)}{k}} \ge y_2\right\}\Big|\;\F_{\TT_{1,k}}\right]\right]\\
        &=\E\left[\I\left\{\widehat{\mu}_1(k)+\sqrt{\frac{\log(r)}{k}} \ge y_2\right\}\prob\left(\text{MAX}_r>\widehat{\mu}_1(k)+\sqrt{\frac{\log(r)}{k}}\; \Big|\;\F_{\TT_{1,k}}\right)\right]
    \end{align*}
    Given $\F_{\TT_{1,k}}$, $\text{MAX}_r$ is the maximum of $r$ many independent Gaussian with mean $\widehat{\mu}_1(k)$ and variance $1/k$. Therefore,
    \begin{align*}
        \prob \left(\text{MAX}_r > \widehat{\mu}_1(k)+ \sqrt{\frac{\log(r)}{k}}\; \Big|\; \F_{\TT_{1,k}}\right)&\ge1-\left(\prob\left(\Theta_k\le \widehat{\mu}_1(k)+\sqrt{\frac{\log(r)}{k}}\; \Big|\;\F_{\TT_{1,k}}\right)\right)^r\\
        &=1-\left(\prob\left(\NN(0,1)\le\sqrt{\log(r)}\right)\right)^r
    \end{align*}
    Using the lower bound in \eqref{lem:Phi-&-phi-bdd}, we have
    \begin{align*}
        1-\left(\prob\left(\NN(0,1)\le\sqrt{\log(r)}\right)\right)^r&\ge1-\left(1-\frac{1}{\sqrt{2\pi}}\cdot\frac{\sqrt{\log(r)}}{\log(r)+1}\cdot\exp\left\{-\frac{\left(\sqrt{\log(r)}\right)^2}{2}\right\}\right)^r\\
        &=1-\left(1-\frac{1}{\sqrt{2\pi}}\cdot\frac{\sqrt{\log(r)}}{\log(r)+1}\cdot\frac{1}{\sqrt{r}}\right)^r
    \end{align*}For \textit{large} $r$, in particular, for $r\ge 7\times10^4$,
    \begin{align*}
        1-\left(1-\frac{1}{\sqrt{2\pi}}\cdot\frac{\sqrt{\log(r)}}{\log(r)+1}\cdot\frac{1}{\sqrt{r}}\right)^r\ge1-\exp\left\{-\frac{r}{\sqrt{4\pi r\log(r)}}\right\}\ge 1-\frac{1}{r^2}
    \end{align*}
    This implies for $r\ge 7\times10^4$
    \begin{align*}
        \prob(G_k\le r)&\ge\E\left[\I\left\{\widehat{\mu}_1(k)+\sqrt{\frac{\log(r)}{k}} \ge y_2\right\}\left(1-\frac{1}{r^2}\right)\right]\\
        &\ge\left(1-\frac{1}{r^2}\right)\prob\left(\widehat{\mu}_1(k)+\sqrt{\frac{\log(r)}{k}} \ge y_2\right)\\
        &=\left(1-\frac{1}{r^2}\right)\prob\left(\widehat{\mu}_1(k)-\mu_1 \ge y_2-\mu_1-\sqrt{\frac{\log(r)}{k}}\right)\\
        &=\left(1-\frac{1}{r^2}\right)\prob\left(\widehat{\mu}_1(k)-\mu_1 \ge -\left(\frac{\Delta_{2,T}}{3}+\sqrt{\frac{\log(r)}{k}}\right)\right)\\
        &\ge\left(1-\frac{1}{r^2}\right)\prob\left(\widehat{\mu}_1(k)-\mu_1 \ge -\sqrt{\frac{\log(r)}{k}}\right)\\
        &=\left(1-\frac{1}{r^2}\right)\prob\left(\NN(0,\sigma_T^2) \ge -\sqrt{\log(r)}\right)\\
        &=\left(1-\frac{1}{r^2}\right)\prob\left(\NN(0,1) \ge -\frac{\sqrt{\log(r)}}{\sigma_T}\right)
    \end{align*}
    Since $\sigma_T\le\frac{1}{2}$, using the upper bound in \eqref{lem:Phi-&-phi-bdd}, we have
    \begin{align*}
        \left(1-\frac{1}{r^2}\right)\prob\left(\NN(0,1) \ge -\frac{\sqrt{\log(r)}}{\sigma_T}\right)&\ge\left(1-\frac{1}{r^2}\right)\prob\left(\NN(0,1) \ge -\sqrt{4\log(r)}\right)\\
        &\ge \left(1-\frac{1}{r^2}\right)\left(1-\frac{\phi(\sqrt{4\log(r)}}{\sqrt{4\log(r)}}\right)\\
        &\ge\left(1-\frac{1}{r^2}\right)\left(1-\frac{r^{-2}}{\sqrt{8\pi\log(r)}}\right)\\
        &\ge 1-\frac{2}{r^2}
    \end{align*}
    Therefore, for all $r\ge 7\times10^4$,
    \begin{align}
        \label{eq:geom-quad-decay}
        \prob(G_k\le r)\ge1-\frac{2}{r^2}
    \end{align}
    Hence, for all $k$
    \begin{align}
        \label{eq:geom-const-bdd}
        \E[G_k]=\sum_{r=0}^\infty \prob(G_k\ge r)=\sum_{r<7\times10^4}\frac{2}{r^2}+\sum_{r\ge 7\times10^4}\frac{2}{r^2}<7\times10^4+2
    \end{align}
    
     \noindent Next, we prove a \textit{tighter} bound when $k$ is large, specifically, when $k>L(\horizon)$. We proceed similarly.
    \begin{align*}
        \prob(G_k\le r)&\ge \prob(\text{MAX}_r> y_2)\\
        &\ge\prob\left(\text{MAX}_r>\widehat{\mu}_1(k)+\sqrt{\frac{\log(r)}{k}} -\frac{\Delta_{2,T}}{6}\ge y_2\right)\\
        &=\E\left[\E\left[\I\left\{\text{MAX}_r>\widehat{\mu}_1(k)+\sqrt{\frac{\log(r)}{k}}-\frac{\Delta_{2,T}}{6} \ge y_2\right\}\Big|\;\F_{\TT_{1,k}}\right]\right]\\
        &=\E\left[\I\left\{\widehat{\mu}_1(k)+\sqrt{\frac{\log(r)}{k}}+\frac{\Delta_{2,T}}{6} \ge \mu_1\right\}\prob\left(\text{MAX}_r>\widehat{\mu}_1(k)+\sqrt{\frac{\log(r)}{k}}-\frac{\Delta_{2,T}}{6}\; \Big|\;\F_{\TT_{1,k}}\right)\right]
    \end{align*}
    Note that $k>L(\horizon)$ implies $\sqrt{\frac{2\log\left(\horizon\Delta_{2,T}^2+300\right)}{k}}\le\frac{\Delta_{2,T}}{12}$ and hence, for $r\le \left(\horizon\Delta_{2,T}^2+300\right)^2$, we have
    \begin{align*}
        \sqrt{\frac{\log(r)}{k}}-\frac{\Delta_{2,T}}{6}\le-\frac{\Delta_{2,T}}{12}
    \end{align*}
    So,
    \begin{align*}
        \prob\left(\text{MAX}_r>\widehat{\mu}_1(k)+\sqrt{\frac{\log(r)}{k}}-\frac{\Delta_{2,T}}{6}\; \Big|\;\F_{\TT_{1,k}}\right)&\ge\prob\left(\text{MAX}_r>\widehat{\mu}_1(k)-\frac{\Delta_{2,T}}{12}\; \Big|\;\F_{\TT_{1,k}}\right)\\
        &=1-\left[1-\prob\left(\Theta_k>\widehat{\mu}_1(k)-\frac{\Delta_{2,T}}{12}\; \Big|\;\F_{\TT_{1,k}}\right)\right]^r
    \end{align*}
    Now, observe that
    \begin{align*}
        \prob\left(\Theta_k>\widehat{\mu}_1(k)-\frac{\Delta_{2,T}}{12}\; \Big|\;\F_{\TT_{1,k}}\right)&=\prob\left(\NN\left(0,\frac{\sigma_T^2}{k}\right)\ge-\frac{\Delta_{2,T}}{12}\right)\\
        &=1-\prob\left(\NN\left(0,\frac{\sigma_T^2}{k}\right)\ge\frac{\Delta_{2,T}}{12}\right)\\
        &\stackrel{(i)}{\ge} 1-\prob\left(\NN\left(0,\frac{1}{k}\right)\ge\frac{\Delta_{2,T}}{12}\right)\\
        &\stackrel{(ii)}{\ge} 1-\exp\left\{-\frac{k\Delta_{2,T}^2}{288}\right\}\\
        &\stackrel{(iii)}{\ge} 1-\exp\left\{-\frac{L(\horizon)\Delta_{2,T}^2}{288}\right\}\\
        &=1-\frac{1}{\left(\horizon\Delta_{2,T}^2+300\right)}
    \end{align*}
    The inequality $(i)$ holds since $\sigma_T<1$, the inequality $(ii)$ follows from concentration bound for Gaussian random variables~\cite[\textsc{Chapter 2}]{wainwright2019high}, and inequality $(iii)$ utilizes $k>L(\horizon)$. Therefore,
    \begin{align*}
        \prob\left(\text{MAX}_r>\widehat{\mu}_1(k)+ \sqrt{\frac{\log(r)}{k}}-\frac{\Delta_{2,T}}{6}\; \Big|\;\F_{\TT_{1,k}}\right)\ge1-\frac{1}{(\horizon\Delta_{2,T}^2+300)^r}
    \end{align*}
    Define $\horizon^\star:=(\horizon\Delta_{2,T}^2+300)^2$. For $r\le \horizon^\star$ and $k>L(\horizon)$, we have
    \begin{align*}
        \prob(G_k\le r)&\ge\E\left[\I\left\{\widehat{\mu}_1(k)+\sqrt{\frac{\log(r)}{k}}+\frac{\Delta_{2,T}}{6} \ge \mu_1\right\}\prob\left(\text{MAX}_r>\widehat{\mu}_1(k)+\sqrt{\frac{\log(r)}{k}}-\frac{\Delta_{2,T}}{6}\; \Big|\;\F_{\TT_{1,k}}\right)\right]\\
        &\ge \E\left[\I\left\{\widehat{\mu}_1(k)+\sqrt{\frac{\log(r)}{k}}+\frac{\Delta_{2,T}}{6} \ge \mu_1\right\}\left(1-\frac{1}{\left(\sqrt{\horizon^\star}\right)^r}\right)\right]\\
        &=\left(1-\frac{1}{\left(\sqrt{\horizon^\star}\right)^r}\right)\prob\left(\widehat{\mu}_1(k)+\sqrt{\frac{\log(r)}{k}}+\frac{\Delta_{2,T}}{6} \ge \mu_1\right)\\
        &\ge\left(1-\frac{1}{\left(\sqrt{\horizon^\star}\right)^r}\right)\prob\left(\widehat{\mu}_1(k)-\mu_1\ge-\frac{\Delta_{2,T}}{6}\right)\\
        &=\left(1-\frac{1}{\left(\sqrt{\horizon^\star}\right)^r}\right)\prob\left(\NN\left(0,\frac{\sigma_T^2}{k}\right)\ge-\frac{\Delta_{2,T}}{6}\right)\\
        &\ge\left(1-\frac{1}{\left(\sqrt{\horizon^\star}\right)^r}\right)\prob\left(\NN\left(0,\frac{1}{k}\right)\ge-\frac{\Delta_{2,T}}{6}\right)\\
        &\ge \left(1-\frac{1}{\left(\sqrt{\horizon^\star}\right)^r}\right)\left[1-\exp\left\{-\frac{k\Delta_{2,T}^2}{72}\right\}\right]\\
        &\ge \left(1-\frac{1}{\left(\sqrt{\horizon^\star}\right)^r}\right)\left[1-\exp\left\{-\frac{L(\horizon)\Delta_{2,T}^2}{72}\right\}\right]\\
        &=\left(1-\frac{1}{\left(\sqrt{\horizon^\star}\right)^r}\right)\left[1-\exp\left\{-4\log\left(\sqrt{\horizon^\star}\right)\right\}\right]\\
        &=\left(1-\frac{1}{\left(\sqrt{\horizon^\star}\right)^r}\right)\left(1-\frac{1}{\left(\horizon^\star\right)^2}\right)\\
        &\ge1-\frac{1}{\left(\sqrt{\horizon^\star}\right)^r}-\frac{1}{\left(\horizon^\star\right)^2}
    \end{align*}
    For $r\ge \horizon^\star> 7\times10^4$, from \eqref{eq:geom-quad-decay}, we have $\prob(G_k\le r)\ge 1-\frac{2}{r^2}$. So, for $k>L(\horizon)$
    \begin{align*}
        \E[G_k]&=\sum_{r=0}^\infty\prob(G_k\ge r)\\
        &\le 1+\sum_{r=1}^{\horizon^\star}\prob(G_k\ge r)+\sum_{r\ge \horizon^\star}\prob(G_k\ge r)\\
        &\le 1+\sum_{r=1}^{\horizon^\star}\left[\frac{1}{\left(\sqrt{\horizon^\star}\right)^r}+\frac{1}{(\horizon^\star)^2}\right]+\sum_{r> \horizon^\star}\frac{2}{r^2}\\
        &=1+\sum_{r=1}^{\horizon^\star}\left[\frac{1}{\left(\sqrt{\horizon^\star}\right)^r}+\frac{1}{(\horizon^\star)^2}\right]+2\sum_{r> \horizon^\star}\left[\frac{1}{r-1}-\frac{1}{r}\right]\\
        &\le1+\frac{2}{\sqrt{\horizon^\star}}+\frac{1}{\horizon^\star}+\frac{2}{\horizon^\star}\\
        &\le1+\frac{5}{\sqrt{\horizon^\star}}\\
        &\le1+\frac{5}{\horizon\Delta_{2,T}^2}
    \end{align*}
    This completes the proof of Lemma \ref{lem:regret-geom-bdd}.
\end{proof}
\noindent Therefore,
    \begin{align}
        \sum_{t=1}^\horizon \prob\left(A_t=2,E_\mu^{(t)},E_\theta^{(t)}\right)&\le\sum_{k=0}^{\horizon-1}\E\left[\frac{1}{q_{\TT_{1,k}+1}}-1\right]\nonumber\\
        &\le \left(7\times10^4+2\right)L(\horizon)+\frac{5}{\Delta_{2,T}^2}\nonumber\\
        &=\left(7\times 10^4+2\right)\cdot\frac{\log(\horizon\Delta_{2,T}^2+300)}{\Delta_{2,T}^2}+\frac{5}{\Delta_{2,T}^2}\label{eq:bdd-term-a}
    \end{align}
\subsubsection{Bound for \ref{termB}}
Let $l(\horizon):=\frac{18\log\left(\horizon\Delta_{2,T}^2\right)}{\Delta_{2,T}^2}$. Rewrite \ref{termB} as
\begin{align*}
    \sum_{t=1}^\horizon \prob\left(A_t=2,\; E_\mu^{(t)},\; \overline{E_\theta^{(t)}}\right)&=\sum_{t=1}^\horizon \prob\left(A_t=2,\; n_{2,t}\le l(\horizon),\; E_\mu^{(t)},\; \overline{E_\theta^{(t)}}\right)\\
    &\hspace{1cm}+\sum_{t=1}^\horizon \prob\left(A_t=2,\; n_{2,t}> l(\horizon),\; E_\mu^{(t)},\; \overline{E_\theta^{(t)}}\right)
\end{align*}
Clearly, the first term on the right hand side is bounded by $l(\horizon)$ itself. For the second term
\begin{align*}
    \sum_{t=1}^\horizon \prob\left(A_t=2,\; n_{2,t}> l(\horizon),\; E_\mu^{(t)},\; \overline{E_\theta^{(t)}}\right)&=\E\left[\sum_{t=1}^\horizon\prob\left(A_t=2,\; \overline{E_\theta^{(t)}}\; \Big|\: n_{2,t}> l(\horizon),\; E_\mu^{(t)},\; \F_{t-1}\right)\right]\\
    &\le\E\left[\sum_{t=1}^\horizon\prob\left(\theta_2^{(t)}>y\; \Big|\: n_{2,t}> l(\horizon),\; \widehat{\mu}_{2,t}\le x_2,\; \F_{t-1}\right)\right]
\end{align*}
Given $\F_{t-1}$, $\theta_2^{(t)}$ follows Gaussian with mean $\widehat{\mu}_{2,t}$. Given $\widehat{\mu}_{2,t}\le x_2$, the distribution of $\theta_2^{(t)}$ is \textit{stochastically dominated}~\footnote{We say $W_2$ is stochastically dominated by $W_1$ if for all $x\in\R$, $\prob(W_1>x)\ge\prob(W_2>x)$, for random variables $W_1$ and $W_2$.} by $\NN\left(x_2,\frac{1}{n_{2,t}}\right)$. So,
\begin{align*}
    \prob\left(\theta_2^{(t)}>y_2\; \Big|\: n_{2,t}> l(T),\; \widehat{\mu}_{2,t}\le x_2,\; \F_{t-1}\right)\le\prob\left(\NN\left(x_2,\frac{1}{n_{2,t}}\right)>y_2\; \Big|\: n_{2,t}> l(T),\; \widehat{\mu}_{2,t}\le x_2,\; \F_{t-1}\right)
\end{align*}
Note that
\begin{align*}
    \prob\left(\NN\left(x_2,\frac{1}{n_{2,t}}\right)>y_2\right)&=\prob\left(\NN\left(0,\frac{1}{n_{2,t}}\right)>\frac{\Delta_{2,T}}{3}\right)\\
    &\le\exp\left\{-\frac{n_{2,t}\Delta_{2,T}^2}{18}\right\}
\end{align*}
Since, $n_{2,t}>l(\horizon)=\frac{18\log\left(\horizon\Delta_{2,T}^2\right)}{\Delta_{2,T}^2}$, the above is bounded by $\frac{1}{\horizon\Delta_{2,T}^2}$. Hence,
\begin{align*}
    \E\left[\sum_{t=1}^\horizon\prob\left(\theta_2^{(t)}>y_2\; \Big|\: n_{2,t}> l(\horizon),\; \widehat{\mu}_{2,t}\le x_2,\; \F_{t-1}\right)\right]\le\frac{1}{\Delta_{2,T}^2}
\end{align*}
Therefore,
\begin{align}
    \sum_{t=1}^\horizon \prob\left(A_t=2,\; E_\mu^{(t)},\; \overline{E_\theta^{(t)}}\right)\le\frac{18\log(\horizon\Delta_{2,T}^2)}{\Delta_{2,T}^2}+\frac{1}{\Delta_{2,T}^2}\label{eq:bdd-term-b}
\end{align}
\subsubsection{Bound for \ref{termC}}
Recall that $\TT_{2,k}$ denote the time of playing arm $2$ for the $k^{\text{th}}$ time. Since $\TT_{2,\horizon}\ge \horizon$, we have
\begin{align}
    \sum_{t=1}^\horizon \prob\left(A_t=2,\overline{E_\mu^{(t)}}\right)&\le\sum_{k=1}^\horizon\prob\left(\overline{E_\mu^{(\TT_{2,k})}}\right)\nonumber\\
    &=\sum_{k=1}^\horizon \prob\left(\widehat{\mu}_{2,\TT_{2,k}}>x_2\right)\nonumber\\
    &\le\sum_{k=1}^\horizon\prob\left(\NN\left(\mu_2,\frac{\sigma_T^2}{k}\right)>x_2\right)\nonumber\\
    &=\sum_{k=1}^\horizon\prob\left(\NN\left(0,\frac{\sigma_T^2}{k}\right)>\frac{\Delta_{2,T}}{3}\right)\nonumber\\
    &\le\sum_{k=1}^\horizon\prob\left(\NN\left(0,\frac{1}{k}\right)>\frac{\Delta_{2,T}}{3}\right)\nonumber\\
    &\le \sum_{k=1}^\horizon \exp\left\{-\frac{k\Delta_{2,T}^2}{18}\right\}\nonumber\\
    &\le\frac{18}{\Delta_{2,T}^2}\label{eq:bdd-term-c}
\end{align}
Since arm $2$ was selected arbitrarily among the sub-optimal arms, therefore, the bounds in \eqref{eq:bdd-term-a}, \eqref{eq:bdd-term-b} and \eqref{eq:bdd-term-c} hold for any arm $j$ with $\Delta_{j,T}>0$. Summing up, we get the desired result. This completes the proof of Lemma~\ref{lem:regret-bd-shipra-goyal}.

\section{Auxiliary Lemmas}
\label{sec:aux-lemmas}
In this section, we prove the state and prove auxiliary lemmas and corollaries used in the main part of the paper and proofs in Appendices. 

\begin{lemma}
    \label{lem:geom-conv-0-a} For all $n\in\N$, let $\left\{p_j^{(n)}\right\}_{j=1}^m$ be in $(0,1)$, where $m\le n^2$ and $p_1^{(n)}\ge c$ for some $c>0$. Let $G_j^{(n)}$ be independent geometric random variables with success probability $p_j^{(n)}$. Then, for any $a>0$
    \begin{align*}
        \frac{\max_{1\le j\le m} \log\left(G_j^{(n)}p_j^{(n)}\right)}{n^a}\inprob0
    \end{align*}
\end{lemma}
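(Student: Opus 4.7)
I will establish the convergence by bounding the upper and lower tails of $M_n := \max_{1 \le j \le m} \log(G_j^{(n)} p_j^{(n)})$ separately. The lower tail is essentially trivial, while the upper tail requires a union bound combined with a uniform exponential tail estimate for $G_j^{(n)} p_j^{(n)}$ that is insensitive to the value of $p_j^{(n)}$.

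\textbf{Step 1 (Uniform geometric tail bound).} The first step is to show that for every $p \in (0,1)$, a $\text{Geom}(p)$ random variable $G$ satisfies
\begin{align*}
    \prob(G p > t) \le e \cdot e^{-t} \qquad \text{for all } t \ge 0.
\end{align*}
Indeed, $\prob(G \ge k) = (1-p)^{k-1} \le e^{-p(k-1)}$, so taking $k = \lceil t/p \rceil$ yields $\prob(Gp \ge t) \le e^{-p(\lceil t/p\rceil - 1)} \le e^{-(t-p)} \le e^{1-t}$. The crucial feature is that the bound does not depend on $p$.

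\textbf{Step 2 (Upper tail via union bound).} Fix $\varepsilon > 0$. Applying Step 1 and the union bound over $j = 1, \ldots, m$, together with $m \le n^2$, gives
\begin{align*}
    \prob\Bigl(M_n > \varepsilon n^a\Bigr) = \prob\Bigl(\max_{1 \le j \le m} G_j^{(n)} p_j^{(n)} > e^{\varepsilon n^a}\Bigr) \le m \cdot e \cdot \exp\bigl(-e^{\varepsilon n^a}\bigr) \le e \cdot n^2 \cdot \exp\bigl(-e^{\varepsilon n^a}\bigr),
\end{align*}
which tends to $0$ double-exponentially fast as $n \to \infty$.

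\textbf{Step 3 (Lower tail is trivial).} Since $G_1^{(n)} \ge 1$ deterministically and $p_1^{(n)} \ge c$ by hypothesis, we have
\begin{align*}
    M_n \ge \log\bigl(G_1^{(n)} p_1^{(n)}\bigr) \ge \log c,
\end{align*}
a fixed constant. Hence $M_n / n^a \ge (\log c)/n^a \to 0$, so for any $\varepsilon > 0$ the event $\{M_n < -\varepsilon n^a\}$ is empty for all sufficiently large $n$. Combining Steps~2 and~3 yields $\prob(|M_n|/n^a > \varepsilon) \to 0$, completing the proof.

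\textbf{Main obstacle.} The only non-routine element is ensuring that the tail bound of Step~1 is uniform in $p$, since the success probabilities $p_j^{(n)}$ can be arbitrarily small (as indeed they are in the intended application, where $p_j^{(n)} = \widetilde p_2(j)$ decays exponentially in $j$). Once the uniform bound $\prob(Gp > t) \le e^{1-t}$ is in hand, the double-exponential decay easily absorbs the polynomial factor $m \le n^2$ in the union bound, and the lower tail is handled by the anchor assumption $p_1^{(n)} \ge c$.
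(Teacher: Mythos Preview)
Your proof is correct and follows essentially the same approach as the paper: both split into upper and lower tails, handle the lower tail via the deterministic bound $M_n \ge \log(G_1^{(n)} p_1^{(n)}) \ge \log c$, and control the upper tail by a uniform-in-$p$ geometric tail estimate together with $m \le n^2$. The only cosmetic difference is that you use a clean union bound with $\prob(Gp > t) \le e^{1-t}$, whereas the paper writes out the complementary product $\prod_j \bigl[1-(1-p_j)^{\lfloor e^{n^{a/2}}/p_j\rfloor}\bigr]$ explicitly and bounds each factor; your route is slightly more direct but the two arguments are equivalent in substance.
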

\begin{proof}
    First observe that for all $n\in\N$,
    \begin{align}
        \label{geom-lemma-lb}
        \max_{1\le j\le m}\log\left(G_j^{(n)}p_j^{(n)}\right) \ge \log\left(G_1^{(n)}p_1^{(n)}\right)\ge \log(c)
    \end{align}
    Secondly, for $G\sim\text{Geometric}(p)$ where $p\in(0,1)$, $\prob(G\le x)=1-(1-p)^{\left\lfloor x\right\rfloor}$ for $x>0$. Using this result, we have
    \begin{align*}
        \prob\left(\max_{1\le j\le m}\log\left(G_j^{(n)}p_j^{(n)}\right)> n^{a/2}\right)&=1-\prob\left(\max_{1\le j\le m}\log\left(G_j^{(n)}p_j^{(n)}\right)\le n^{a/2}\right)\\
        &=1-\prob\left(\log\left(G_j^{(n)}p_j^{(n)}\right)\le n^{a/2}\; \forevery\; j\in[m]\right)\\
        &=1-\prod_{j\in[m]} \prob\left(\log\left(G_j^{(n)}p_j^{(n)}\right)\le n^{a/2}\right)\\
        &=1-\prod_{j\in[m]} \prob\left(G_j^{(n)}p_j^{(n)}\le\exp\left\{ n^{a/2}\right\}\right)\\
        &=1-\prod_{j\in[m]}\left[1-\left(1-p_j^{(n)}\right)^{\left\lfloor\frac{\exp\left\{ n^{a/2}\right\}}{p_j^{(n)}}\right\rfloor}\right]
    \end{align*}
    Since $\lfloor x\rfloor>x-1>\frac{x}{2}$ for all $x>2$, it follows
    \begin{align}
        \label{eq:lem-a-geom-prob-ub}
        \left(1-p_j^{(n)}\right)^{\left\lfloor\frac{\exp\left\{n^{a/2}\right\}}{p_j^{(n)}}\right\rfloor}\le \left(1-p_j^{(n)}\right)^{\left(\frac{\exp\left\{n^{a/2}\right\}}{2p_j^{(n)}}\right)}\le\exp\left\{-\frac{\exp\left\{ n^{a/2}\right\}}{2}\right\}
    \end{align}
    The last inequality holds because $(1-x)^{1/x}\le 1/e$ for all $x\in(0,1)$. Furthermore, for all large $n$, $\exp\left\{n^{a/2}\right\}>6\log(n)$ and hence,
    \begin{align*}
        \exp\left\{-\frac{\exp\left\{n^{a/2}\right\}}{2}\right\}\le\frac{1}{n^3}
    \end{align*}
    Thus, for all large $n$
    \begin{align*}
        1-\prod_{j\in[m]}\left[1-\left(1-p_j^{(n)}\right)^{\left\lfloor\frac{\exp\left\{n^{a/2}\right\}}{p_j^{(n)}}\right\rfloor}\right]&\le1-\left[1-\frac{1}{n^3}\right]^m\\
        &\le1-\left[1-\frac{1}{n^3}\right]^{n^2}\\
        &\le 1-\left(\frac{1}{e}\right)^\frac{1}{n}
    \end{align*}
    Therefore, for all large $n$
    \begin{align}
        \label{geom-lemma-ub}
        \prob\left(\max_{1\le j\le m}\log\left(G_j^{(n)}p_j^{(n)}\right)>n^{a/2}\right)&\le1-\left(\frac{1}{e}\right)^\frac{1}{n}
    \end{align}
    From \eqref{geom-lemma-lb} and \eqref{geom-lemma-ub}, it follows that for all $\varepsilon>0$ and for all large $n$
    \begin{align*}
        &\prob\left(\left|\frac{\max_{1\le j\le m}\log\left(G_j^{(n)}p_j^{(n)}\right)}{n^a}\right|> \varepsilon\right)\\
        =\;& \prob\left(\max_{1\le j\le m}\log\left(G_j^{(n)}p_j^{(n)}\right)> \varepsilon n^a\right)+\prob\left(\max_{1\le j\le m}\log\left(G_j^{(n)}p_j^{(n)}\right)<-\varepsilon n^a\right)\\
        \le\;& \prob\left(\max_{1\le j\le m}\log\left(G_j^{(n)}p_j^{(n)}\right)>  n^{a/2}\right)+\prob\left(\max_{1\le j\le m}\log\left(G_j^{(n)}p_j^{(n)}\right)<\log(c)\right)\\
        \le\;& 1-\left(\frac{1}{e}\right)^\frac{1}{n}
    \end{align*}
    which converges to $0$ as $n\to\infty$. Hence, the lemma follows.
\end{proof}
\begin{lemma}
    \label{lem:geom-conv-0-b} For all $n\in\N$, let $G_n$ be geometric with success probability $p_n\in(0,1)$. Then, for any $a>0$
    \begin{align*}
        \frac{\log\left(G_n p_n\right)}{n^a}\inprob0
    \end{align*}
\end{lemma}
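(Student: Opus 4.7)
The plan is to prove the convergence $\log(G_n p_n)/n^a \inprob 0$ by directly controlling the upper and lower tails separately for an arbitrary $\varepsilon > 0$. Since this is the single-variable analogue of Lemma~\ref{lem:geom-conv-0-a}, I expect the argument to be a strict simplification of that proof --- in particular, no union bound is required, so the case analysis reduces to two single-variable tail estimates. Concretely, it suffices to verify that
\begin{align*}
    \prob\left(\log(G_n p_n) > \varepsilon n^a\right) \to 0 \qquad \text{and} \qquad \prob\left(\log(G_n p_n) < -\varepsilon n^a\right) \to 0
\end{align*}
as $n \to \infty$.

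For the upper tail $\prob(G_n > e^{\varepsilon n^a}/p_n)$, the plan is to invoke the geometric tail identity $\prob(G_n \ge k) = (1-p_n)^{k-1}$ combined with the elementary inequality $(1-p_n)^{1/p_n} \le 1/e$, exactly as in the chain leading to~\eqref{eq:lem-a-geom-prob-ub}. This yields a super-exponentially small bound of order $\exp\bigl(-e^{\varepsilon n^a}/2\bigr)$, which is valid uniformly in $p_n \in (0,1)$ and tends to zero rapidly.

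For the lower tail $\prob(G_n p_n < e^{-\varepsilon n^a})$, I would split cases on whether $p_n \ge e^{-\varepsilon n^a}$ or $p_n < e^{-\varepsilon n^a}$. In the first regime the event is vacuously empty, because $G_n \ge 1$ almost surely forces $G_n p_n \ge p_n \ge e^{-\varepsilon n^a}$. In the complementary regime the event equals $\{G_n \le \lfloor e^{-\varepsilon n^a}/p_n\rfloor\}$, whose probability is $1 - (1-p_n)^{\lfloor e^{-\varepsilon n^a}/p_n\rfloor}$; applying the Bernoulli-type inequality $1 - (1-p)^k \le kp$ bounds this by $p_n \cdot e^{-\varepsilon n^a}/p_n = e^{-\varepsilon n^a}$, which also vanishes.

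I do not anticipate any substantive obstacles. The difficulty in Lemma~\ref{lem:geom-conv-0-a} arose from uniformly controlling the maximum of $m \le n^2$ independent geometric random variables, which forced a union bound and a careful quantitative comparison of $\exp\{n^{a/2}\}$ versus $\log n$; here, working with a single variable lets us dispense with that step and retain only the pointwise tail estimates above. The only mild technical care needed is the harmless case split in the lower-tail argument, driven by the possibility that $p_n$ decays faster than $e^{-\varepsilon n^a}$.
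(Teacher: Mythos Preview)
Your proof is correct. The upper-tail argument is essentially identical to the paper's: both reduce to $(1-p_n)^{\lfloor e^{\varepsilon n^a}/p_n\rfloor} \le \exp\{-e^{\varepsilon n^a}/2\}$ via $(1-p)^{1/p}\le e^{-1}$, exactly as in~\eqref{eq:lem-a-geom-prob-ub}.

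The lower-tail arguments differ. The paper first establishes the auxiliary estimate $\prob(G_n p_n > 1/n) \ge 4^{-1/n}$ by a case split on $p_n\gtrless 1/2$ together with the two-sided bound $(1-x)^{1/x}\ge 1/4$ for $x\in(0,1/2)$; it then bounds $\prob(G_n p_n < e^{-\varepsilon n^a}) \le \prob(G_n p_n \le 1/n) \le 1 - 4^{-1/n}\to 0$. Your route is more direct: splitting on $p_n\gtrless e^{-\varepsilon n^a}$ renders the first case vacuous, and in the second case the Bernoulli inequality $1-(1-p)^k\le kp$ immediately gives the bound $e^{-\varepsilon n^a}$. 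Your argument is shorter and avoids the lower bound on $(1-x)^{1/x}$; the paper's version yields the marginally stronger statement that $G_n p_n$ rarely falls below $1/n$, but that extra precision is not used elsewhere. One cosmetic point: the event $\{G_n p_n < e^{-\varepsilon n^a}\}$ is contained in, but need not equal, $\{G_n \le \lfloor e^{-\varepsilon n^a}/p_n\rfloor\}$ (they differ when the ratio is an integer); since you only need the inclusion for the upper bound, this does not affect the argument.
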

\begin{proof}
    Let us consider $\prob\left(G_n p_n>\frac{1}{n}\right)=\left(1-p_n\right)^{\left\lfloor\frac{1}{n p_n}\right\rfloor}$. If $p_n\in[1/2,1)$, then for $n>2$, $\frac{1}{n p_n}<1$ and hence this probability equals $1$. For $p_n<1/2$,
    \begin{align*}
        \left(1-p_n\right)^{\left\lfloor\frac{1}{n p_n}\right\rfloor}\ge\left(1-p_n\right)^{\frac{1}{n p_n}}=\left[\left(1-p_n\right)^{\frac{1}{p_n}}\right]^{1/n}\ge\frac{1}{4^{1/n}}
    \end{align*}
    where the last inequality holds because $(1-x)^{1/x}\ge1/4$ for $x\in(0,1/2)$. So, for all $n>2$, we conclude
    \begin{align}
        \label{eq:lem-b-geom-prob-lb}
        \prob\left(G_n p_n>\frac{1}{n}\right)\ge\frac{1}{4^{1/n}}
    \end{align}
    Now, for all $\varepsilon>0$, for all $a>0$,
    \begin{align*}
        \prob\left(\left|\frac{\log\left(G_n p_n\right)}{n^a}\right|>\varepsilon\right)&=\prob\left(\frac{\log\left(G_n p_n\right)}{n^a}>\varepsilon\right)+\prob\left(\frac{\log\left(G_n p_n\right)}{n^a}<-\varepsilon\right)\\
        &=\prob\left(G_n p_n>\exp\left\{\varepsilon n^a\right\}\right)+\prob\left(G_n p_n<\exp\left\{-\varepsilon n^a\right\}\right)
    \end{align*}
    For all large $n$, using \eqref{eq:lem-a-geom-prob-ub}, it follows
    \begin{align}
        \prob\left(G_n p_n>\exp\left\{\varepsilon n^a\right\}\right)&\le\prob\left(G_n p_n>\exp\left\{ n^{a/2}\right\}\right)\nonumber\\
        &\le \exp\left\{-\frac{\exp\left\{ n^{a/2}\right\}}{2}\right\}\label{lem-b-term-1-ub}
    \end{align}
    For all large $n$, using \eqref{eq:lem-b-geom-prob-lb}, it follows
    \begin{align}
        \prob\left(G_n p_n<\exp\left\{-\varepsilon n^a\right\}\right)&\le\prob\left(G_n p_n\le\frac{1}{n}\right)\le1-\frac{1}{4^{1/n}}
        \label{lem-b-term-2-ub}
    \end{align}
    Clearly the upper bounds in \eqref{lem-b-term-1-ub} and \eqref{lem-b-term-2-ub} converges to $0$ as $n\to\infty$. Therefore,
    \begin{align*}
        \limsup_{n\to\infty}\prob\left(\left|\frac{\log\left(G_n p_n\right)}{n^a}\right|>\varepsilon\right)\le0
    \end{align*}
    and hence the result holds.
\end{proof}
\begin{lemma}
    \label{lem:bd-implies-conv}
    Let $\{X_n\}_{n\ge1},\{Y_n\}_{n\ge1},\{Z_n\}_{n\ge1}$ be positive random variables such that they sum up to 1 and $\E[Z_n]\rightarrow 0$ as $n \rightarrow \infty$. Suppose on some event $\{A_n\}_{n\ge1}$ with $\prob(A_n)\to1$ as $n\to\infty$, we have  
    \begin{align*}
        X_n,Y_n\le \frac{1}{2}+a_n
    \end{align*}
    where $a_n$ is a non-negative sequence such that $a_n\downarrow0$ as $n \rightarrow \infty$. Then, 
    \begin{align*}
        X_n\inprob\frac{1}{2}\qquad Y_n\inprob\frac{1}{2}\qquad Z_n\inprob0
    \end{align*}
\end{lemma}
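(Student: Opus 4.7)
The plan is to first establish $Z_n \inprob 0$, then leverage the sum constraint $X_n + Y_n + Z_n = 1$ together with the one-sided upper bounds on $A_n$ to pinch both $X_n$ and $Y_n$ to $1/2$.

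\emph{Step 1: convergence of $Z_n$.} Since $Z_n \ge 0$ and $\E[Z_n] \to 0$, Markov's inequality gives, for every $\varepsilon > 0$,
\begin{align*}
    \prob(Z_n > \varepsilon) \le \frac{\E[Z_n]}{\varepsilon} \longrightarrow 0,
\end{align*}
so $Z_n \inprob 0$.

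\emph{Step 2: lower bounds on $X_n$ and $Y_n$ on $A_n$.} Using the identity $X_n + Y_n = 1 - Z_n$ together with the hypothesis $Y_n \le \tfrac{1}{2} + a_n$ on the event $A_n$, we obtain
\begin{align*}
    X_n \;=\; 1 - Z_n - Y_n \;\ge\; \tfrac{1}{2} - a_n - Z_n \qquad \text{on } A_n,
\end{align*}
and symmetrically $Y_n \ge \tfrac{1}{2} - a_n - Z_n$ on $A_n$. Combining with the assumed upper bounds, on $A_n$ we have
\begin{align*}
    \left| X_n - \tfrac{1}{2}\right| \le a_n + Z_n \qquad \text{and} \qquad \left| Y_n - \tfrac{1}{2}\right| \le a_n + Z_n.
\end{align*}

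\emph{Step 3: conclusion.} Fix $\varepsilon > 0$ and choose $N$ large enough that $a_n < \varepsilon/2$ for all $n \ge N$. Then for $n \ge N$,
\begin{align*}
    \prob\!\left(\left| X_n - \tfrac{1}{2}\right| > \varepsilon\right)
    &\le \prob(A_n^{\text{c}}) + \prob\!\left(Z_n > \tfrac{\varepsilon}{2}\right),
\end{align*}
and both terms tend to zero by hypothesis and by Step 1, respectively. Hence $X_n \inprob \tfrac{1}{2}$, and the same argument yields $Y_n \inprob \tfrac{1}{2}$.

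There is no real obstacle here: the argument is a clean squeeze driven by the sum constraint. The only mildly delicate point is recognizing that one does not need matching lower bounds on $X_n$ and $Y_n$ as hypotheses — they emerge automatically from the one-sided upper bounds combined with $X_n + Y_n = 1 - Z_n$ and the fact that $Z_n$ is small in probability.
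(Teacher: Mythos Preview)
Your proof is correct and follows essentially the same route as the paper: both derive $Z_n \inprob 0$ from $\E[Z_n]\to 0$, use the sum constraint with the upper bound on $Y_n$ to get the matching lower bound $X_n \ge \tfrac12 - a_n - Z_n$ on $A_n$, and finish by splitting on $A_n$ versus $A_n^{\text{c}}$. The only cosmetic difference is that the paper invokes Slutsky for $a_n + Z_n \inprob 0$ while you explicitly absorb $a_n$ into $\varepsilon/2$ for large $n$.
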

\begin{proof}
Since $Z_n\ge 0$, then on the event $A_n$, it holds that $X_n-\frac{1}{2}\le a_n \le a_n+Z_n$. Now, for the lower bound, observe that $X_n-\frac{1}{2}=\frac{1}{2}-Y_n-Z_n\ge-a_n-Z_n$. Thus, on the event $A_n$,
\begin{align*}
    \left|X_n-\frac{1}{2}\right|\le a_n+Z_n
\end{align*}
Therefore, for any $\varepsilon>0$, 
\begin{align*}
    \prob\left(\left|X_n-\frac{1}{2}\right|\ge\varepsilon\right)&=\prob\left(\left|X_n-\frac{1}{2}\right|\ge\varepsilon;\; A_n\right)+\prob\left(\left|X_n-\frac{1}{2}\right|\ge\varepsilon;\; A_n^{\text{c}}\right)\\
    &\le\prob\left(a_n+Z_n\ge\varepsilon;\; A_n\right)+\prob\left(A_n^{\text{c}}\right)\\
    &\le\prob\left(a_n+Z_n\ge\varepsilon\right) +\prob\left(A_n^{\text{c}}\right)
\end{align*}
Next, $\E[Z_n]\rightarrow 0$ implies $Z_n\inprob0$ and hence, by Slutsky's Theorem, we conclude $a_n+Z_n\inprob0$. Thus, $X_n\inprob\frac{1}{2}$ and by similar arguments $Y_n\inprob\frac{1}{2}$.
\end{proof}
\begin{lemma}
    \label{additional-lemmas}
    We state and prove some algebraic results used in the proof of Theorem~\ref{main-thm}.\\
    \begin{subequations}
    Let $a_1,a_2,\ldots$ be a real-valued sequence. Then, for any $n$,
        \begin{align}
            \label{lem:log-exp-bdd}
            \max_{i\in[n]}a_i\le \log\left(\sum_{i\in[n]}e^{a_i}\right)\le \max_{i\in[n]}a_i+\log(n)
        \end{align}
    For any $z>0$
        \begin{align}
            \label{lem:Phi-&-phi-bdd}
            \frac{z\phi(z)}{1+z^2}\le1-\Phi(z)\le\frac{\phi(z)}{z}
        \end{align}
    \end{subequations}
\end{lemma}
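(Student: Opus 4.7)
Both inequalities in this lemma are classical, so the plan is to record clean, short arguments rather than develop new machinery. The two parts are independent and can be handled in sequence.

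For inequality~\eqref{lem:log-exp-bdd}, I would use the straightforward sandwich $e^{\max_{i\in[n]} a_i} \le \sum_{i\in[n]} e^{a_i} \le n\, e^{\max_{i\in[n]} a_i}$. The lower bound comes from retaining only the maximal summand (all other exponentials being positive), while the upper bound replaces every summand by the maximal one. Taking logarithms of both sides and using monotonicity of $\log$ yields the stated inequalities. No subtlety is involved.

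For the upper tail bound in~\eqref{lem:Phi-&-phi-bdd}, I would write $1-\Phi(z) = \int_z^\infty \phi(x)\,dx$ and exploit $x/z \ge 1$ on the range of integration to bound $\phi(x) \le (x/z)\phi(x)$. Since $\phi'(x) = -x\phi(x)$, the antiderivative of $x\phi(x)$ is $-\phi(x)$, and evaluating the resulting integral gives $\int_z^\infty (x/z)\phi(x)\,dx = \phi(z)/z$ as required.

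For the lower bound $1-\Phi(z) \ge z\phi(z)/(1+z^2)$, a single integration-by-parts trick does not suffice (it only produces the weaker $1-\Phi(z) \ge \phi(z)(z^{-1}-z^{-3})$, which fails to dominate $z\phi(z)/(1+z^2)$). The cleanest route is monotonicity: define
\[
h(z) := \bigl(1-\Phi(z)\bigr) - \frac{z\phi(z)}{1+z^2}, \qquad z>0,
\]
observe $h(z) \to 0$ as $z\to\infty$, and compute $h'(z)$ using $\phi'(z) = -z\phi(z)$. After expanding the quotient rule on the second term, the numerator of $h'(z)$ simplifies to $-2\phi(z)/(1+z^2)^2 < 0$, so $h$ is strictly decreasing on $(0,\infty)$. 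Combined with the vanishing limit at infinity, this forces $h(z) > 0$ for every $z > 0$, establishing the lower bound. The only place requiring care is the algebraic simplification of $h'(z)$; all remaining steps are elementary.
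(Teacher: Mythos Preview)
Your proof is correct. For inequality~\eqref{lem:log-exp-bdd} and for the upper bound in~\eqref{lem:Phi-&-phi-bdd} your argument coincides exactly with the paper's: the same sandwich $e^{\max_i a_i}\le\sum_i e^{a_i}\le n\,e^{\max_i a_i}$, and the same insertion of the factor $u/z\ge1$ under the integral.

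For the lower bound in~\eqref{lem:Phi-&-phi-bdd} you take a different route. The paper stays purely in the integral: it observes that $\frac{z^2}{1+z^2}\cdot\frac{u^2+1}{u^2}\le1$ for all $u\ge z>0$, inserts this factor under $\int_z^\infty\phi(u)\,du$, and then uses the exact antiderivative $\frac{d}{du}\bigl[-\phi(u)/u\bigr]=(1+u^{-2})\phi(u)$ to evaluate the resulting integral as $\frac{z^2}{1+z^2}\cdot\frac{\phi(z)}{z}=\frac{z\phi(z)}{1+z^2}$. Your approach instead defines $h(z)=(1-\Phi(z))-z\phi(z)/(1+z^2)$, computes $h'(z)=-2\phi(z)/(1+z^2)^2<0$, and combines this with $h(\infty)=0$. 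Both arguments are short and classical; the paper's version is a one-line integral trick that mirrors the upper-bound proof, while yours is a mechanical monotonicity check that requires a bit more algebra in differentiating the quotient but avoids having to spot the right integrand. Either is perfectly adequate here.
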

\noindent\textbf{Proof of}~\eqref{lem:log-exp-bdd}: This follows directly from the observation that
\begin{align*}
    \exp\left\{\max_{i\in[n]}a_i\right\}\le \sum_{i\in[n]}e^{a_i}\le n\exp\left\{\max_{i\in[n]}a_i\right\}
\end{align*}

\noindent\textbf{Proof of}~\eqref{lem:Phi-&-phi-bdd}: For the upper bound, note that $1-\Phi(z)=\int_z^\infty \phi(u)\text{d}u\le\int_z^\infty \frac{u}{z}\cdot\phi(u)\text{d}u=\frac{\phi(z)}{z}$. For the other side, $1-\Phi(z)=\int_z^\infty\phi(u)\text{d}u\ge\int_z^\infty \frac{z^2}{1+z^2}\cdot\frac{u^2+1}{u^2} \cdot\phi(u)\text{d}u=\frac{z\phi(z)}{z^2+1}$.
\end{document}